\definecolor{linkcolor}{RGB}{83,83,182}
\theoremstyle{plain}
\newtheorem{theorem}{Theorem}[section]
\newtheorem{proposition}[theorem]{Proposition}
\newtheorem{lemma}[theorem]{Lemma}
\theoremstyle{definition}
\newtheorem{assumption}[theorem]{Assumption}
\theoremstyle{remark}
\newsavebox{\largestimage}
\declaretheoremstyle[
    shaded={bgcolor=\color{HTML}{E9F6FF}}
]{shadedtheorem}
\definecolor{bggreen}{HTML}{EFFEF0}
\definecolor{bgblue}{HTML}{D5EAFA}
\renewcommand\fbox{\fcolorbox{black}{bggreen}}
\newcommand{\new}[1]{#1}
\newcommand*\widefbox[1]{\fbox{\hspace{1em}#1\hspace{1em}}}
\newcommand{\bbE}{\mathbb{E}}
\newcommand{\bbR}{\mathbb{R}}
\newcommand{\calL}{\mathcal{L}}
\newcommand{\calN}{\mathcal{N}}
\newcommand{\calO}{\mathcal{O}}
\newcommand{\calU}{\mathcal{U}}
\newcommand{\train}{{\mathrm{train}}}
\newcommand{\test}{{\mathrm{test}}}
\newcommand{\val}{{\mathrm{val}}}
\DeclareMathOperator{\diag}{\mathbf{diag}}
\newcommand{\diff}{\mathrm{d}}
\newcommand{\setcomb}[1]{[#1]}
\newcommand{\norme}[1]{\left\| #1 \right\|}
\title{A framework for bilevel optimization that enables  stochastic and global variance reduction algorithms}
\author{%
  Mathieu Dagréou \\
  Inria, CEA\\
  Université Paris-Saclay\\
  Palaiseau, France\\
  \texttt{mathieu.dagreou@inria.fr} \\
   \And
   Pierre Ablin \\
   CNRS\\
   Université Paris-Dauphine, PSL-University\\
   Paris, France\\
   \texttt{pierre.ablin@cnrs.fr} \\
   \AND
   Samuel Vaiter \\
   CNRS\\
   Université Côte d’Azur, LJAD\\
   Nice, France\\
   \texttt{samuel.vaiter@cnrs.fr} \\
   \And
   Thomas Moreau \\
    Inria, CEA\\
    Université Paris-Saclay\\
  Palaiseau, France\\
  \texttt{thomas.moreau@inria.fr} \\
}
\begin{document}

\maketitle

\begin{abstract}
 Bilevel optimization, the problem of minimizing a \emph{value function} which involves the arg-minimum of another function, appears in many areas of machine learning. In a large scale \new{empirical risk minimization} setting where the number of samples is huge, it is crucial to develop stochastic methods, which only use a few samples at a time to progress. However, computing the gradient of the value function involves solving a linear system, which makes it difficult to derive unbiased stochastic estimates.
To overcome this problem we introduce a novel framework, in which the solution of the inner problem, the solution of the linear system, and the main variable evolve at the same time. These directions are written as a sum, making it straightforward to derive unbiased estimates.
The simplicity of our approach allows us to develop global variance reduction algorithms, where the dynamic of all variables is subject to variance reduction.
We demonstrate that SABA, an adaptation of the celebrated SAGA algorithm in our framework, has $O(\frac1T)$ convergence rate, and that it achieves linear convergence under Polyak-\L ojasciewicz assumption.
This is the first stochastic algorithm for bilevel optimization that verifies either of these properties.
Numerical experiments validate the usefulness of our method.
\end{abstract}

\section{Introduction}
\label{sec:intro}
Bilevel optimization is attracting more and more attention in the machine learning community thanks to its wide range of applications. Typical examples are hyperparameters selection \cite{Bengio2000, Pedregosa2016, Franceschi2018,Bertrand2020}, data augmentation \cite{Cubuk2019, Rommel2022}, implicit deep learning \cite{Bai2019} or neural architecture search \cite{Liu2019}.
Bilevel optimization aims at minimizing a function whose value depends on the result of another optimization problem:
\begin{equation}\label{eq:pb}
  \min_{x\in\bbR^d} h(x) = F(z^*(x), x),\quad
  \text{such that } z^*(x) \in \arg\min_{z\in\bbR^p} G(z, x)\enspace,
\end{equation}
where $F$ and $G$ are two real valued functions defined on $\bbR^p\times\bbR^d$.
$G$ is called the \textit{inner function}, $F$ is the \textit{outer function} and $h$ is the \textit{value function}. Similarly, $z$ is the \textit{inner variable} and $x$ is the \textit{outer variable}. In most cases, the function $z^*$ can only be approximated by an optimization algorithm, which makes bilevel optimization problems challenging.
\begin{wrapfigure}{R}{.4\textwidth}
    \vspace{-1em}
    \centering
    \includegraphics[width=.4\columnwidth]{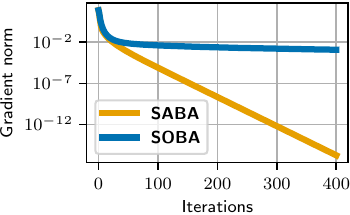}
    \vspace{-1.7em}
    \caption{Convergence curves of the two proposed methods on a toy problem. SABA is a stochastic method that achieves fast convergence on the value function.}
    \label{fig:intro_fig}
    \vspace{-1.2em}
\end{wrapfigure}
Under appropriate hypotheses, the function $h$ is differentiable, and the chain rule and implicit function theorem give for any $x\in\bbR^d$
\begin{equation}\label{eq:hgrad}
    \nabla h(x) = \nabla_2 F(z^*(x), x) + \nabla^2_{21}G(z^*(x), x) v^*(x) \enspace,
\end{equation}
where $v^*(x)\in \bbR^p$ is the solution of a linear system
\begin{equation}\label{eq:v_star_def}
    v^*(x) = - \left[\nabla_{11}^2G(z^*(x), x)\right]^{-1}\nabla_1 F(z^*(x), x) \enspace.
\end{equation}
In the light of \eqref{eq:hgrad} and \eqref{eq:v_star_def}, it turns out that the derivation of the gradient of $h$ at each iteration is cumbersome because it involves two subproblems: the resolution of the inner problem to find an approximation of $z^*(x)$ and the resolution of a linear system to find an approximation of $v^*(x)$. It makes the practical implementation of first-order methods like gradient descent for~\eqref{eq:pb} challenging.

As is the case in many machine learning problems, we suppose in this paper that $F$ and $G$ are empirical means:
$$F(z, x) = \frac{1}{m}\sum_{j=1}^m F_j(z,x),\quad G(z, x) = \frac{1}{n}\sum_{i=1}^n G_i(z,x)\enspace.$$
This structure suggests the use of stochastic methods to solve~\eqref{eq:pb}.
For single-level problems (that is, classical optimization problems where one function should be minimized), using Stochastic Gradient Descent (SGD;  \cite{Robbins1951,Bottou2010}) and variants is natural because individual gradients are straightforward unbiased estimators of the gradient. In the bilevel framework, we want to develop algorithms that make progress on problem~\eqref{eq:pb} by using only a few functions $F_j$ and $G_i$ at a time. However, since $\nabla h$ involves the inverse of the Hessian of $G$, building such stochastic algorithms is quite challenging, one of the difficulties being that there is no straightforward unbiased estimator of $\nabla h$.
Still, in settings where $m$ or $n$ are large, where computing even a single evaluation of $F$ or $G$ is extremely expensive, stochastic methods are the only scalable algorithms.

Variance reduction~\cite{Johnson2013, Defazio2014, schmidt2017minimizing, Fang2018, Cutkosky2019} is a popular technique to obtain fast stochastic algorithms. In a single-level setting, these methods build an approximation of the gradient of the objective function using only stochastic gradients. Contrary to SGD, the variance of the approximation goes to $0$ as the algorithm progresses, allowing for faster convergence. For instance, the SAGA method~\cite{Defazio2014} achieves linear convergence if the objective function satisfies a Polyak-\L ojasciewicz inequality, and $O(\frac1T)$ convergence rate on smooth non-convex functions~\cite{Reddi2016}.
The extension of these methods to bilevel optimization is a natural idea to develop faster algorithms. However, this idea is hard to implement because it is hard to derive unbiased estimators of $\nabla h$, let alone variance reduction ones.

\textbf{Contributions.}\quad We introduce a \textbf{novel framework for bilevel optimization} in \cref{sec:methods}, where the inner variable, the solution of the linear system~\eqref{eq:v_star_def} and the outer variable evolve jointly.
The evolution directions are
written as sums of derivatives of $F_j$ and $G_i$,
which allows us to derive simple unbiased stochastic estimators. In this framework, we propose SOBA, an extension of SGD (\cref{sec:soba}), and SABA (\cref{sec:saba}), an extension of the variance reduction algorithm SAGA~\cite{Defazio2014}.
In \cref{sec:theory} we analyse the convergence of our methods. SOBA is shown to achieve \new{$\inf_{t\leq T}\bbE[\|\nabla h(x^t)\|^2] = O(\log(T)T^{-\frac12})$} with decreasing step sizes. We prove that SABA with fixed step sizes achieves $\frac1T\sum_{t=1}^T\new{\bbE[}\|\nabla h(x^t)\|^2\new{]} = O(\frac 1T)$. SABA is therefore, to the best of our knowledge, the first \textbf{stochastic bilevel algorithm that matches the convergence rate of gradient descent} on $h$. We also prove that SABA achieves \textbf{linear convergence} under the assumption that $h$ satisfies a Polyak-\L ojasciewicz inequality. To the best of our knowledge, SABA is also the first stochastic bilevel algorithm to feature such a property. \new{Importantly, \textbf{these rates match the rates of the single level counterparts of each algorithm in non-convex setting} (SGD for SOBA and SAGA for SABA).}
Finally, in \cref{sec:experiments}, we provide an \textbf{extensive benchmark} of many stochastic bilevel methods on hyperparameters selection and data hyper-cleaning, and illustrate the usefulness of our approach.

\textbf{Related work.}\quad The bilevel optimization problem has a strong history in the optimization community, taking root in game theory~\cite{stackelberg1952theory}.
Gradient-based algorithms to solve~\eqref{eq:pb} can be mainly classified in two different categories depending on how $\nabla h$ is computed, by \emph{automatic} or \emph{implicit differentiation}.

Since the solution of the inner problem $z^*(x)$ is approximated by the output of an iterative algorithm, it is possible to use automatic differentiation~\cite{Wengert1964,linnainmaa1976taylor} to approximate $\nabla h(x)$. It consists in differentiating the different steps of the inner optimization algorithm -- see~\cite{Baydin2018a} for a review -- and has been applied successfully to several bilevel problems arising in machine learning~\cite{Domke2012, Franceschi2017}. One of the main drawbacks of this approach is that it requires to store in memory each iterate of the inner optimization algorithm, although this problem can sometimes be overcome using invertible optimization algorithms~\cite{Maclaurin2015} or truncated backpropagation \cite{Shaban2019}.

The use of the implicit function theorem to obtain~\eqref{eq:hgrad} and~\eqref{eq:v_star_def} is known as implicit differentiation~\cite{Bengio2000}.
While the cost of computing exactly~\eqref{eq:hgrad} can be prohibitive for large-scale problems,
\citet{Pedregosa2016} showed that we can still converge to a stationary point of the problem by using approximate solutions of the inner problem and linear system~\eqref{eq:v_star_def}, if the approximation error goes to $0$ sufficiently quickly.
The complexity of approximate implicit differentiation has been studied in \cite{Grazzi2020}. \citet{Ramzi2022} propose to reuse the computations done in the forward pass to approximate the solution of the linear system \eqref{eq:v_star_def} when the inner problem is solved thanks to a quasi-Newton method.

In the last few years, several works have proposed different strategies to solve~\eqref{eq:pb} in a stochastic fashion.
A first set of methods relies on \emph{two nested loops}: one inner loop to solve the inner problem with a stochastic method, and one outer loop to update the outer variable with an approximate gradient direction.
In \cite{Ghadimi2018, Ji2021a, Chen2021b} the authors use several SGD iterations for the inner problem and then use stochastic Neumann approximations to get an estimate solution of the linear system, which provides them with an approximation of $\nabla h$ used to update $x$. \new{The analysis of this kind of method was refined by \citet{Chen2021b}, allowing to achieve the same convergence rates as those of SGD.} The convergence of the hypergradient when using stochastic solvers for the inner problem and the linear system has been studied in \cite{Grazzi2021}. \citet{Arbel2022} replace the Neumann approximation by SGD steps to estimate~\eqref{eq:v_star_def}.
Other authors have proposed \emph{single loop} algorithms, alternating steps in the inner and the outer problem.
\citet{Hong2021} propose to perform Neumann approximations of the \new{inverse} Hessian and use a single SGD step for the inner problem.
It was refined in~\cite{Guo2021a} and~\cite{Yang2021} where the optimization procedure uses a momentum acceleration.
Other variations around this idea include~\cite{Huang2021, Khanduri2021, Chen2022, Guo2021, Li2022}.
We refer to \cref{tab:comparison} in appendix for a detailed comparison of these methods.

\textbf{Notation.}\quad
The set of integers between 1 and $n$ (included) is denoted $\setcomb{n}$.
For $f: \bbR^p\times\bbR^d\to\bbR$, we denote $\nabla_i f(z, x)$ its gradient w.r.t. the $i$\textsuperscript{th} variable. The Hessian of $f$ with respect to the first variable is denoted $\nabla^2_{11} f(z,x)\in\bbR^{p\times p}$, and the cross-derivatives matrix is $\nabla^2_{21} f(z,x)\in\bbR^{d\times p}$. If $v$ is a vector, $\|v\|$ is its Euclidean norm. If $M$ is a matrix, $\|M\|$ is its spectral norm. A function is said to be $L$-smooth, for $L > 0$, if it is differentiable, and its gradient is $L$-Lipschitz.

\section{Proposed framework}\label{sec:methods}

\begin{wrapfigure}{r}{.45\textwidth}
    \vspace{-2.3em}
    \begin{minipage}[t]{.45\textwidth}
        \begin{algorithm}[H]
        \begin{algorithmic}
           \STATE {\bfseries Input:} initializations $z_0\in\bbR^p$, $x_0\in\bbR^d$, $v_0\in\bbR^p$, number of iterations $T$, step size sequences $(\rho^t)_{t < T}$ and $(\gamma^t)_{t < T}$.
           \FOR{$t = 0,\dots, T-1$}
           \STATE
               Update $z$: $z^{t+1} = z^t - \rho^t D_z^t\enspace,$

               Update $v$: $v^{t+1} = v^t - \rho^t D_v^t\enspace,$

             Update $x$: $x^{t+1} = x^t - \gamma^t D_x^t\enspace,$

             where $D_z^t, D_v^t$ and $D_x^t$ are unbiased estimators of $D_z(z^t, v^t, x^t), D_v(z^t, v^t, v^t)$ and $D_x(z^t, v^t, x^t)$.
           \ENDFOR
        \end{algorithmic}
        \caption{General framework}
        \label{alg:sgd}
        \end{algorithm}
    \end{minipage}
    \vspace{-2.6em}
\end{wrapfigure}
In this section, we introduce our framework in which the solution of the inner problem, the solution of the linear system~\eqref{eq:v_star_def} and the outer variable all evolve at the same time, following directions that are written as a sum of derivatives of $F_j$ and $G_i$.
We define
\begin{align}
\label{eq:direction_z}
   D_z(z, v, x) &= \nabla_1 G(z, x) \enspace, \\
\label{eq:direction_v}
   D_v(z, v, x) &= \nabla_{11}^2 G(z, x)v +  \nabla_1 F(z, x) \enspace,\\
\label{eq:direction_x}
   D_x(z, v, x) &= \nabla_{21}^2 G(z, x)v +  \nabla_2 F(z, x) \enspace.
\end{align}
These directions are motivated by the fact that we have
$
    \label{eq:grad_value_with_v}
    \nabla h(x) = D_x(z^*(x), v^*(x), x)
$,
 with $z^*(x)$ the minimizer of $G(\cdot, x)$ and $v^*(x)$ the solution of $\nabla^2_{11}G(z^*(x), x)v = -\nabla_1F(z^*(x), x)$. When $x$ is fixed, we approximate $z^*$ by doing a gradient descent on $G$, following the direction $-D_z(z, v, x)$. Finally, when $z$ and $x$ are fixed, we find $v^*$ by following the direction $-D_v(z, v, x)$, which corresponds to a gradient descent on $v\mapsto \frac12\langle\nabla_{11}^2G(z, x)v, v\rangle + \langle \nabla_1F(z, x), v\rangle$.
The rest of the paper is devoted to the study of the global dynamics where the three variables $z, v$ and $x$ evolve at the same time, following stochastic approximations of $D_z, D_v$ and $D_x$.
The next proposition motivates the choice of these directions.

\begin{proposition}\label{prop:zeros_directions}
Assume that for all $x\in\bbR^d$, $G(\cdot,x)$ is strongly convex.
If $(z, v, x)$ is a zero of $(D_z, D_v, D_x)$, then $z = z^*(x)$, $v = v^*(x)$ and $\nabla h(x)=~0$.
\end{proposition}

We also note that the computation of these directions does \emph{not} require to compute the Hessian matrices $\nabla_{11}^2 G(z, x)$ and $\nabla_{21}^2 G(z, x)$: we only need to compute their product with a vector, which can be computed at a cost similar to that of computing a gradient.

The framework we propose is summarized in Algorithm~\ref{alg:sgd}. It consists in following a joint update rule in $(z, v, x)$ that follows directions $D_z^t, D_v^t$ and $D_x^t$ that are unbiased estimators of $D_z, D_v, D_x$.
The first and most important remark is that whereas $\nabla h$ cannot be written as a sum over samples, the directions $D_z, D_v$ and $D_x$ involve only simple sums, since their expressions are ``linear'' in $F$ and $G$:
\begin{align}
   \mathsmaller{D_z}(z, v, x) &\mathsmaller{=\frac1n\sum_{i=1}^n \nabla_1 G_i(z, x) \enspace,} \\
   \mathsmaller{D_v}(z, v, x) &\mathsmaller{=\frac1n \sum_{i=1}^n\nabla_{11}^2 G_i(z, x)v + \frac1m \sum_{j=1}^m \nabla_1 F_j(z, x) \enspace,} \\
   \mathsmaller{D_x}(z, v, x) &\mathsmaller{= \frac1n\sum_{i=1}^n\nabla_{21}^2 G_i(z, x)v + \frac1m \sum_{j=1}^m \nabla_2 F_j(z, x) \enspace.}
\end{align}
It is therefore straightforward to derive unbiased estimators of these directions.
In \cite{Li2022}, the authors considered one particular case of our framework, where each direction is estimated by using the STORM variance reduction technique (see \cite{Cutkosky2019}). Taking a step back by proposing the framework summarized in \cref{alg:sgd} opens the way to potential new algorithms that implement other techniques that exist in stochastic single-level optimization. In what follows, we study two of them.

\subsection{First example: the SOBA algorithm}
\label{sec:soba}
The simplest unbiased estimator is obtained by replacing each mean by one of its terms chosen uniformly at random, akin to what is done in classical single-level SGD. We call the resulting algorithm SOBA (StOchastic Bilevel Algorithm).
To do so, we choose two independent random indices $i\in\setcomb{n}$ and $j\in\setcomb{m}$ uniformly and estimate each term coming from $G$ using $G_i$ and each term coming from $F$ using $F_j$. This gives the unbiased \textbf{SOBA directions}
\begin{subequations}
\begin{empheq}[box=\widefbox]{align}
\label{eq:soba_z}
D_z^t &= \nabla_1G_i(z^t, x^t) \enspace,\\
\label{eq:soba_v}
 D_v^t &= \nabla_{11}^2G_i(z^t, x^t)v^t + \nabla_1F_j(z^t, x^t) \enspace,\\
 \label{eq:soba_x}
 D_x^t &=  \nabla_{21}^2G_i(z^t, x^t)v^t + \nabla_2 F_j(z^t, x^t) \enspace.
\end{empheq}
\end{subequations}

This provides us with a first algorithm, SOBA, where we plug \cref{eq:soba_z,eq:soba_v,eq:soba_x} in \cref{alg:sgd}. We defer its analysis to the next section.
Importantly, we use different step sizes for the update in $(z, v)$ and for the update in $x$.
We use the same step size in $z$ and in $v$ since
the inner problem and
the linear system have
similar conditioning, which is that of $\nabla^2_{11}G(z^t, x^t)$.
The need for a different step size for the outer and inner problems is clear: both problems can have different conditioning.

An important remark for SOBA is that all the stochastic directions used are computed at the same point $z^t, v^t$ and $x^t$ with the same indices $(i, j)$.
The update of $z$, $v$ and $x$ can thus be performed in parallel instead of sequentially, benefiting from hardware parallelism.
Moreover, this enables to share the computations between the different directions.
This is the case in hyperparameters selection where $G_i(z, x) = \ell_i(\langle z, d_i\rangle) + \frac x2 \|z\|^2$, with $d_i$ a training sample, and $\ell_i$ that measures how good is the prediction $\langle z, d_i\rangle$. In this setting, we have
$\nabla_1 G_i(z, x) = \ell_i'(\langle z, d_i\rangle)d_i + xz
$ and $\nabla^2_{11}G_i(z, x)v = \ell''_i(\langle z, d_i\rangle) \langle v, d_i\rangle d_i$.
The prediction $\langle z, d_i\rangle$ can thus be computed only once to obtain both quantities.
For more complicated models, where automatic differentiation is used to compute the different derivatives and Jacobian-vector products, we can store the computational graph only once to compute at the same time $\nabla_1 G_i(z, x), \nabla^2_{11}G_i(z, x)v$ and $\nabla^2_{21}G_i(z, x)v$, requiring only one backward pass, thanks to the $\mathcal{R}$ technique~\cite{pearlmutter1994fast}.

Finally, like all single loop bilevel algorithms, our method updates at the same time the inner and outer variable, avoiding unnecessary optimization of the inner problem when $x$ is far from the optimum.

\subsection{Global variance reduction with the SABA algorithm}
\label{sec:saba}
In classical optimization, SGD fails to reach optimal rates because of the variance of the gradient estimator. Variance reduction algorithms aim at reducing this variance, in order to follow directions that are closer to the true gradient and to achieve superior practical and theoretical convergence.

In our framework, since the directions $D_z, D_v$ and $D_x$ are all written as sums of derivatives of $F_j$ and $G_i$, it is easy to adapt most classical variance reduction algorithms.
We focus on the celebrated SAGA algorithm~\cite{Defazio2014}. The extension we propose is called SABA (Stochastic Average Bilevel Algorithm).
The general idea is to replace each sum in the directions $D$ by a sum over a memory, updating only one term at each iteration.
To help the exposition, we denote $y = (z, x, v)$ the vector of joint variables.
Since we have sums over $i$ and over $j$, we have two memories for each variable: $w^t_i$ for $i\in\setcomb{n}$ and $\tilde{w}^t_j$ for $j\in\setcomb{m}$, which keep track of the previous values of the variable $y$.

At each iteration $t$, we draw two random independent indices $i\in\setcomb{n}$ and $j\in\setcomb{m}$ uniformly and update the memories. To do so, we put $w^{t+1}_i = y^t$ and $w^{t+1}_{i'} = w^t_{i'}$ for $i'\neq i$, and $\tilde{w}^{t+1}_j = y^t$ and $\tilde{w}^{t+1}_{j'} = \tilde{w}^t_{j'}$ for $j'\neq j$. Each sum in the directions $D$ is then approximated using SAGA-like rules:  given $n$ functions $\phi_{i'}$ for $i'\in \setcomb{n}$, we define
$
    S[\phi, w]^t_i = \phi_{i}(w^{t+1}_i) - \phi_{i}(w^t_i) +\frac1n\sum_{i'=1}^n\phi_{i'}(w^t_{i'})
$.
This is an unbiased estimator of the average of the $\phi$'s since
$
\bbE_i\Big[S[\phi, w]^t_i\Big] = \frac1n\sum_{i=1}^n \phi_i(y^t)
$.

With a slight abuse of notation, we call $\nabla_{11}^2G v$ the sequence of functions $(y\mapsto \nabla_{11}^2G_i(z, x)v)_{i\in\setcomb{n}}$ and $\nabla_{21}^2Gv$ the sequence of functions $(y\mapsto\nabla_{21}^2G_i(z, x)v)_{i\in\setcomb{n}}$. We define the \textbf{SABA directions} as
\begin{subequations}
\begin{empheq}[box=\widefbox]{align}
    \label{eq:saba_z}
    D_z^t &= S[\nabla_1G, w]^t_i \enspace,\\
    \label{eq:saba_v}
    D_v^t &= S[\nabla_{11}^2Gv, w]^t_i + S[\nabla_1 F, \tilde{w}]^t_j \enspace,\\
    \label{eq:saba_x}
    D_x^t&= S[\nabla_{21}^2Gv, w]^t_i + S[\nabla_2 F, \tilde{w}]^t_j \enspace.
\end{empheq}
\end{subequations}
These estimators are unbiased estimators of the directions $D_z, D_v$ and $D_x$. The SABA algorithm corresponds to \cref{alg:sgd} where we use \cref{eq:saba_z,eq:saba_v,eq:saba_x} as update directions.
When taking a step size $\gamma^t= 0$ in the outer problem, hereby stopping progress in $x$, we recover the iterations of the SAGA algorithm on the inner problem.
In practice, the sum in $S$ is computed by doing a rolling average \new{(see \cref{app:exp} for precision)}, and the quantities $\phi_i(w_{i}^t)$ are stored rather than recomputed: the cost of computing the SABA directions is the same as that of SGD. It requires an additional memory for the five quantities, of total size $n\times p + (n + m)\times (p + d) $ floats that can be reduced by using larger batch sizes. Indeed, if $b_\mathrm{in}$ and $b_\mathrm{out}$ are respectively the inner and the outer batch sizes, the memory load is reduced to $n_b \times p + (n_ b + m_b)\times (p\times d)$ with $n_b = \lceil\frac{n}{b_\mathrm{inn}}\rceil$ and $m_b  = \lceil\frac{m}{b_\mathrm{out}}\rceil$ which are smaller than the number of samples. This memory load can also be reduced in specific cases, for instance when $G$ and $F$ correspond to linear models, where the individual gradients and Hessian-vector products are proportional to the samples. In this case, we only store the proportionality ratio, reducing the memory load to $3n + 2m$ floats.
Like for SOBA, the computations of the new quantities $\phi_i(w_i^{t+1})$ are done in parallel, thus benefiting from hardware acceleration and shared computations.
Despite this memory load, using SAGA-like variance reduction instead of STORM as done in \cite{Li2022, Yang2021, Khanduri2021} has the advantage to bring the variance of the estimate directions to zero, enabling faster $O(\frac1T)$ convergence.

In the next section, we show that SABA is fast. It essentially has the same properties as SAGA: despite being stochastic, it converges with fixed step sizes, and reaches the same rate of convergence as gradient descent on $h$.

\section{Theoretical analysis}
\label{sec:theory}
In this section, we provide convergence rates of SOBA and SABA under some classical assumptions. Note that, unlike most of the stochastic bilevel optimization papers, we work in finite sample setting rather than the more general expectation setting. Actually, SABA does not make any sense for functions that don't have a finite-sum structure. However, we stress that SOBA could be studied in a more general setting to obtain the same bounds as here. Also, the finite sum setting is still interesting since doing empirical risk minimization is very common in practice in machine learning.
The proofs and the constants in big-$O$ are deferred in \cref{app:proofs}.

\subsection{Background and assumptions}\label{subsec:assumptions}
We start by stating some regularity assumptions on the functions $F$ and $G$.
\begin{assumption}\label{ass:1}
The function $F$ is \new{twice differentiable. The derivatives $\nabla F$ and $\nabla^2 F$ are Lipschitz continuous in $(z,x)$ with respective Lipschitz constants $L^F_1$ and $L^F_2$.}
\end{assumption}
Note that the above assumption is typically verified
in the machine learning context, \emph{e.g.,} when $F$ is the ordinary least squares (OLS) loss or the logistic loss.

\begin{assumption} \label{ass:2}
The function $G$ is \new{three times} continuously differentiable on $\bbR^p\times\bbR^d$. For any $x\in\bbR^d$, $G(\,\cdot\,,x)$ is $\mu_G$-strongly convex. \new{The derivatives $\nabla G$, $\nabla^2 G$ and $\nabla^3 G$ are Lipschitz continuous in $(z,x)$ with respective Lipschitz constants $L^G_1$, $L^G_2$ and $L^G_3$.}
\end{assumption}

Strong convexity and smoothness with respect to $z$ of $G$ are verified when $G$ is a regularized least-squares/logistic regression with a full rank design matrix, when the data is not separable for the logistic regression. Moreover, the strong convexity ensures the existence and uniqueness of the inner optimization problem for any $x\in\bbR^d$.

\begin{assumption}\label{ass:3}
There exists $C_F~>~0$ such that for any $x$ we have $\|\nabla_1 F(z^*(x), x)\|\leq~C_F$.
\end{assumption}
This assumption, combined with the strong convexity of $G(\,\cdot\,,x)$, shows boundedness of $v^*$.
This assumption holds, for instance, in the case of hyperparameters selection for a Ridge regression problem. \new{Note that in Assumptions~\ref{ass:1} and~\ref{ass:2}, we assume more regularity of $F$ and $G$ than in stochastic bilevel optimization literature (see for instance \cite{Ghadimi2018, Hong2021, Ji2021a, Arbel2022}). It is necessary to get the smoothness of $v^*$ which will allow to adapt the proof of \citet{Chen2021b} and get tight convergence rates.}
The following lemma gives us some smoothness properties of the considered directions that will be useful to derive convergence rates of our methods.
\begin{lemma}\label{lemma:smoothness}
Under the Assumptions \ref{ass:1} to \ref{ass:3}, there exist constants $L_z$, $L_v$ and $L_x$ such that $\|D_z(z, v, x)\|^2 \leq L_z^2 \|z-z^*(x)\|^2$, $\|D_v(z, v, x)\|^2 \leq L_v^2(\|z-z^*(x)\|^2 + \|v-v^*(x)\|^2)$ and $\|D_x(z, v, x)-\nabla h(x)\|^2\leq L_x^2(\|z-z^*(x)\|^2 + \|v-v^*(x)\|^2)$.
\end{lemma}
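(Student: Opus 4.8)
The plan is to establish each of the three bounds by expanding the difference $D_\bullet(z,v,x) - D_\bullet(z^*(x), v^*(x), x)$, using that the directions evaluated at the optimal triple vanish or equal $\nabla h$ (by \cref{prop:zeros_directions} and the identity $\nabla h(x) = D_x(z^*(x), v^*(x), x)$), and then controlling each term by Lipschitzness of the relevant derivatives of $F$ and $G$ together with the boundedness of $v^*$.

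\textbf{Step 1: bound on $D_z$.} Since $D_z(z,v,x) = \nabla_1 G(z,x)$ and $D_z(z^*(x), v^*(x), x) = \nabla_1 G(z^*(x), x) = 0$ by definition of $z^*(x)$, we get $D_z(z,v,x) = \nabla_1 G(z,x) - \nabla_1 G(z^*(x), x)$. By \cref{ass:2}, $\nabla G$ is $L_1^G$-Lipschitz, so $\|D_z(z,v,x)\| \leq L_1^G \|z - z^*(x)\|$, giving $L_z = L_1^G$.

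\textbf{Step 2: boundedness of $v^*$ and smoothness of $z^*$ and $v^*$.} Before the other two bounds, I would record two facts. First, $\|v^*(x)\| = \|[\nabla_{11}^2 G(z^*(x),x)]^{-1} \nabla_1 F(z^*(x),x)\| \leq C_F/\mu_G$ by \cref{ass:3} and strong convexity; call this bound $B_v$. Second, $z^*$ is Lipschitz: differentiating $\nabla_1 G(z^*(x),x) = 0$ gives $\nabla z^*(x) = -[\nabla_{11}^2 G]^{-1}\nabla_{21}^2 G$, whose norm is at most $L_1^G/\mu_G$ (using $\|\nabla_{21}^2 G\| \leq L_1^G$ since $\nabla G$ is $L_1^G$-Lipschitz). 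Then $v^*$ is Lipschitz too: it is a composition/product of the Lipschitz maps $x\mapsto z^*(x)$, $x\mapsto \nabla_1 F(z^*(x),x)$, and $x\mapsto [\nabla_{11}^2 G(z^*(x),x)]^{-1}$ (the last is Lipschitz because $\nabla^2 G$ is Lipschitz by \cref{ass:2}, $z^*$ is Lipschitz, the eigenvalues stay bounded below by $\mu_G$, and matrix inversion is locally Lipschitz on such a set), all with bounded factors thanks to \cref{ass:3}; denote the resulting constant $L_{v^*}$.

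\textbf{Step 3: bounds on $D_v$ and $D_x$.} For $D_v$, write
\begin{align*}
D_v(z,v,x) &= \nabla_{11}^2 G(z,x) v + \nabla_1 F(z,x) \\
&= \underbrace{\nabla_{11}^2 G(z,x)(v - v^*(x))}_{(\mathrm{I})} + \underbrace{\big(\nabla_{11}^2 G(z,x) - \nabla_{11}^2 G(z^*(x),x)\big)v^*(x)}_{(\mathrm{II})} + \underbrace{\nabla_1 F(z,x) - \nabla_1 F(z^*(x),x)}_{(\mathrm{III})},
\end{align*}
using $D_v(z^*(x),v^*(x),x) = 0$. Term $(\mathrm{I})$ is bounded by $L_1^G \|v - v^*(x)\|$ (Hessian operator norm $\leq L_1^G$); term $(\mathrm{II})$ by $L_2^G B_v \|z - z^*(x)\|$ ($\nabla^2 G$ is $L_2^G$-Lipschitz); term $(\mathrm{III})$ by $L_1^F \|z - z^*(x)\|$. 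Summing, squaring, and using $(a+b+c)^2 \leq 3(a^2+b^2+c^2)$ yields $\|D_v\|^2 \leq L_v^2(\|z-z^*(x)\|^2 + \|v-v^*(x)\|^2)$ with, e.g., $L_v^2 = 3\max\{ (L_2^G B_v + L_1^F)^2, (L_1^G)^2\}$. For $D_x$, note $D_x(z,v,x) - \nabla h(x) = D_x(z,v,x) - D_x(z^*(x),v^*(x),x)$, and decompose $\nabla_{21}^2 G(z,x)v + \nabla_2 F(z,x) - \nabla_{21}^2 G(z^*(x),x)v^*(x) - \nabla_2 F(z^*(x),x)$ in exactly the same three-term fashion (with $\nabla_{21}^2 G$ in place of $\nabla_{11}^2 G$ and $\nabla_2 F$ in place of $\nabla_1 F$); the bounds are identical in structure since $\|\nabla_{21}^2 G\| \leq L_1^G$ and $\nabla_{21}^2 G$ is $L_2^G$-Lipschitz, giving $L_x$ of the same form as $L_v$.

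\textbf{Main obstacle.} The genuinely delicate point is Step 2, namely verifying that $v^*$ is globally Lipschitz (which is implicitly what ``$\|D_x - \nabla h\|$ controlled by distances to $z^*, v^*$'' relies on through the constant and through $\nabla h = D_x(z^*,v^*,x)$ being well-defined and smooth): one must carefully track that matrix inversion of $\nabla_{11}^2 G(z^*(x),x)$ is Lipschitz uniformly in $x$, which uses the uniform lower bound $\mu_G$ on its spectrum together with the global Lipschitzness of $\nabla^2 G$ and of $z^*$. Everything else is a routine triangle-inequality-plus-Lipschitz bookkeeping; I would state the constants explicitly in the appendix and keep only the structure here. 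Note the bounds on $D_v$ and $D_x$ as stated do not actually require $v^*$ to be Lipschitz (the decomposition above only uses that $v^*(x)$ is bounded and that the derivatives of $F$, $G$ are Lipschitz in their arguments) — so if one wants the cleanest route, Step 2 can be trimmed to just the boundedness $\|v^*(x)\| \leq C_F/\mu_G$, and the Lipschitzness of $v^*$ deferred to wherever smoothness of $h$ is separately needed.
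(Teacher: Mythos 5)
Your proof is correct and follows essentially the same route as the paper's: the same vanishing of the directions at $(z^*(x), v^*(x), x)$, the same three-term decomposition of $D_v$ and $D_x$, and the same bounds via Lipschitzness of the derivatives and the bound $\|v^*(x)\|\leq C_F/\mu_G$. Your Step~2 material on the Lipschitzness of $z^*$ and $v^*$ is indeed superfluous here, as you note yourself; the paper defers it to a separate lemma used only later.
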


In first order optimization, a fundamental assumption on the objective function is the smoothness assumption. In the case of vanilla gradient descent applied to a function $f$, it allows to get a convergence rate of $\|\nabla f(x^t)\|^2$ in $O(1/T)$, i.e. convergence to a stationary point~\cite{Nesterov2004}. The following lemma proved by \citet[Lemma 2.2]{Ghadimi2018} ensures the smoothness of $h$.
\begin{lemma}
\label{lemma:h_smoothness}
Under the Assumptions \ref{ass:1} to \ref{ass:3}, the function $h$ is $L^h$-smooth for some $L^h>0$.
\end{lemma}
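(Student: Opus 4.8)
The plan is to prove \cref{lemma:h_smoothness} by controlling the Lipschitz constants of the three pieces appearing in the explicit formula \eqref{eq:hgrad}, namely $x\mapsto\nabla_2 F(z^*(x),x)$, $x\mapsto\nabla^2_{21}G(z^*(x),x)$, and $x\mapsto v^*(x)$, and then combining them. The key preliminary facts are: (i) $z^*$ is Lipschitz, which follows from strong convexity of $G(\cdot,x)$ together with Lipschitzness of $\nabla G$; and (ii) $v^*$ is Lipschitz, which follows from its defining linear system \eqref{eq:v_star_def}, the uniform bound $\|[\nabla^2_{11}G(z^*(x),x)]^{-1}\|\leq 1/\mu_G$, the bound $\|\nabla_1 F(z^*(x),x)\|\leq C_F$ from \cref{ass:3}, and the Lipschitzness of $\nabla F$, $\nabla^2 G$ composed with the Lipschitz map $x\mapsto(z^*(x),x)$.

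First I would establish the Lipschitz continuity of $z^*$. For fixed $x$, $z^*(x)$ is characterized by $\nabla_1 G(z^*(x),x)=0$; differentiating (or using a standard strong-monotonicity argument) one gets
\begin{equation*}
\|z^*(x)-z^*(x')\|\leq \frac{L^G_1}{\mu_G}\|x-x'\|\enspace,
\end{equation*}
so the map $x\mapsto(z^*(x),x)$ is $L_{z^*}$-Lipschitz with $L_{z^*}=\sqrt{1+(L^G_1/\mu_G)^2}$ (any equivalent constant is fine). Next I would bound $v^*$: from \eqref{eq:v_star_def}, $\|v^*(x)\|\leq C_F/\mu_G$ uniformly in $x$ using \cref{ass:2,ass:3}. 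Then, writing $A(x)=\nabla^2_{11}G(z^*(x),x)$ and $b(x)=\nabla_1 F(z^*(x),x)$, we have $v^*(x)=-A(x)^{-1}b(x)$, and the identity $A(x)^{-1}-A(x')^{-1}=A(x)^{-1}(A(x')-A(x))A(x')^{-1}$ together with $\|A(x)^{-1}\|\leq 1/\mu_G$, the Lipschitzness of $A(\cdot)$ (from $\nabla^2 G$ being $L^G_2$-Lipschitz precomposed with the $L_{z^*}$-Lipschitz map), and the Lipschitzness of $b(\cdot)$ (from $\nabla F$ being $L^F_1$-Lipschitz precomposed with the same map) yields a Lipschitz constant $L_{v^*}$ for $v^*$ depending on $\mu_G$, $L^G_1$, $L^G_2$, $L^F_1$, $C_F$.

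Then I would assemble the bound on $\nabla h(x)-\nabla h(x')$ via \eqref{eq:hgrad}. Split the difference into three telescoping terms:
\begin{equation*}
\nabla_2 F(z^*(x),x)-\nabla_2 F(z^*(x'),x') \enspace,
\end{equation*}
which is bounded by $L^F_1 L_{z^*}\|x-x'\|$; the term $(\nabla^2_{21}G(z^*(x),x)-\nabla^2_{21}G(z^*(x'),x'))v^*(x)$, bounded by $L^G_2 L_{z^*}\cdot(C_F/\mu_G)\|x-x'\|$ using the uniform bound on $\|v^*\|$; and $\nabla^2_{21}G(z^*(x'),x')(v^*(x)-v^*(x'))$, bounded by $\|\nabla^2_{21}G\|_{\infty}\cdot L_{v^*}\|x-x'\|$, where $\|\nabla^2_{21}G\|_\infty$ is controlled because $\nabla^2 G$ is Lipschitz and, along $z=z^*(x)$, is evaluated on a set where $\nabla^2_{11}G$ is between $\mu_G$ and some $L$; more directly one can take the Lipschitz constant $L^G_1$ of $\nabla G$ as an upper bound on $\|\nabla^2 G\|$. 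Summing the three contributions gives $L^h$. Since \citet[Lemma 2.2]{Ghadimi2018} already proves this (under slightly weaker assumptions even), I would simply cite it, but the above is the argument.

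The main obstacle is bookkeeping rather than conceptual: one must be careful that the smoothness assumptions are used at the right points — in particular, \cref{ass:2} gives Lipschitzness of $\nabla^2 G$ (needed so $A(\cdot)$ is Lipschitz), and the third-derivative bound $L^G_3$ is \emph{not} needed for mere $L^h$-smoothness (it is needed only later for \cref{lemma:smoothness} and for the refined analysis). Also one must ensure the uniform boundedness of $\|v^*\|$ and of the relevant operator norms before invoking the resolvent identity, since the Lipschitz estimate for $A^{-1}$ degrades if $\|A^{-1}\|$ is not uniformly bounded — this is exactly where $\mu_G$-strong convexity is essential. Given all these uniform bounds, the final combination is a routine triangle-inequality argument.
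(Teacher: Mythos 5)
Your argument is correct and is essentially the proof behind the result the paper invokes: the paper itself simply cites \citet[Lemma 2.2]{Ghadimi2018} for this lemma and records the explicit constant $L^h$ in \cref{app:cst_h}, while the Lipschitz continuity of $z^*$ and $v^*$ that you establish (via strong monotonicity and the resolvent identity $A^{-1}-A'^{-1}=A^{-1}(A'-A)A'^{-1}$) is exactly what the paper proves in \cref{lemma:smoothness_star}. Your observation that $L^G_3$ is not needed here is also consistent with the paper's stated constant, which involves only $L^F_1$, $L^G_1$, $L^G_2$, $C_F$ and $\mu_G$.
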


The constant $L^h$ is specified in \cref{app:cst_h}.
As usual with the analysis of stochastic methods, we define the expected norms of the directions
$V_z^t = \bbE[\|D_z^t\|^2]$, $V_v^t = \bbE[\|D_v^t\|^2]$ and $V_x^t = \bbE[\|D_x^t\|^2]$,
where the expectation is taken over the past. Thanks to variance-bias decomposition, they are the sum of the variance of the stochastic direction and the squared-norm of the unbiased direction.
\new{For SOBA, we use classical bounds on variances like} those found for instance in~\cite{Hong2021}:
\begin{assumption}
\label{assumption:bounded_gradients}
There exist $B_z$ and $B_v$ such that for all $t$, $\new{\bbE_t[\|D^t_z\|^2]}~\leq~B_z^2(1 +~\|D_z(z^t, v^t, x^t)\|^2)$ and $\new{\bbE_t[\|D^t_v\|^2]}\leq B_v^2(1 +\|D_v(z^t, v^t, x^t)\|^2)$ \new{where $\bbE_t$ denotes the expectation conditionally to $(z^t, v^t, x^t)$}.
\end{assumption}
\new{For SOBA and SABA, we need to bound the expected norm of $D^t_x$. For SABA, this assumption allows to get the same sample complexity as SAGA for single-level problems.
\begin{assumption}
\label{assumption:bounded_gradients_x}
There exists $B_x$ such that for all $t$, $\bbE_t[\|D^t_x\|^2]\leq B_x^2$.
\end{assumption}}
\new{Assumptions \ref{assumption:bounded_gradients} and \ref{assumption:bounded_gradients_x} are} verified for instance, if all the $G_i$ and $\nabla_1G_i$ have at most quadratic growth, and if  $F$ has bounded gradients. They are also verified if the iterates remain in a compact set.
Note that we do not assume that $G$ has bounded gradients, as this would contradict its strong-convexity.
Finally, for the analysis of SABA, we need regularity on each $G_i$ and $F_j$:

\begin{assumption}
\label{ass:indiv_lipschitz}
For all $i\in\setcomb{n}$ and $j\in\setcomb{m}$, the functions $\nabla G_i$, $\nabla F_j$, $ \nabla_{11}^2G_i$ and $\nabla_{21}^2G_i$ are Lipschitz continuous in $(z, x)$.
\end{assumption}
\subsection{Fundamental descent lemmas}
Our analysis for SOBA and SABA is based on the control of both
$
    \delta_z^t = \bbE[\|z^t -z^*(x^t)\|^2]
$
and
$
    \delta_v^t = \bbE[\|v^t - v^*(x^t)\|^2]
$,
Strong convexity of $G$ \new{and smoothness of $z^*(x)$ and $v^*(x)$} allow to obtain the following lemma \new{by adapting the proof of \citet{Chen2021b}}. In what follows, we drop the dependency of the step sizes $\rho$ and $\gamma$ in $t$ for clarity.
\begin{lemma}\label{lemma:coupled_inequalities}
Assume that
\new{$\gamma^2\leq\min\left(\frac{\mu_GL_*^2}{4B_x^2L_{zx}^2}, \frac{\mu_GL_*^2}{8B_x^2L_{vx}^2}\right)\rho$}. We have:
\begin{align*}
    \delta^{t+1}_z &\leq \left(1-\frac{\rho\mu_G}4\right)\delta^t_z + 2\rho^2 V_z^t + \beta_{zx}\new{\gamma^2}V_x^t \new{+ \overline{\beta}_{zx}\frac{\gamma^2}{\rho} \bbE[\|D_x(z^t,v^t,x^t)\|^2]} \\
    \delta^{t+1}_v &\leq \left(1-\frac{\rho\mu_G}8\right)\delta_v^t +   \beta_{vz}\rho\delta_z^t +2\rho^2V_v^t +  \beta_{vx}\new{\gamma^2}V_x^t \new{+\overline{\beta}_{zx}\frac{\gamma^2}{\rho} \bbE[\|D_x(z^t,v^t,x^t)\|^2]}
\end{align*}
where \new{$\beta_{zx} = \beta_{vx} = 3L_*^2$, $\overline{\beta}_{zx} = \frac{8L_*^2}{\mu_G}$, $\overline{\beta}_{vx} = \frac{16L_*^2}{\mu_G}$}, $L_*$ is the maximum between the Lipschitz constants of $z^*$ and $v^*$ (see \cref{lemma:smoothness_star}), $\beta_{vz} = \frac1{\mu_G^3}(L^F_1\mu_G+L^G_2)^2$, \new{$L_{zx}$ and $L_{vx}$ are respectively the smoothness constants of $z^*$ and $v^*$}.
\end{lemma}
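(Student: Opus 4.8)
The plan is to track each error $\delta^{t+1}_z$ and $\delta^{t+1}_v$ through a ``moving target'' decomposition, controlling separately the progress made at frozen outer variable and the displacement of the target caused by the update of $x$; all computations below are conditional on $(z^t,v^t,x^t)$ and the stated inequalities follow by taking the total expectation. For the inner variable, write
\[
z^{t+1}-z^*(x^{t+1}) \;=\; \bigl(z^{t+1}-z^*(x^t)\bigr) \;-\; \Delta_z^t, \qquad \Delta_z^t := z^*(x^{t+1})-z^*(x^t),
\]
expand $\|\cdot\|^2$ and take $\bbE_t$. Three contributions appear. \emph{(i) Frozen-$x$ progress:} since $z^{t+1}-z^*(x^t)=(z^t-z^*(x^t))-\rho D_z^t$, $\bbE_t[D_z^t]=\nabla_1 G(z^t,x^t)$, and $z^*(x^t)$ minimizes the $\mu_G$-strongly convex $G(\cdot,x^t)$, strong monotonicity gives $\bbE_t\|z^{t+1}-z^*(x^t)\|^2 \le (1-2\rho\mu_G)\|z^t-z^*(x^t)\|^2 + \rho^2 V_z^t$. \emph{(ii) Cross terms:} the expansion also produces $-2\langle z^t-z^*(x^t),\bbE_t[\Delta_z^t]\rangle$ and $2\rho\,\bbE_t\langle D_z^t,\Delta_z^t\rangle$; the first is handled by Young's inequality against the strong-convexity slack, the second by $2\rho\langle D_z^t,\Delta_z^t\rangle\le\rho^2\|D_z^t\|^2+\|\Delta_z^t\|^2$, which upgrades the $V_z^t$ coefficient to $2\rho^2$ and contributes one extra copy of $\|\Delta_z^t\|^2$. \emph{(iii) Target displacement:} since $z^*$ is $L_*$-Lipschitz (\cref{lemma:smoothness_star}), $\bbE_t\|\Delta_z^t\|^2\le L_*^2\gamma^2 V_x^t$.

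The delicate point is the bias $\bbE_t[\Delta_z^t]$ in (ii): it is \emph{not} simply $-\gamma\,J_{z^*}(x^t)D_x(z^t,v^t,x^t)$, because expectation and the nonlinear map $z^*$ do not commute. Using that the Jacobian of $z^*$ is $L_{zx}$-Lipschitz (its ``smoothness''), a first-order Taylor expansion of $z^*$ around $x^t$ along $x^{t+1}-x^t=-\gamma D_x^t$ gives $\|\bbE_t[\Delta_z^t]\|\le L_*\gamma\|D_x(z^t,v^t,x^t)\|+\tfrac{L_{zx}}{2}\gamma^2\,\bbE_t\|D_x^t\|^2$. Squaring this and multiplying by the $O\!\left(\tfrac1{\rho\mu_G}\right)$ weight produced by Young's inequality, the leading term gives $\overline{\beta}_{zx}\tfrac{\gamma^2}{\rho}\|D_x(z^t,v^t,x^t)\|^2$ with $\overline{\beta}_{zx}=\tfrac{8L_*^2}{\mu_G}$, while the Jensen-gap term is $O\!\left(\tfrac{\gamma^4}{\rho}(V_x^t)^2\right)$ and is absorbed into the $L_*^2\gamma^2 V_x^t$ contribution using $V_x^t\le B_x^2$ (\cref{assumption:bounded_gradients_x}) and the hypothesis $\gamma^2\le\tfrac{\mu_G L_*^2}{4B_x^2 L_{zx}^2}\rho$. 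Collecting the three contributions and choosing the Young parameters so that the coefficient of $\|z^t-z^*(x^t)\|^2$ is at most $1-\tfrac{\rho\mu_G}{4}$ gives the first inequality with $\beta_{zx}=3L_*^2$.

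The inequality for $\delta^{t+1}_v$ follows the same scheme applied to $v^{t+1}=v^t-\rho D_v^t$, with one extra term. Conditionally, $D_v^t$ is an unbiased estimate of the gradient in $v$ of the $\mu_G$-strongly convex quadratic $q_t(v)=\tfrac12\langle\nabla_{11}^2 G(z^t,x^t)v,v\rangle+\langle\nabla_1 F(z^t,x^t),v\rangle$, whose minimizer is $v^*(x^t)$ only when $z^t=z^*(x^t)$. For general $z^t$, $\nabla q_t(v^*(x^t))=\bigl(\nabla_{11}^2 G(z^t,x^t)-\nabla_{11}^2 G(z^*(x^t),x^t)\bigr)v^*(x^t)+\bigl(\nabla_1 F(z^t,x^t)-\nabla_1 F(z^*(x^t),x^t)\bigr)$, which by \cref{ass:1,ass:2} and the boundedness of $v^*$ (from \cref{ass:3} and strong convexity) has norm at most $\tfrac1{\mu_G}(L^F_1\mu_G+L^G_2)\|z^t-z^*(x^t)\|$. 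Feeding this perturbation into the one-step analysis and splitting it via Young's inequality against $\|v^t-v^*(x^t)\|^2$ produces the coupling term $\beta_{vz}\rho\,\delta_z^t$ with $\beta_{vz}=\tfrac1{\mu_G^3}(L^F_1\mu_G+L^G_2)^2$, at the price of degrading the contraction factor to $1-\tfrac{\rho\mu_G}{8}$; the larger Young weight $\tfrac{8}{\rho\mu_G}$ then turns $\overline{\beta}_{zx}$ into $\overline{\beta}_{vx}=\tfrac{16L_*^2}{\mu_G}$, the step-size hypothesis being correspondingly $\gamma^2\le\tfrac{\mu_G L_*^2}{8B_x^2 L_{vx}^2}\rho$ with $L_{vx}$ the smoothness constant of $v^*$. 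I expect the main obstacle to be exactly this bookkeeping: controlling the Jensen-gap remainders from composing $z^*$ and $v^*$ with the stochastic outer step (which is what forces the extra regularity in \cref{ass:1,ass:2}, \cref{lemma:smoothness_star}, the bound $V_x^t\le B_x^2$, and the precise step-size condition), and tuning the various Young parameters so that all constants land on the stated values; the one-step strongly-convex contractions and the perturbation estimate for $q_t$ are routine.
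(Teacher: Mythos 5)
Your proposal is correct and follows essentially the same route as the paper's proof: expand the square around the moving target $z^*(x^{t+1})$ (resp.\ $v^*(x^{t+1})$), use strong convexity for the frozen-$x$ contraction and Lipschitzness of $z^*,v^*$ for the displacement, and — crucially — Taylor-expand $z^*$ (resp.\ $v^*$) to first order so that unbiasedness of $D_x^t$ turns the linear part of the bias into the $\overline{\beta}_{zx}\frac{\gamma^2}{\rho}\bbE[\|D_x(z^t,v^t,x^t)\|^2]$ term while the $L_{zx}$- (resp.\ $L_{vx}$-) smoothness remainder is absorbed via $\bbE_t[\|D_x^t\|^2]\le B_x^2$ and the step-size condition; the coupling term $\beta_{vz}\rho\,\delta_z^t$ arises exactly as you describe. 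The only difference from the paper is bookkeeping: you bound $\|\bbE_t[\Delta_z^t]\|$ and apply a single Young step, whereas the paper splits the cross inner product into the linear part $A$ and the Taylor remainder $B$ with separate Young parameters (pushing the remainder into the contraction factor rather than into the $\gamma^2 V_x^t$ term) — both yield the stated constants.
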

We insist that this result is obtained in general for \cref{alg:sgd} with arbitrary unbiased directions.
We can therefore invoke this lemma for the analysis of both SOBA and SABA.
We use the smoothness of $h$ to get the following lemma, \new{which is similar to \citep[Lemma 1]{Chen2021b}}.
\begin{lemma}\label{lemma:descent_lemma}
Let $h^t = \bbE[h(x^t)]$ and $g^t =\bbE[\|\nabla h(x^t)\|^2]$. We have
$$
h^{t+1}\leq h^t -\frac\gamma 2g^t \new{-\frac\gamma2 \bbE[\|D_x(z^t, v^t, x^t)\|^2]}+ \frac{\gamma}2L_x^2(\delta_z^t + \delta_v^t) + \frac{L^h}2 \gamma^2V_x^t \enspace.
$$
\end{lemma}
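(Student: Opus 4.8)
The statement is the standard ``descent lemma'' for the outer update, so the plan is to apply the smoothness of $h$ to the single step $x^{t+1} = x^t - \gamma D_x^t$ and then take expectations carefully. First I would use \cref{lemma:h_smoothness}: since $h$ is $L^h$-smooth,
\begin{equation*}
h(x^{t+1}) \leq h(x^t) + \langle \nabla h(x^t), x^{t+1}-x^t\rangle + \frac{L^h}{2}\|x^{t+1}-x^t\|^2 = h(x^t) - \gamma\langle \nabla h(x^t), D_x^t\rangle + \frac{L^h}{2}\gamma^2\|D_x^t\|^2 \enspace.
\end{equation*}
Taking the expectation $\bbE_t$ conditional on $(z^t,v^t,x^t)$ and using that $D_x^t$ is an unbiased estimator of $D_x(z^t,v^t,x^t)$, the cross term becomes $-\gamma\langle \nabla h(x^t), D_x(z^t,v^t,x^t)\rangle$, while $\bbE_t[\|D_x^t\|^2]$ is left as is.

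Next I would rewrite the cross term with the polarization identity $-\langle a,b\rangle = -\tfrac12\|a\|^2 - \tfrac12\|b\|^2 + \tfrac12\|a-b\|^2$ applied to $a = \nabla h(x^t)$ and $b = D_x(z^t,v^t,x^t)$, which gives $-\gamma\langle \nabla h(x^t), D_x(z^t,v^t,x^t)\rangle = -\tfrac\gamma2\|\nabla h(x^t)\|^2 -\tfrac\gamma2\|D_x(z^t,v^t,x^t)\|^2 + \tfrac\gamma2\|\nabla h(x^t) - D_x(z^t,v^t,x^t)\|^2$. The last term is controlled by the third bound of \cref{lemma:smoothness}, namely $\|\nabla h(x^t) - D_x(z^t,v^t,x^t)\|^2 \leq L_x^2\big(\|z^t-z^*(x^t)\|^2 + \|v^t-v^*(x^t)\|^2\big)$. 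Finally I take the total expectation (tower property), which turns $\bbE[h(x^t)]$ into $h^t$, $\bbE[\|\nabla h(x^t)\|^2]$ into $g^t$, the two squared-distance terms into $\delta_z^t$ and $\delta_v^t$, and $\bbE[\|D_x^t\|^2]$ into $V_x^t$; collecting the terms yields exactly the claimed inequality.

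There is no real obstacle here: the only points requiring a bit of care are the use of the conditional expectation (unbiasedness of $D_x^t$ given the current iterate) together with the tower property, and keeping the negative term $-\tfrac\gamma2\bbE[\|D_x(z^t,v^t,x^t)\|^2]$ rather than discarding it, since it is needed to cancel the $\overline\beta$-terms arising from \cref{lemma:coupled_inequalities} when the two descent lemmas are combined in the convergence proofs of SOBA and SABA.
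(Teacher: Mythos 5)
Your proof is correct and follows exactly the paper's own argument: smoothness of $h$ applied to the step $x^{t+1}=x^t-\gamma D_x^t$, unbiasedness of $D_x^t$ under the conditional expectation, the identity $\langle a,b\rangle=\tfrac12(\|a\|^2+\|b\|^2-\|a-b\|^2)$, the bound on $\|D_x-\nabla h\|^2$ from \cref{lemma:smoothness}, and the tower property. No gaps.
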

If \new{$z^t = z^*(x^t)$, $v^t=v^*(x^t)$, that is} $\delta_z$, $\delta_v$ both cancel and \new{$D_x(z^t, v^t, x^t) = \nabla h(x^t)$}, we get an inequality reminiscent of the smoothness inequality for SGD on $h$.

\subsection{Analysis of SOBA}

The analysis of SOBA is based on Lemmas~\ref{lemma:h_smoothness} and~\ref{lemma:coupled_inequalities}.
We have the following theorem, with fixed step sizes depending on the number of iterations:

\begin{restatable}[Convergence of SOBA, fixed step size]{shadedtheorem}{sobafixed}
\label{thm:soba_fixed}
Fix an iteration $T > 1$ and assume that Assumptions \ref{ass:1} to \ref{assumption:bounded_gradients_x} hold.
We consider fixed steps \new{$\rho^t = \frac{\overline{\rho}}{\sqrt{T}}$ and $\gamma^t=\xi\rho^t$ with $\overline{\rho}$ and $\xi$ precised in the appendix}. Let $(x^t)_{t\geq 1}$ the sequence of outer iterates for SOBA. Then, \new{$$\frac1T\sum_{t=1}^T\bbE[\|\nabla h(x^t)\|^2] = O(T^{-\frac12})\enspace.$$}
\end{restatable}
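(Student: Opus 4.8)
The plan is the standard Lyapunov/telescoping argument, treating the tracking errors $\delta_z^t,\delta_v^t$ as part of the potential. For constants $a,b>0$ to be fixed, set $\mathcal{L}^t = h^t + a\,\delta_z^t + b\,\delta_v^t$. Adding \cref{lemma:descent_lemma} to $a$ times the first inequality and $b$ times the second inequality of \cref{lemma:coupled_inequalities}, and using $\gamma=\xi\rho$, yields a bound of the shape
\[
\mathcal{L}^{t+1}\le \mathcal{L}^t-\frac{\gamma}{2}g^t-\frac{\gamma}{2}\bbE[\|D_x(z^t,v^t,x^t)\|^2]+\big(c_z-\tfrac{a\mu_G}{4}+b\beta_{vz}+\tfrac{\xi}{2}L_x^2\big)\rho\,\delta_z^t+\big(c_v-\tfrac{b\mu_G}{8}+\tfrac{\xi}{2}L_x^2\big)\rho\,\delta_v^t+R^t ,
\]
where $R^t$ collects all the variance terms $\rho^2V_z^t$, $\rho^2V_v^t$, $\gamma^2V_x^t$ and the cross terms $\frac{\gamma^2}{\rho}\bbE[\|D_x(z^t,v^t,x^t)\|^2]$ (and $c_z,c_v$ are $O(1)$ numerical constants coming from the bookkeeping).

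First I would bound $R^t$ via \cref{assumption:bounded_gradients} and \cref{lemma:smoothness}: $V_z^t\le B_z^2(1+L_z^2\delta_z^t)$ and $V_v^t\le B_v^2(1+L_v^2(\delta_z^t+\delta_v^t))$, together with $V_x^t\le B_x^2$ from \cref{assumption:bounded_gradients_x}. Substituting, $R^t$ splits into a constant $O(\rho^2)$ piece, $O(\rho^2)$ multiples of $\delta_z^t,\delta_v^t$, and the cross term $(\overline\beta_{zx}a+\overline\beta_{vx}b)\xi^2\rho\,\bbE[\|D_x(z^t,v^t,x^t)\|^2]$. Next I would fix the constants: choose $b$ so that $b\mu_G$ is a fixed multiple of $\xi L_x^2$, and then $a$ so that $a\mu_G$ is a fixed multiple of $b\beta_{vz}+\xi L_x^2$; thus $a$ and $b$ are both proportional to $\xi$. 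For $\overline\rho$ small (so $\rho=\overline\rho/\sqrt T$ is small) the $O(\rho^2)$ multiples of $\delta_z^t,\delta_v^t$ are dominated by the $\rho\mu_G$-gaps, so the coefficients of $\delta_z^t,\delta_v^t$ on the right-hand side are $\le a$ and $\le b$ respectively. Since $a,b\propto\xi$, the cross term is $O(\xi^3)\rho\,\bbE[\|D_x\|^2]$, which for $\xi$ small is $\le \frac{\gamma}{4}\bbE[\|D_x\|^2]$ and hence absorbed by half of the $-\frac{\gamma}{2}\bbE[\|D_x\|^2]$ term of \cref{lemma:descent_lemma} --- this is exactly the two-time-scale requirement $\gamma\ll\rho$. (One also checks $\overline\rho$ is small enough for the standing hypothesis $\gamma^2\le\min(\cdots)\rho$ of \cref{lemma:coupled_inequalities}, i.e. $\rho\le c/\xi^2$.)

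After these choices, dropping the nonnegative leftover $-\frac{\gamma}{4}\bbE[\|D_x\|^2]$, one gets $\mathcal{L}^{t+1}\le\mathcal{L}^t-\frac{\gamma}{2}g^t+C\rho^2$ for a constant $C$ depending only on the problem constants and $\xi$. Summing over $t$, telescoping, and using $\mathcal{L}^{T}\ge h^{T}\ge \inf_x h(x)=:h^\star>-\infty$ (finite, as is standard in this setting) gives $\frac{\gamma}{2}\sum_t g^t\le\mathcal{L}^0-h^\star+CT\rho^2$. Dividing by $\gamma T/2=\xi\rho T/2$ and substituting $\rho=\overline\rho/\sqrt T$ turns the two resulting terms into $\frac{2(\mathcal{L}^0-h^\star)}{\xi\overline\rho\sqrt T}$ and $\frac{2C\overline\rho}{\xi\sqrt T}$, both $O(T^{-1/2})$, which is the claim; the values of $\overline\rho$ and $\xi$ in the appendix are the explicit output of the constant-chasing.

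\textbf{Main obstacle.} The real work is the constant bookkeeping in the second step: checking that $a,b,\xi,\overline\rho$ can be chosen simultaneously, with $\xi$ a fixed constant independent of $T$, even though the admissible range of $\xi$ needed to kill the $\frac{\gamma^2}{\rho}$ cross terms shrinks as $a,b$ grow while $a,b$ themselves must scale with $\xi$. Tracking these dependencies --- precisely the delicate two-time-scale coupling already handled for the two-loop setting by \citet{Chen2021b}, whose proof of \cref{lemma:coupled_inequalities} and \cref{lemma:descent_lemma} we adapt --- is where the content lies; everything else is routine.
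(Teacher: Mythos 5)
Your proposal follows essentially the same route as the paper's proof: the same Lyapunov function $\mathcal{L}^t=h^t+a\,\delta_z^t+b\,\delta_v^t$ built from \cref{lemma:descent_lemma} and \cref{lemma:coupled_inequalities}, the same variance bounds from \cref{assumption:bounded_gradients} and \cref{assumption:bounded_gradients_x}, absorption of the $\frac{\gamma^2}{\rho}\bbE[\|D_x\|^2]$ cross terms into the $-\frac{\gamma}{2}\bbE[\|D_x\|^2]$ margin by taking $\xi=\gamma/\rho$ a small constant, and the same telescoping; the only (harmless) difference is that you normalize the weights as $a,b\propto\xi$ while the paper takes $\phi_z,\phi_v\propto 1/\xi$, both lying in the same feasible window. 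One wording caveat: calling the condition a ``two-time-scale requirement $\gamma\ll\rho$'' is misleading — as the paper emphasizes, only a fixed ratio $\xi$ is needed, not $\gamma/\rho\to 0$, and your own constant-chasing already reflects this.
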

As opposed to \cite{Hong2021}, we do not need that the ratio $\frac\gamma\rho$ goes to 0, which allows to get a complexity (that is, the number of call to oracles to have an $\epsilon$-stationary solution) in $O(\epsilon^{-2})$ better than the $\tilde{O}(\epsilon^{-\frac52})$ they have. Also, note that this rate is the same as the one of SGD for non-convex and smooth objective \cite{Ghadimi2013,Bottou2018}.
We obtain a similar rate using decreasing step sizes:

\begin{restatable}[Convergence of SOBA, decreasing step size]{shadedtheorem}{sobadecreasing}
\label{thm:decreasing_step_size}
Assume that Assumptions \ref{ass:1} to \ref{assumption:bounded_gradients_x} hold. We consider steps \new{$\rho^t = \overline{\rho}t^{-\frac12}$} and \new{$\gamma^t=\xi\rho$}. Let $x^t$ the sequence of outer iterates for SOBA. Then,\new{~$$\inf_{t\leq T}\bbE[\|\nabla h(x^t)\|^2] = O(\log(T)T^{-\frac12})\enspace.$$}
\end{restatable}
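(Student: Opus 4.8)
The plan is to follow the same route as the proof of \cref{thm:soba_fixed}: build a Lyapunov function out of the outer suboptimality and the two inner tracking errors $\delta_z^t,\delta_v^t$, derive a one-step inequality with an $O((\rho^t)^2)$ remainder, and then sum against the decreasing schedule, where the harmonic sum $\sum_t (\rho^t)^2 \propto \sum_t t^{-1}$ produces the $\log T$ factor and $\sum_t \gamma^t \propto \sum_t t^{-1/2}=\Omega(\sqrt T)$ produces the $T^{-1/2}$.

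First I would eliminate the variance terms. Combining \cref{assumption:bounded_gradients} with \cref{lemma:smoothness} gives $V_z^t\le B_z^2(1+L_z^2\delta_z^t)$ and $V_v^t\le B_v^2(1+L_v^2(\delta_z^t+\delta_v^t))$; \cref{assumption:bounded_gradients_x} gives $V_x^t\le B_x^2$; and $\|D_x-\nabla h\|^2\le L_x^2(\delta_z^t+\delta_v^t)$ from \cref{lemma:smoothness} gives $\bbE[\|D_x(z^t,v^t,x^t)\|^2]\le 2g^t+2L_x^2(\delta_z^t+\delta_v^t)$. Substituting into \cref{lemma:coupled_inequalities} — which applies for every $t$ once $\bar\rho$ is small enough that $\rho^t\mu_G\le 4$ and the step-size constraint $(\gamma^t)^2\le\min(\cdots)\rho^t$ holds, both of which reduce to $\bar\rho\le\mathrm{const}/\xi^2$ since $\rho^t\le\bar\rho$ and $\gamma^t=\xi\rho^t$ — and using $\gamma^t=\xi\rho^t$ to turn $\gamma^2V_x^t$, $\gamma^2\delta$ and $\tfrac{\gamma^2}{\rho}\bbE[\|D_x\|^2]$ into $O((\rho^t)^2)$ plus $\rho^t$-order multiples of $\delta_z^t,\delta_v^t,g^t$ (with coefficients that shrink with $\xi$), I would, after absorbing the $(\rho^t)^2\delta$ terms into the $-\tfrac{\rho^t\mu_G}{4}\delta_z^t$ and $-\tfrac{\rho^t\mu_G}{8}\delta_v^t$ contractions, obtain recursions of the form
\[
\delta_z^{t+1}\le(1-c'_z\rho^t)\delta_z^t+a_{zv}\rho^t\delta_v^t+a_{zg}\rho^tg^t+O((\rho^t)^2),\quad
\delta_v^{t+1}\le(1-c'_v\rho^t)\delta_v^t+a_{vz}\rho^t\delta_z^t+a_{vg}\rho^tg^t+O((\rho^t)^2).
\]

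Then I would form $\calL^t=(h^t-\inf h)+c_z\delta_z^t+c_v\delta_v^t$ and add $c_z$ and $c_v$ times these recursions to \cref{lemma:descent_lemma} (keeping the $-\tfrac{\gamma^t}{2}\bbE[\|D_x\|^2]$ term to cancel the $\tfrac{\gamma^2}{\rho}\bbE[\|D_x\|^2]$ feedback, and bounding $V_x^t\le B_x^2$ there). Choosing the weights in the order $c_v$, then $c_z$, then $\xi$, then $\bar\rho$, one makes the net coefficient of $\delta_v^t$ negative (so $c_vc'_v\mu_G$ dominates the cross-term $c_za_{zv}$ and the $\tfrac{\gamma^t}{2}L_x^2$ from the descent lemma), then that of $\delta_z^t$ negative (so $c_zc'_z\mu_G$ dominates $c_va_{vz}$, $c_v\beta_{vz}$ and the descent-lemma term), and then makes all remaining $\xi$- and $\bar\rho$-dependent leftovers — including the $\tfrac{\gamma^2}{\rho}g^t$ contributions fed back through the two recursions — small enough; the conclusion is a one-step inequality $\calL^{t+1}\le\calL^t-\tfrac{\gamma^t}{4}g^t+C(\rho^t)^2$ valid for all $t\ge1$ and a constant $C$. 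Summing from $t=1$ to $T$, telescoping and $\calL^{T+1}\ge0$ give $\tfrac14\sum_{t=1}^T\gamma^tg^t\le\calL^1+C\sum_{t=1}^T(\rho^t)^2$; with $\rho^t=\bar\rho t^{-1/2}$ we have $\sum_{t=1}^T(\rho^t)^2=O(\log T)$ and $\sum_{t=1}^T\gamma^t=\Omega(\sqrt T)$, hence
\[
\inf_{t\le T}g^t\le\Big(\sum_{t=1}^T\gamma^t\Big)^{-1}\sum_{t=1}^T\gamma^tg^t=O\!\left(\frac{\log T}{\sqrt T}\right).
\]
I expect the main obstacle to be the weight-selection step: verifying that $c_z,c_v,\xi,\bar\rho$ can indeed be chosen, in a consistent order and compatibly with the step-size constraint of \cref{lemma:coupled_inequalities}, so that every $\delta_z^t$, $\delta_v^t$ and $\bbE[\|D_x\|^2]$ term lands with a non-positive net coefficient; the coupling of the $v$-error to the $z$-error and the $\tfrac{\gamma^2}{\rho}$ feedback make the bookkeeping delicate, but it is essentially the computation already done for \cref{thm:soba_fixed}, so no genuinely new difficulty arises.
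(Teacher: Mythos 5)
Your proposal is correct and follows essentially the same route as the paper: the same Lyapunov function $h^t+\phi_z\delta_z^t+\phi_v\delta_v^t$, the same use of the $-\frac{\gamma}{2}\bbE[\|D_x\|^2]$ term from \cref{lemma:descent_lemma} to absorb the $\frac{\gamma^2}{\rho}\bbE[\|D_x\|^2]$ feedback from \cref{lemma:coupled_inequalities}, the same reuse of the fixed-step constants (valid since $\rho^t\le\overline{\rho}$), and the same summation with $\sum_t(\rho^t)^2=O(\log T)$ and $\sum_t\gamma^t=\Omega(\sqrt T)$. No substantive difference from the paper's argument.
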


\new{
As for SGD, SOBA suffers from the need of decreasing step sizes to get actual convergence because of the variance of the estimation on each directions.}
On the other hand, the analysis of SABA leverages the dynamic of all three variables, resulting in fast convergence with fixed step sizes.

\subsection{SABA: a stochastic method with optimal rates}
\label{sec:saga}

\new{In what follows, we denote $N = n + m$ the total number of samples.} The following theorem shows $O(\new{N^{\frac23}}T^{-1})$ convergence for the SABA algorithm in the general case where we only assume smoothness of $h$.
Our analysis of SABA is inspired by the analysis of single-level SAGA by~\citet{Reddi2016}.

\begin{restatable}[Convergence of SABA, smooth case]{shadedtheorem}{sabasmooth}
\label{th:cvg_saba_smooth}
Assume that Assumptions \ref{ass:1} to \ref{ass:3} and \ref{assumption:bounded_gradients_x} to \ref{ass:indiv_lipschitz} hold. We suppose \new{$\rho = \rho'N^{-\frac23}$ and $\gamma = \xi \rho$}, where $\rho'$ and $\xi$ depend only on $F$ and $G$ and are specified in appendix. Let $x^t$ the iterates of SABA. Then, $$\frac1T\sum_{t=1}^T\bbE[\|\nabla h(x^t)\|^2] = O\left(\new{N^{\frac23}}T^{-1}\right)\enspace.$$
\end{restatable}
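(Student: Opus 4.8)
The plan is to build a Lyapunov (potential) function that combines the outer objective $h^t$, the inner/linear-system errors $\delta_z^t$ and $\delta_v^t$, and a term controlling the drift of the SAGA memories, and to show it decreases on average by a quantity proportional to $g^t = \bbE[\|\nabla h(x^t)\|^2]$. First I would set up the SAGA variance bound: for each of the five memory-based estimators $S[\phi, w]^t$, I would invoke \cref{ass:indiv_lipschitz} (Lipschitz continuity of the individual $\nabla G_i$, $\nabla F_j$, $\nabla_{11}^2 G_i$, $\nabla_{21}^2 G_i$) to bound $V_z^t, V_v^t, V_x^t$ in terms of the squared distances between the current iterate $y^t$ and the stored memory points $w_i^t, \tilde w_j^t$. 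Concretely, I would introduce memory-drift quantities such as $\mathcal{E}^t = \frac1n\sum_i \bbE[\|y^t - w_i^t\|^2] + \frac1m\sum_j \bbE[\|y^t - \tilde w_j^t\|^2]$ (or separate ones per function family), and show $V_x^t - \bbE[\|D_x^t\|^2_{\text{exact}}] \lesssim \mathcal{E}^t$, and similarly for $V_z^t, V_v^t$ — this is the step where the finite-sum structure and the "linear in $F, G$" form of the directions is essential, since it makes all the $S$-estimators genuine SAGA estimators of Lipschitz maps.

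Next I would establish a contraction-plus-drift recursion for $\mathcal{E}^t$: a standard SAGA argument gives $\mathcal{E}^{t+1} \leq (1 - \tfrac1N)\mathcal{E}^t + c\, \bbE[\|y^{t+1}-y^t\|^2]$, and $\bbE[\|y^{t+1}-y^t\|^2]$ is controlled by $\rho^2(V_z^t + V_v^t) + \gamma^2 V_x^t$. Then I would assemble the Lyapunov function $\Phi^t = h^t + c_1 \delta_z^t + c_2 \delta_v^t + c_3 \mathcal{E}^t$ and plug in \cref{lemma:descent_lemma} (descent on $h$), \cref{lemma:coupled_inequalities} (coupled inequalities on $\delta_z^t, \delta_v^t$, using $\bbE[\|D_x(z^t,v^t,x^t)\|^2] \leq 2 g^t + 2L_x^2(\delta_z^t+\delta_v^t)$ from \cref{lemma:smoothness}), the variance bounds, and the $\mathcal{E}^t$ recursion. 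The coefficients $c_1, c_2, c_3$ and the step sizes $\rho = \rho' N^{-2/3}$, $\gamma = \xi\rho$ must be chosen so that: (i) the $\delta_z^t, \delta_v^t$ coefficients on the right-hand side are dominated using their strong-convexity contraction factors $(1-\rho\mu_G/4)$, $(1-\rho\mu_G/8)$; (ii) the $\mathcal{E}^t$ term is absorbed by its own $(1-1/N)$ contraction, which forces $c_3$ large and $\rho^2/(1/N)$, i.e. $\rho \sim N^{-2/3}$-type scaling for the cross terms to close; (iii) a net negative multiple of $\gamma g^t$ survives. Telescoping $\Phi^t$ from $1$ to $T$ and dividing by $\gamma T$ then yields $\frac1T\sum_{t=1}^T g^t \leq \frac{\Phi^1 - \Phi^{T+1}}{c\,\gamma T} = O(N^{2/3} T^{-1})$, using $\gamma \sim N^{-2/3}$ and $\Phi^{T+1}$ bounded below (since $h$ is bounded below by its infimum and the other terms are nonnegative).

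The main obstacle I expect is the simultaneous bookkeeping in step two: one must choose five step-size/coefficient constraints consistently so that every "bad" term — the variance contributions $\rho^2 V^t$, $\gamma^2 V^t$, the cross terms $\gamma^2/\rho$ in \cref{lemma:coupled_inequalities}, the memory-drift feedback $\bbE[\|y^{t+1}-y^t\|^2]$ into $\mathcal{E}^t$, and the $\delta_z^t + \delta_v^t$ appearing in the $h$-descent — is dominated, while still retaining a strictly negative $g^t$ coefficient. The reason $N^{2/3}$ rather than $N$ or $N^{1/2}$ appears is the classic SAGA balancing in the nonconvex case (cf. \citet{Reddi2016}): the $\mathcal{E}^t$ contraction rate $1/N$ must beat the quadratic step-size feedback, which after optimizing the Lyapunov weights gives $\gamma \propto N^{-2/3}$ and hence the stated rate. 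Assumption~\ref{assumption:bounded_gradients_x} ($\bbE_t[\|D_x^t\|^2]\leq B_x^2$) is used to handle the $V_x^t$ terms that are not yet under variance reduction control early in the argument and to match the single-level SAGA sample complexity, exactly as flagged in the text.
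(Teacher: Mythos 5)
Your proposal is correct and follows essentially the same route as the paper's proof: the same memory-drift quantity (the paper's $S^t$), the same SAGA contraction-plus-drift recursion with the $O(N)$ feedback constant forcing $\rho\sim N^{-2/3}$, the same Lyapunov function $h^t+\phi_z\delta_z^t+\phi_v\delta_v^t+\phi_s S^t$ combined with \cref{lemma:descent_lemma,lemma:coupled_inequalities}, and the same telescoping. The only cosmetic difference is that the paper absorbs the positive $\bbE[\|D_x(z^t,v^t,x^t)\|^2]$ terms using the explicit negative $-\frac\gamma2\bbE[\|D_x(z^t,v^t,x^t)\|^2]$ term retained in \cref{lemma:descent_lemma}, rather than expanding that quantity via $2g^t+2L_x^2(\delta_z^t+\delta_v^t)$ as you suggest; both bookkeepings close.
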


To prove the theorem, the idea is to control the distance from the memory to the current variables.
We define
$
S^t=\frac1n\sum_{i=1}^n \|y^t - w^t_i\|^2 + \frac1m\sum_{j=1}^m \|y^t - \tilde{w}^t_j\|^2 \enspace.
$
In appendix, we show that we can find scalars $\phi_s, \phi_z, \phi_v>0$ such that the quantity $\mathcal{L}^t = h^t + \phi_s S^t + \phi_z \delta_z^t + \phi_v\delta_v^t$ satisfies $\mathcal{L}^{t+1} \leq \mathcal{L}^t - \frac\gamma2 g^t$. Summing these inequalities for $t=1\dots T$ and using the fact that $\mathcal{L}^t$ is lower bounded demonstrates the theorem.

\new{
Note that the step sizes are constant with respect to the time, but they scale with $N^{-\frac23}$. As a consequence, the sample complexity is $O(N^{\frac23}\epsilon^{-1})$ which is analogous of the one of SAGA for non-convex single level problems \cite{Reddi2016}. This is better than the sample complexity of \cref{alg:sgd} with full batch directions, which is $O(N\epsilon^{-1})$. Hence, with SABA, we get the best of both worlds: the stochasticity makes the scaling in $N$ of the sample complexity goes from $N$ in full batch mode to $N^{\frac23}$ for SABA, and the variance reduction makes the scaling in $\epsilon$ goes from $\epsilon^{-2}$ for SOBA to $\epsilon^{-1}$ for SABA. Our experiments in \cref{sec:experiments} confirm this gain.
}

Furthermore, if we assume that $h$ satisfies a Polyak-\L ojasiewicz (PL) inequality, we recover linear convergence. Recall that $h$ has the PL property if there exists $\mu_h > 0$ such that for all $x \in \bbR^d$, $\frac12\|\nabla h(x)\|^2\geq \mu_h(h(x) - h^*)$ with $h^*$ the minimum of~$h$.
\begin{restatable}[Convergence of SABA, PL case]{shadedtheorem}{sabapl}
\label{th:cvg_saba_pl}
Assume that $h$ satisfies the PL inequality and that Assumptions \ref{ass:1} to \ref{ass:3} and \ref{assumption:bounded_gradients_x} to \ref{ass:indiv_lipschitz} hold. We suppose \new{$\rho = \rho'N^{-\frac23}$ and $\gamma = \xi \rho'N^{-1}$}, where $\rho'$ and $\xi$ depend only on $F$ and $G$ and are specified in appendix. Let $x^t$ the iterates of SABA and \new{$c' \triangleq \min\left(\mu_h, \frac1{16P'}\right)$ with $P'$ specified in the appendix}. Then,
$$\bbE[h^T] - h^* = (1-\new{c'}\gamma)^T(h^0-h^* + C^0)$$
where $C^0$ is a constant specified in appendix that depends on the initialization of $z, v, x$ and memory.
\end{restatable}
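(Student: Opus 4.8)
The plan is to reuse the Lyapunov-function machinery built for the smooth case (Theorem~\ref{th:cvg_saba_smooth}) and upgrade the resulting descent inequality to a linear rate using the PL inequality. Recall that the smooth-case argument produces scalars $\phi_s, \phi_z, \phi_v > 0$ and a Lyapunov function $\mathcal{L}^t = h^t + \phi_s S^t + \phi_z \delta_z^t + \phi_v \delta_v^t$ satisfying $\mathcal{L}^{t+1} \le \mathcal{L}^t - \frac\gamma2 g^t$. The key observation is that the underlying per-term inequalities actually give something stronger: from \cref{lemma:descent_lemma} the $h$-update contributes not only $-\frac\gamma2 g^t$ but also a negative term $-\frac\gamma2\bbE[\|D_x(z^t,v^t,x^t)\|^2]$, and the coupled inequalities of \cref{lemma:coupled_inequalities} contract $\delta_z^t$ and $\delta_v^t$ by multiplicative factors $(1-\frac{\rho\mu_G}{4})$ and $(1-\frac{\rho\mu_G}{8})$; similarly the memory term $S^t$ contracts geometrically (roughly at rate $1 - \Theta(1/N)$, since one of $N$ memory slots is refreshed per step). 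So instead of merely telescoping, I would keep track of these contraction factors.

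First I would write, for suitable constants, a refined one-step inequality of the form $\mathcal{L}^{t+1} \le \mathcal{L}^t - \frac\gamma2 g^t - a\,\phi_s S^t - b\,\phi_z\delta_z^t - b'\,\phi_v\delta_v^t$ where $a = \Theta(1/N)$ comes from the memory contraction and $b, b' = \Theta(\rho\mu_G)$ come from the strong-convexity contractions (after the positive $V_\bullet^t$ and $\|D_x\|^2$ terms coming from \cref{lemma:coupled_inequalities} are absorbed; this is exactly what fixing $\phi_s, \phi_z, \phi_v$ and the step-size ratios accomplishes, and is why we now need $\gamma = \xi\rho' N^{-1}$ rather than $\xi\rho' N^{-2/3}$ — the smaller $\gamma/\rho$ ratio makes the cross terms $\frac{\gamma^2}{\rho}\bbE[\|D_x\|^2]$ and $\gamma^2 V_x^t \le \gamma^2 B_x^2$ negligible against the contractions). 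Next I would apply the PL inequality $g^t \ge 2\mu_h(h^t - h^*)$ to the term $-\frac\gamma2 g^t \le -\mu_h\gamma(h^t-h^*)$. Combining, and choosing $c'\gamma \le \min(\mu_h\gamma, a, b, b')$ — which is why $c' = \min(\mu_h, \tfrac1{16P'})$ appears, with $P'$ packaging $\phi_s, \phi_z, \phi_v$ and the contraction constants — I get $\mathcal{L}^{t+1} - (h^*) \le (1 - c'\gamma)(\mathcal{L}^t - h^*)$, hence $\mathcal{L}^T - h^* \le (1-c'\gamma)^T(\mathcal{L}^0 - h^*)$. Since $h^T - h^* \le \mathcal{L}^T - h^*$ and $\mathcal{L}^0 - h^* = h^0 - h^* + \phi_s S^0 + \phi_z\delta_z^0 + \phi_v\delta_v^0 =: h^0 - h^* + C^0$, the claimed bound follows.

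The main obstacle I anticipate is the bookkeeping in the refined descent inequality: I must verify that a single set of constants $\phi_s, \phi_z, \phi_v$ simultaneously (i) absorbs the positive contributions — $2\rho^2 V_z^t$, $2\rho^2 V_v^t$, the $\beta_{zx}\gamma^2 V_x^t$ and $\overline\beta_{zx}\frac{\gamma^2}{\rho}\bbE[\|D_x\|^2]$ cross terms, the $\frac\gamma2 L_x^2(\delta_z^t+\delta_v^t)$ term from \cref{lemma:descent_lemma}, and the analogous cross terms entering the recursion for $S^t$ — into the available negative budget, and (ii) still leaves genuine geometric contraction in every coordinate. Because the memory contracts at rate $\Theta(1/N)$ while $\delta_z,\delta_v$ contract at rate $\Theta(\rho\mu_G) = \Theta(N^{-2/3})$, the slowest mode is the memory, but $\gamma = \Theta(N^{-1})$ makes $c'\gamma = \Theta(N^{-1})$ as well, so the budgets are compatible; checking the precise numerical constants (the factor $16$ in $c'$, the step-size constraint $\gamma^2 \le \min(\cdot)\rho$ from \cref{lemma:coupled_inequalities}, and the requirement $V_z^t \le L_z^2\,\delta_z^t$ plus $V_v^t \le L_v^2(\delta_z^t+\delta_v^t)$ from the variance-reduced structure — which replace the additive-noise terms of SOBA) is the routine-but-delicate part. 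The only genuinely new ingredient beyond the smooth-case proof is the single application of PL; everything else is a re-summation of inequalities already in hand.
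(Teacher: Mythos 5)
Your proposal is correct and follows essentially the same route as the paper: the paper forms the same Lyapunov function $\mathcal{L}^t = h^t + \phi_s S^t + \phi_z\delta_z^t + \phi_v\delta_v^t$, keeps the multiplicative contractions of $S^t$, $\delta_z^t$, $\delta_v^t$ from the memory recursion and the coupled inequalities, applies the PL inequality once to the $-\frac{\gamma}{2}g^t$ term in the $h$-descent, and then solves the same system of constraints (with $c = c'\gamma$, the tighter scaling $\gamma = \Theta(N^{-1})$ forced by the $N^{-1}$-rate memory contraction, and $c' = \min(\mu_h, \frac{1}{16P'})$) to obtain $\mathcal{L}^{t+1}\leq(1-c'\gamma)\mathcal{L}^t$, which unrolls to the stated bound with $C^0 = \phi_s S^0 + \phi_z\delta_z^0 + \phi_v\delta_v^0$. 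The only cosmetic difference is that the paper normalizes $h^* = 0$ rather than carrying $h^t - h^*$ through the recursion.
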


The proof is similar to that of the previous theorem: we find coefficients $\phi_s, \phi_z, \phi_v$ such that $\mathcal{L}^t= h^t + \phi_s S^t + \phi_z \delta_z^t + \phi_v\delta_v^t$ satisfies the inequality $\mathcal{L}^{t+1} \leq (1 - \new{c'}\gamma)\mathcal{L}^t$, which is then unrolled. Note that in the case where we initialize $z$ and $v$ with $z^0 = z^*(x^0)$, $v^0 =v^*(x^0)$, and the memories $w_i^0 = w^0$, $\tilde{w}_j^0=w^0$ for all $i, j$, the constant $C^0$ cancels and the bound simplifies to $\bbE[h(x^T)] -h^* \leq (1 - c'\gamma)^T(h(x^0) - h^*)$.

Just like classical variance reduction methods in single-level optimization, this theorem shows that our method achieves linear convergence under PL assumption on the value function. To the best of our knowledge, our method is the first stochastic bilevel optimization method that enjoys such property.
We note that the PL hypothesis is more general than $\mu_h$-strong convexity of $h$ -- it is a necessary condition for strong convexity.

We see here the importance of \emph{global} variance reduction. Indeed, using  variance reduction only on $z$ and SGD on $x$ would lead to sub-linear convergence in $x$. This would be the case even with a perfect estimation of $z^*(x)$. Similarly, using variance reduction only on $x$ and SGD on $z$ would lead to sub-linear convergence in $z$, and hence in $x$. Using global variance reduction with respect to each variable as we propose here is the only way to achieve linear convergence.
We now turn to experiments, where we find that our method is also promising from a practical point of view.
\section{Experiments}
\label{sec:experiments}

Here we compare the performances of SOBA and SABA with competitor methods on different tasks.

The different methods being compared are stocBiO \cite{Ji2021a}, AmiGO \cite{Arbel2022},
FSLA \cite{Li2022},
MRBO \cite{Yang2021}, TTSA \cite{Hong2021}, BSA \cite{Ghadimi2018} and SUSTAIN \cite{Khanduri2021}.
A detailed account of the experiments is provided in \cref{app:exp}.~\footnote{The code of the benchmark is available at \url{https://github.com/benchopt/benchmark_bilevel} and the results are displayed in \url{https://benchopt.github.io/results/benchmark_bilevel.html}.}

\begin{figure}[h]
        \centering
        \savebox{\largestimage}{\includegraphics[width=.345\linewidth]{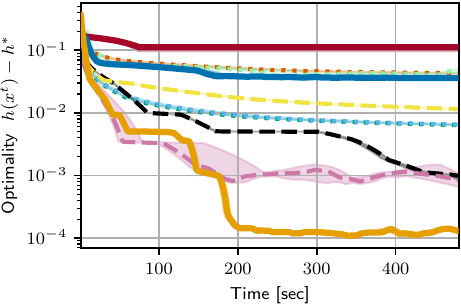}}
        \hfill
        \begin{subfigure}[b]{.345\linewidth}
            \centering
             \usebox{\largestimage}
            \caption{Logistic regression}
            \label{exp:ijcnn1}
        \end{subfigure}
        \hfill
        \begin{subfigure}[b]{.345\linewidth}
            \raisebox{\dimexpr.5\ht\largestimage-.5\height}{%
            \includegraphics[width=\linewidth]{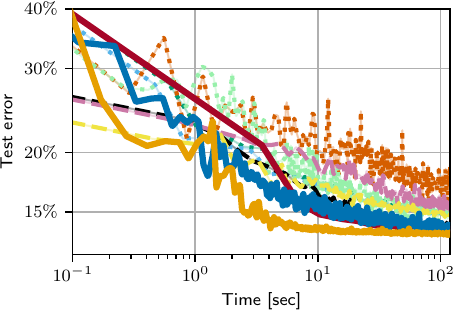}}
            \caption{Datacleaning}
            \label{exp:datacleaning_0-5}
        \end{subfigure}
        \hfill
        \begin{subfigure}[t]{.29\linewidth}
            \centering
            \raisebox{\dimexpr .8\ht\largestimage-.5\height}{%
            \includegraphics[width=\linewidth]{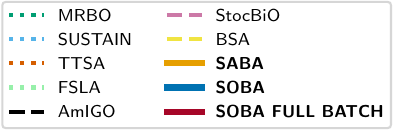}}
        \end{subfigure}
        \hfill

        \caption{Comparison of SOBA and SABA with other stochastic bilevel optimization methods. For each algorithm, we plot the median performance over 10 runs. In both experiments, SABA achieves the best performance. The dashed lines are for one loop competitor methods, the dotted lines are for two loops methods and the solid lines are  the proposed methods.
        \textbf{Left}: hyperparameter selection for $\ell^2$ penalized logistic regression on IJCNN1 dataset , \textbf{Right}: data hyper-cleaning on MNIST with $p=0.5$ corruption rate.
        }
        \label{fig:expes}
        \vspace{-1.5em}
\end{figure}

\subsection{Hyperparameters selection}

The first task we perform is hyperparameters selection to choose regularization parameters on $\ell^2$ logistic regression.
Let us denote $((d_i^\mathrm{train}, y_i^\mathrm{train}))_{1\leq i\leq n}$ and $((d_i^\mathrm{val}, y_i^\mathrm{val}))_{1\leq i\leq m}$ the training and the validation sets. In this case, the inner variable $\theta$ corresponds to the parameters of the model, and the outer variable $\lambda$ to the regularization. The functions $F$ and $G$ of the problem \eqref{eq:pb} are the logistic loss, with $\ell^2$ penalty for $G$, that is to say $F(\theta, \lambda) = \frac1m\sum_{i=1}^m \varphi( y^\mathrm{val}_i\langle d^\mathrm{val}_i, \theta\rangle)$ and $
G(\theta, \lambda) = \frac1n\sum_{i=1}^n \varphi(y^\mathrm{train}_i\langle  d^\mathrm{train}_i, \theta\rangle) + \frac12\sum_{k=1}^pe^{\lambda_k}\theta_k^2$
where $\varphi(u) = \log(1+e^{-u})$.
We fit a binary classification model on the  \texttt{IJCNN1}\footnote{\url{https://www.csie.ntu.edu.tw/~cjlin/libsvmtools/datasets/binary.html}} dataset. Here, $n= 49~990$, $m=91~701$ and $p=22$.

The suboptimality gap is plotted in \cref{exp:ijcnn1} for each method. The lowest values are reached by SABA. Moreover, SABA is the only single-loop method that reaches a suboptimality below $10^{-3}$. SOBA reaches a quite high final value but slightly better than TTSA and FSLA. The gap between SOBA and SABA highlights the benefits of variance reduction: it gives us a lower plateau and the fixed step sizes enable faster convergence.
\subsection{Data hyper-cleaning}

The second task we perform is data hyper-cleaning introduced in \cite{Franceschi2017} on the \texttt{MNIST}\footnote{\url{http://yann.lecun.com/exdb/mnist/}} dataset.
The data is patitioned into a training set $(d^\mathrm{train}_i, y^\mathrm{train}_i)$, a validation set $(d^\mathrm{val}_i, y^\mathrm{val}_i)$, and a test set. The training set contains 20000 samples, the validation set 5000 samples and the test set 10000 samples. The targets $y$ take values in $\{0,\dots,9\}$ and the samples $x$ are in dimension $784$.
Each sample in the training set is \emph{corrupted} with probability $p$: a sample is corrupted when we replace its label $y_i$ by a random label in $\{0,\dots, 9\}$. Samples in the validation and test sets are not corrupted.
The goal of datacleaning is to train a multinomial logistic regression on the train set and learn a weight per training sample, that should go to $0$ for corrupted samples.
This is formalized by the bilevel optimization problem \eqref{eq:pb} with $F(\theta, \lambda) = \frac1m\sum_{i=1}^m \ell(\theta d^\mathrm{val}_i, y^\mathrm{val}_i)$ and $G(\theta, \lambda) = \frac1n\sum_{i=1}^n \sigma(\lambda_i)\ell(\theta d^\mathrm{train}_i, y^\mathrm{train}_i) + C_r\|\theta\|^2$
where $\ell$ is the cross entropy loss and $\sigma$ is the sigmoid function.
The inner variable $\theta$ is a matrix of size $10\times 784$, and the outer variable $\lambda$ is a vector in dimension $n_\mathrm{train} = 20000$.

For the estimated parameters $\theta$ during optimization, we report in \cref{exp:datacleaning_0-5} the test error, \emph{i.e.}, the percent of wrong predictions on the testing data. We use for this experiment a corruption probability $p=0.5$. In general, the error decreases quickly until it reaches a final value. We observe that our method SABA outperforms all the other methods by reaching faster its smallest error, which is smaller than the ones of the other methods. For SOBA, it reaches a lower final error than stocBiO and BSA. In appendix, we provide other convergence curves, and find that for higher values of $p$, SABA is still the fastest algorithm to reach its final accuracy. Overall, we find that among all methods, even those that implement variance reduction (that is FSLA, MRBO, SUSTAIN, SABA), SABA is the one that demonstrates the best empirical performance.

\section{Conclusion}
In this paper, we have presented a framework for bilevel optimization that enables the straightforward development of stochastic algorithms.
The gist of our framework is that the directions in \cref{eq:direction_z,eq:direction_v,eq:direction_x} are all written as simple sums  of samples derivatives. We leveraged this fact to propose SOBA, an extension of SGD to our framework, and SABA, an extension of SAGA to our framework, which both achieve similar convergence rates as their single level counterparts. Finally, we think that our framework opens a large panel of potential methods for stochastic bilevel optimization involving techniques of extrapolation, variance reduction, momentum and so on.

\begin{ack}
We thank Othmane Sebbouh, Zaccharie Ramzi and Benoît Malézieux for their precious comments.
The authors acknowledge the support of the ANER RAGA BFC.
SV acknowledges the support of the ANR GraVa ANR-18-CE40-0005.
This work is supported by a public grant overseen by the French National Research Agency (ANR) through the program UDOPIA, project funded by the ANR-20-THIA-0013-01 and DATAIA convergence institute (ANR-17-CONV-0003). \end{ack}

\bibliography{biblio}
\bibliographystyle{plainnat}

\clearpage
\section*{Checklist}

\begin{enumerate}

\item For all authors...
\begin{enumerate}
  \item Do the main claims made in the abstract and introduction accurately reflect the paper's contributions and scope?
    \answerYes{See \cref{sec:theory} and \cref{sec:experiments}}
  \item Did you describe the limitations of your work?
    \answerYes{}
  \item Did you discuss any potential negative societal impacts of your work?
    \answerNA{}
  \item Have you read the ethics review guidelines and ensured that your paper conforms to them?
    \answerYes{}
\end{enumerate}

\item If you are including theoretical results...
\begin{enumerate}
  \item Did you state the full set of assumptions of all theoretical results?
    \answerYes{See \cref{subsec:assumptions}}
        \item Did you include complete proofs of all theoretical results?
    \answerYes{See \cref{app:proofs}}
\end{enumerate}

\item If you ran experiments...
\begin{enumerate}
  \item Did you include the code, data, and instructions needed to reproduce the main experimental results (either in the supplemental material or as a URL)?
    \answerYes{}
  \item Did you specify all the training details (e.g., data splits, hyperparameters, how they were chosen)?
    \answerYes{See \cref{app:exp}}
        \item Did you report error bars (e.g., with respect to the random seed after running experiments multiple times)?
    \answerYes{}
        \item Did you include the total amount of compute and the type of resources used (e.g., type of GPUs, internal cluster, or cloud provider)?
    \answerYes{See \cref{app:exp}}
\end{enumerate}

\item If you are using existing assets (e.g., code, data, models) or curating/releasing new assets...
\begin{enumerate}
  \item If your work uses existing assets, did you cite the creators?
    \answerYes{}
  \item Did you mention the license of the assets?
    \answerYes{}
  \item Did you include any new assets either in the supplemental material or as a URL?
    \answerYes{}
  \item Did you discuss whether and how consent was obtained from people whose data you're using/curating?
    \answerNA{}
  \item Did you discuss whether the data you are using/curating contains personally identifiable information or offensive content?
    \answerNA{}
\end{enumerate}

\item If you used crowdsourcing or conducted research with human subjects...
\begin{enumerate}
  \item Did you include the full text of instructions given to participants and screenshots, if applicable?
    \answerNA{}
  \item Did you describe any potential participant risks, with links to Institutional Review Board (IRB) approvals, if applicable?
    \answerNA{}
  \item Did you include the estimated hourly wage paid to participants and the total amount spent on participant compensation?
    \answerNA{}
\end{enumerate}

\end{enumerate}

\newpage

\appendix
\counterwithin{figure}{section}

\section{Extensive comparison between stochastic methods for bilevel optimization}

We provide here tables summarizing other methods in stochastic bilevel optimization. They are grouped between methods that are based on two nested loops and methods that use only one loop.

In the following tables, the inner iterations are referred with the variable $k$ %
 and the outer iterations are referred with the variable $t$ (or $T$ for the total number of iterations).

 In the literature, there are three main ways to perform Hessian inversion. The HIA, first proposed in \cite{Ghadimi2018}, and SHIA, proposed in \cite{Ji2021a}, procedures used for Hessian inversion are precised in \cref{alg:hia} and~\ref{alg:shia}. These methods are based on Neumann approximation of the inverse of a matrix. SGD for Hessian inversion refers to Stochastic Gradient Descent on $v\mapsto \frac12\langle \nabla^2_{11} G(z,x)v,v\rangle - \langle \nabla_1 F(z,x), v\rangle$. The complexity refers to the number of call to the oracles to get an $\epsilon$-stationary solution. In these complexities, the notation $\tilde{O}$ hide polynomial factors in $\log\epsilon^{-1}$.

\begin{algorithm}
    \begin{algorithmic}
       \STATE {\bfseries Input:} variables $z\in\bbR^p$, $x\in\bbR^d$, gradient $\nabla_1 F(z,x)\in\bbR^p$, maximum number of iterations $b$, a parameter $\eta$.
       \STATE Set $v^0 = \nabla_1 F(z,x)$
       \STATE Choose $p\in\{0,\dots,b-1\}$ randomly.
       \FOR{$k = 1,\dots, p$}
       \STATE
          Sample $i\in\setcomb{n}$

          Update $v$ : $v^{k+1} = (I - \eta\nabla_{11}^2 G(z, x))v^k$
       \ENDFOR
       \STATE {\bfseries Return:} $b\eta v_{p+1}$
    \end{algorithmic}
    \caption{Hessian Inverse Approximation (HIA)}
    \label{alg:hia}
\end{algorithm}

\begin{algorithm}
    \begin{algorithmic}
    \STATE {\bfseries Input:} variables $z\in\bbR^p$, $x\in\bbR^d$, gradient $\nabla_1 F(z,x)\in\bbR^p$, maximum number of iterations $b$, a parameter $\eta$.
    \STATE Set $v^0 = \nabla_1 F(z,x)$
    \STATE Set $s^0 = v^0$
    \FOR{$k=0\dots,b-1$}
    \STATE
        Sample $i\in\setcomb{n}$

        Update $v$: $v^{k+1} = (I - \eta\nabla_{11}^2 G(z, x))v^k$

        Update $s$: $s^{k+1} = s^k + v^{k+1}$
    \ENDFOR

    \STATE {\bfseries Return:} $\eta s^b$
    \end{algorithmic}
    \caption{Summed Hessian Inverse Approximation (SHIA)}
    \label{alg:shia}
\end{algorithm}

The momentum column refers to the use of STORM \cite{Cutkosky2019} momentum in the inner loop or the outer loop. This momentum can be applied to either the inner or the implicit gradient estimate. If we consider the current estimate $y^t = (z^t, v^t, x^t)$ and the previous estimate $y^{t-1} = (z^{t-1}, v^{t-1}, x^{t-1})$, and we apply STORM to the quantity $\phi(y^t)$ with the memory $\hat \phi^t$, the momentum update rule reads
\[
    \hat\phi^{(t+1)} = \eta \phi(y^t) + (1 - \eta) (\hat \phi^t + \phi(y^t) - \phi(y^{t-1}))\enspace .
\]
 Note that this update requires to evaluate the quantity $\phi$ twice per iteration, once in $y^t$ and once in $y^{t-1}$. The memory is need to store the previous estimates $y^{t-1}$ as well as the running estimate of the gradient $\hat \phi$.

\newcolumntype{F}[1]{>{\centering}m{#1}}
\begin{landscape}
\begin{table}
    \centering
        \begin{tabular}{|F{18ex}|F{10ex}|F{14ex}|F{9ex}|F{8ex}|F{8ex}|F{19ex}c|}
        \hline
            \rowcolor{gray!20}
            Method (Two-loops) &  Hessian inversion & Inner loop & Momentum & LR in & LR out & Complexity& \\
            \hline
            BSA \\\cite{Ghadimi2018}& HIA & SGD on inner & No &$O(k^{-1})$&$O(T^{-1/2})$&$O(\epsilon^{-3})$& \\
            \hline
            stocBiO \\ \cite{Ji2021a}& SHIA & SGD on inner & No & Constant &  Constant &$\tilde{O}(\epsilon^{-2})$& \\
            \hline
            VRBO \\ \cite{Yang2021}& SHIA & SPIDER on inner & Yes \\(SPIDER) & Constant &  Constant &$\tilde{O}(\epsilon^{-3/2})$& \\
            \hline
            AmIGO \\ \cite{Arbel2022}& SGD & SGD on inner & No & Constant &  Constant & $O(\epsilon^{-2})$& \\
            \hline
            \rowcolor{gray!20}
            Method (One-loop)&  Hessian inversion & Inner step & Momentum & LR in & LR out & Complexity& \\
            \hline
            TTSA\\ \cite{Hong2021} & HIA & SGD & No  &$O(T^{-2/5})$&$O(T^{-3/5})$&$\tilde{O}(\epsilon^{-5/2})$& \\
            \hline
            SMB\\ \cite{Guo2021a} & HIA & SGD with momentum & Yes &Constant&Constant&$\tilde{O}(\epsilon^{-4})$&  \\
            \hline
            MRBO \\ \cite{Yang2021} & SHIA & SGD with STORM & Yes \\ (STORM) &$O(t^{-1/3})$&$O(t^{-1/3})$&$\tilde{O}(\epsilon^{-3/2})$& \\
            \hline
            STABLE \\ \cite{Chen2022}& Direct & SGD & No &$O(T^{-1/2})$&$O(T^{-1/2})$&$O(\epsilon^{-2})$&  \\
            \hline
            SUSTAIN \\\cite{Khanduri2021}& HIA & SGD with STORM & Yes \\ (STORM) &$O(t^{-1/3})$&$O(t^{-1/3})$&$O(\epsilon^{-3/2})$& \\
            \hline
            SVRB \\ \cite{Guo2021} & Direct + momentum & SGD with momentum & Yes &$O(t^{-1/3})$&$O(t^{-1/3})$&$\tilde{O}(\epsilon^{-3})$&  \\
            \hline
            SBFW\\ \cite{Akhtar2021}& HIA & SGD & No &$O(t^{-1/2})$&$O(T^{-3/4})$&$\tilde{O}(\epsilon^{-4})$& \\
            \hline
            FSLA\\ \cite{Li2022}& SGD with STORM  & SGD with STORM & Yes (STORM) &$O(t^{-1/2})$&$O(T^{-1/2})$&$O(\epsilon^{-2})$& \\
            \hline
            \rowcolor{blue!10}
            \textbf{SOBA} & SGD step & SGD & No &$O(t^{-\new{1/2}})$&$O(t^{-\new{1/2}})$&$O(\epsilon^{-2})$&\\
            \hline
            \rowcolor{blue!10}
            \textbf{SABA} & SAGA step & SAGA & No &Constant&Constant&$\boldsymbol{O((n+m)^{\new{2/3}}\epsilon^{-1})}$&\\
            \hline
        \end{tabular}
        \vspace{1em}
    \caption{Comparison of the stochastic bilevel optimization solvers in the literature.\\The complexity represents the number of oracle calls necessary to attain an $\epsilon$ accurate stationary point.}
    \label{tab:comparison}
\end{table}
\end{landscape}

\section{Details on experiments}\label{app:exp}
We provide here additional informations on the experiments.

\subsection{Generalities}
All the experiments are performed with \texttt{Python}, using the package \texttt{Benchopt}~\cite{Moreau2022} and Numba~\cite{lam2015numba} for fast implementation of stochastic methods. For each problem, we use oracles for a function given function $f$ that $(f(z,x), \nabla_1 f(z, x), \nabla^2_{11} f(z,x)v, \nabla^2_{21} f(z,x)v)$ avoiding duplicate computation of intermediate results for these quantities.

We find that using mini-batches instead of individual samples to compute the stochastic estimates allowed for much faster computations, thanks to hardware acceleration and vectorization of the computations. We use continuous batches to avoid random memory access that slow down the computations. Concretely, if $i_b$ is the index of the current batch and $B$ is the batch-size, the indices of the corresponding samples are those in the set $\{i_b \times B,\dots, (i_b+1)\times B - 1\}$. By doing so, the samples in a same batch are contiguous in memory, which facilitates the access. We use a batch-size of 64 in all experiments.

For the methods involving an inner loop (stocBiO, BSA, AmIGO), we perform 10 inner steps at each outer iteration as proposed in the papers which introduced these methods. For the approximate Hessian vector product, we perform 10 steps per outer iteration for each methods using HIA (BSA, TTSA, SUSTAIN), SHIA (MRBO, stocBiO) or SGD (AmIGO) for the inversion of the linear system.

For the step sizes, they all have the form $\rho^t = \alpha/t^a$ and $\gamma^t = \beta/ t^b$. For the pair of exponents $(a,b)$, we choose the theoretical one from the original papers, that is $(1/2,1/2)$ for BSA and FSLA, $(1/3,1/3)$ for MRBO and SUSTAIN, $(0,0)$ for SABA, AmIGO and stocBiO, $(2/5, 3/5)$ for TTSA and SOBA. For $(\alpha, \beta)$, we perform a grid search (the grid is precised in the subsection dedicated to each experiment) and we keep for each method, the pair $(\alpha, \beta)$ that gives the lowest value of $h$ (for the hyperparameters) or the lowest test accuracy (for the data cleaning task) in median over 10 runs for each possible pair. When we use HIA or SHIA for the Hessian inversion, we set $\eta = \alpha$ since the Hessian inversion problem has the same conditioning as the inner optimization problem.

For the STORM's momentum parameter in MRBO and SUSTAIN, we take $0.5/t^{2/3}$.

\new{For SABA, we have to maintain the estimate $S[\phi,w]^i_t = \phi_i(w^{t+1}_i) - \phi_i(w^t_i) + \frac1n\sum_{i'=1}^n \phi_{i'}(w^t_{i'})$ of $\frac1n\sum_{i=1}^n \phi_i(y^t)$ (see \cref{sec:saba} for the notations). The sum inside $S$ is maintain by performing a rolling mean on the past gradients computed. More precisely, $A_t = \frac1n \sum_{i’=1}^n \phi_{i’}(w_{i’}^t)$. To get $A_{t+1}$, instead of computing the summing all the gradients stored, which has $O(n)$ computational complexity, we do $A_{t+1} = A_t + \frac1n (\phi_i(w^{t+1}_i) - \phi_i(w^t_i))$, which is equivalent mathematically but has $O(1)$ computational complexity.}

\new{
\subsection{Hyperparameter selection on a toy problem}
The \cref{fig:intro_fig} corresponds to the methods SABA et SOBA applied to an hyperparameter selection problem for a Ridge regression. We generate $1000$ samples $x_1,\dots,x_{1000}\in\bbR^10$ for $\calN(0, I_10)$. We generate a parameter $\beta\sim\calN(0, I_{10})$ and do $y = (X\odot W)\beta + \epsilon$ where $\epsilon \sim\calN(0, 0.01I_{10})$ and the entries of $W$ have the form $W_{i,j} = 1 + u_jv_{i,j}$ with $v_{i,j}\sim\calU([0, 1])$ and $u_j\sim\calU([0, 1])$ if $1\leq j \leq 5$ or $u_j\sim\calU([0, 10])$ if $6\leq j \leq 10$. Then we use 750 pairs $(x_i^\mathrm{train}, y_i^\mathrm{train})_{1\leq i\leq 750}$ as training samples and the remaining pairs $(x_i^\mathrm{val}, y_i^\mathrm{val})_{1\leq i\leq 250}$ as validation samples. Finally, we solve \eqref{eq:pb} with
$$F(\theta, \lambda) = \frac1{2n_\mathrm{val}}\sum_{i=1}^{n_\mathrm{val}}((x_i^\val)^\top\theta - y_i^\val)^2$$ and $$G(\theta, \lambda) = \frac1{2n_\mathrm{train}}\sum_{i=1}^{n_\mathrm{train}}((x_i^\train)^\top\theta - y_i^\train)^2+\frac\lambda2\|\theta\|^2$$
with $n_\mathrm{train} = 750$ and $n_\mathrm{val} = 250$.
}
\subsection{Hyperparameters selection on IJCNN1}\label{app:ijcnn}
In this experiment, we select the parameters regularization for a multiregularized logistic regression model precised in Equations~\eqref{eq:test_loss} and~\eqref{eq:train_loss} where we have one hyperparameter per feature
\begin{align}
    \label{eq:test_loss}
    F(\theta, \lambda) &= \frac1m\sum_{i=1}^m \varphi( y^\mathrm{val}_i\langle d^\mathrm{val}_i, \theta\rangle)\enspace\text{and} \\
\label{eq:train_loss}
G(\theta, \lambda) &= \frac1n\sum_{i=1}^n \varphi(y^\mathrm{train}_i\langle  d^\mathrm{train}_i, \theta\rangle) + \frac12\theta^\top\diag(e^{\lambda_1},\dots,e^{\lambda_p})\theta\enspace.
\end{align}
Note that the parametrization in $e^\lambda$  of the penalty instead of $\lambda$ can be surprising at first glance, but it is classical in the bilevel optimization literature \cite{Pedregosa2016, Ji2021a, Grazzi2021} because it avoids positivity constraints on $\lambda$.
In order to choose the select proper parameters $(\alpha,\beta)$ for each algorithm, we perform a grid search. We search $\alpha$ in a set of 9 values between $2^{-5}$ and $2^3$ spaced on a log scale. For $\beta$, we choose $r$ in a set of 7 values between $10^{-2}$ and $10$ spaced on a logarithmic scale and we set $\beta = \frac\alpha{r}$.

For this experiments, we use Just-In-Time (JIT) compilation thanks to the package Numba \cite{lam2015numba}, to decrease the python overhead in the iteration loop.

To evaluate the value function $h$, we use L-BFGS \cite{Liu1989} to solve compute $z^*(x^t)$ and then evaluate the function $h(x^t) = F(z^*(x^t), x^t)$.

\begin{figure}[h]
        \centering
        \savebox{\largestimage}{\includegraphics[width=.345\linewidth]{figures/ijcnn1_cr.pdf}}
        \hfill
        \begin{subfigure}[b]{.345\linewidth}
            \centering
             \usebox{\largestimage}
        \end{subfigure}
        \hfill
        \begin{subfigure}[b]{.345\linewidth}
            \raisebox{\dimexpr.5\ht\largestimage-.5\height}{%
            \includegraphics[width=\linewidth]{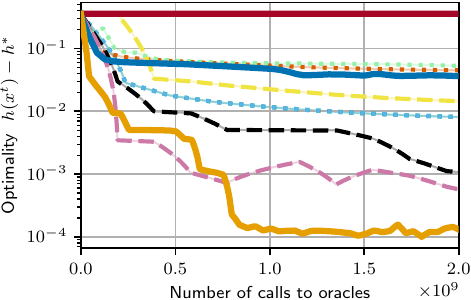}}
        \end{subfigure}
        \hfill
        \begin{subfigure}[t]{.29\linewidth}
            \centering
            \raisebox{\dimexpr .8\ht\largestimage-.9\height}{%
            \includegraphics[width=\linewidth]{figures/legend_neurips.pdf}}
        \end{subfigure}
        \hfill
        \caption{\new{Comparison of SOBA and SABA with other stochastic bilevel optimization methods in a problem of hyperparameter selection for $\ell^2$ penalized logistic regression on IJCNN1 dataset. For each algorithm, we plot the median performance over 10 runs. In both plots, SABA achieves the best performance. The dashed lines are for one loop competitor methods, the dotted lines are for two loops methods and the solid lines are  the proposed methods.
        \textbf{Left}: performance in running time, \textbf{Right}: performance in number of gradient/Hessian-vector products sampled.}
        }
        \label{app:fig:ijcnn}
\end{figure}

\subsection{Data hyper-cleaning}\label{app:cleaning}

For the regularization parameter $C_r$, we choose $C_r = 0.2$ after a manual search in order to get the best final test accuracy.

In this experiment, the selection of the good pair $(\alpha, \beta)$ is also performed by grid search. The parameter $\alpha$ is picked in a set of 11 numbers between $10^{-3}$ and $100$ spaced on a logarithmic scale. For $\beta$, we choose $r$ in a set of 11 values between $10^{-5}$ and $1$ spaced on a logarithmic scale and we set $\beta = \frac\alpha{r}$.

Note that in this case, we could not use JIT from Numba since at the moment of the experiment, the softmax function coming from Scipy was not compatible with Numba.

We report in \cref{fig:app:expes_app} some additional convergence curves with different corruption probabilities $p\in\{0.5, 0.7, 0.9\}$ (the figure in the main text corresponds to $p=0.5$). SABA is always the fastest algorithm to reach its final accuracy.

\begin{figure}[h]
        \centering
        \hfill
        \begin{subfigure}[b]{.3\linewidth}
            \centering
            \includegraphics[width=\linewidth]{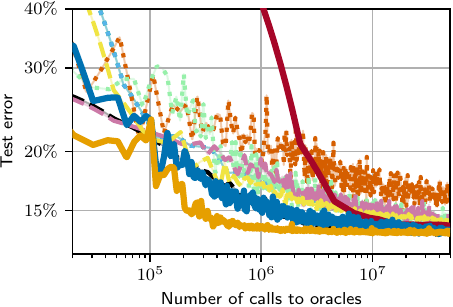}
        \end{subfigure}
        \hfill
        \begin{subfigure}[b]{.3\linewidth}
            \includegraphics[width=\linewidth]{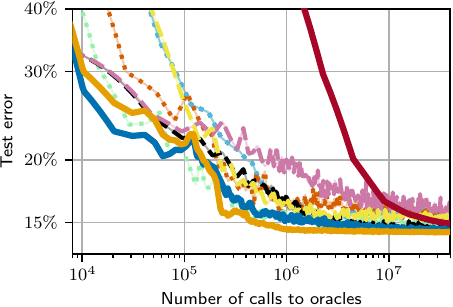}
        \end{subfigure}
        \hfill
        \begin{subfigure}[b]{.3\linewidth}
            \centering
            \includegraphics[width=\linewidth]{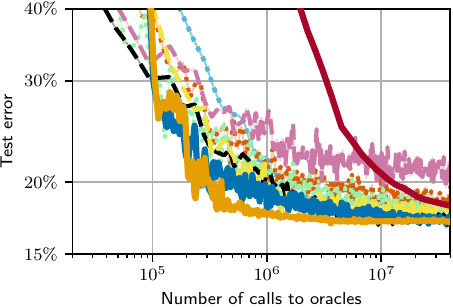}
        \end{subfigure}
        \hfill

        \hfill
        \begin{subfigure}[b]{.3\linewidth}
            \centering
            \includegraphics[width=\linewidth]{figures/datacleaning0_5_cr.pdf}
            \caption{$p=0.5$}
        \end{subfigure}
        \hfill
        \begin{subfigure}[b]{.3\linewidth}
            \includegraphics[width=\linewidth]{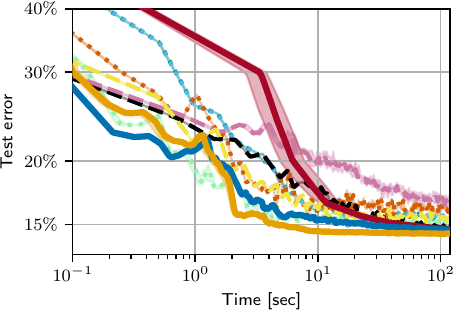}
            \caption{$p=0.7$}
        \end{subfigure}
        \hfill
        \begin{subfigure}[b]{.3\linewidth}
            \centering
            \includegraphics[width=\linewidth]{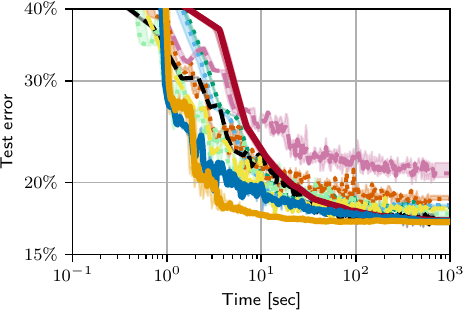}
            \caption{$p=0.9$}
        \end{subfigure}
        \hfill

        \vspace{1em}
        \begin{subfigure}[b]{.8\linewidth}
            \centering
            \includegraphics[width=\linewidth]{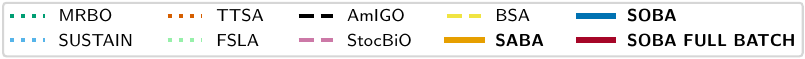}
        \end{subfigure}

        \caption{Datacleaning experiment, with different corruption probability (higher means that more data are contamined). \new{\textbf{Top: } Performance with respect to the number of gradient/Hessian-vector product sampled, \textbf{Bottom: } Performance with respect to running time}}
        \label{fig:app:expes_app}
\end{figure}

\new{
\subsection{Additional experiment: Hyperparameter selection on the covtype dataset}
We also perform an additional experiment which consists in selecting the best regularization parameter for a $\ell^2$-regularized multinomial logistic regression problem on the covtype dataset\footnote{\url{https://scikit-learn.org/stable/modules/generated/sklearn.datasets.fetch_covtype.html}}. This dataset contains $581,012$ samples with $p = 54$ features and there are $C = 7$ classes. We used $n = 371,847$ train samples, $m = 92,962$ samples and $n_\test = 116,203$ test samples. We fit a multiclass logistic regression on this dataset, with one hyperparameter per class. This means that, if $(d^\train_i, y^\train_i)_{i\in\setcomb{n}}$ and $(d^\val, y^\val)_{i\in\setcomb{m}}$ are respectively the training samples and the validation samples, we solve the Problem \eqref{eq:pb} with
\begin{align*}
F(\theta, \lambda) &= \frac1m\sum_{i=1}^m \ell(\theta d^\mathrm{val}_i, y^\mathrm{val}_i)\enspace\text{and}\\
G(\theta, \lambda) &= \frac1n\sum_{i=1}^n \ell(\theta d^\mathrm{train}_i, y^\mathrm{train}_i) + \sum_{c=1}^Ce^{\lambda_c}\sum_{i=1}^p\theta_{i,c}^2
\end{align*}
where $\theta\in\bbR^{p\times C}$ and $\lambda \in\bbR^C$.
}

\new{
As for the other experiments, we performed and grid search over 63 pairs $(\alpha, \beta)$ to set the step sizes. The parameter $\alpha$ is chosen among values between $2^{-5}$ and $2^3$ spaced in log scale. For $\beta$, we choose it in a set of values between $10^{-2}$ and $10$ spaced in log scale. We used a batch size of 64. The experiment took 525 CPU hours.
}

\new{
We show in \cref{app:fig:covtype} the error on the test samples with respect to the running time and the number of gradients/Hessian-vector products sampled. We observe that SABA and SOBA achieve the best performances. The initial gap between the first and the second plot for SABA is due to the overhead of the initialization of the memory. This gap can be reduced by increasing the batch size.
\begin{figure}[h]
        \centering
        \savebox{\largestimage}{\includegraphics[width=.345\linewidth]{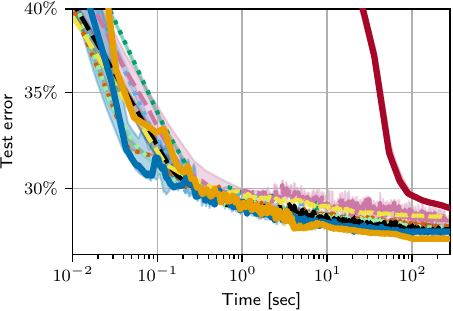}}
        \hfill
        \begin{subfigure}[b]{.345\linewidth}
            \centering
             \usebox{\largestimage}
        \end{subfigure}
        \hfill
        \begin{subfigure}[b]{.345\linewidth}
            \raisebox{\dimexpr.5\ht\largestimage-.5\height}{%
            \includegraphics[width=\linewidth]{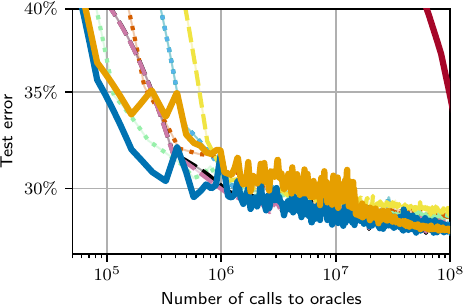}}
        \end{subfigure}
        \hfill
        \begin{subfigure}[t]{.29\linewidth}
            \centering
            \raisebox{\dimexpr .8\ht\largestimage-.9\height}{%
            \includegraphics[width=\linewidth]{figures/legend_neurips.pdf}}
        \end{subfigure}
        \hfill
        \caption{\new{Comparison of SOBA and SABA with other stochastic bilevel optimization methods in a problem of hyperparameter selection for $\ell^2$ penalized multical logistic regression on covtype dataset. For each algorithm, we plot the median performance over 10 runs. The dashed lines are for one loop competitor methods, the dotted lines are for two loops methods and the solid lines are  the proposed methods.
        \textbf{Left}: performance in running time, \textbf{Right}: performance in number of gradient/Hessian-vector products sampled.}
        }
        \label{app:fig:covtype}
\end{figure}
}
\newpage

\section{Proofs}\label{app:proofs}

\subsection{Proof of \cref{prop:zeros_directions}}
\begin{proof}
Let $(z,v,x)$ a zero of $(D_z, D_v, D_x)$.
For $D_z$, this means that $\nabla_1 G(z, x) = 0$. Since $G(\,\cdot\,, x)$ is strongly convex, $z$ is the minimizer of $G(\,\cdot\,, x)$, \textit{i.e.} $z=z^*(x)$.
The fact that $(z,v,x)$ is a zero of $D_v$ implies that $\nabla_{11}^2 G(z,x)v = - \nabla_1 F(z, x)$. Replacing $z$ by its value, we get $v = - \left[\nabla_{11}^2 G(z^*(x),x)\right]^{-1}\nabla_1 F(z^*(x), x)$ which is $v^*(x)$ by definition.
Putting all together and using the expression of $\nabla h(x)$ given by \eqref{eq:hgrad}, we get
$$D_x(z,v,x) = \nabla_2 F(z^*(x),x) + \nabla_{21} G(z^*(x),x)v^*(x) = \nabla h(x)\enspace .$$
On the other hand, $D_x(z,v,x) = 0$ so $\nabla h(x) = 0$.
\end{proof}

\subsection{Proof of \cref{lemma:smoothness}}

\begin{proof}
Let $(z, v, x)\in\bbR^p\times\bbR^p\times\bbR^d$. Using the fact that $\nabla_1 G(z^*(x), x) = 0$ and the $L^G_1$-smoothness of $G(\,\cdot\,, x)$, we have
\begin{align*}
\boxed{
    \| D_z(z,v,x)\|^2 = \| \nabla_1 G(z, x) - \nabla_1 G(z^*(x), x)\|^2 \leq L^2_G\| z - z^*(x)\|^2\enspace .
    }
\end{align*}

For $D_v$, since $\nabla^2_{11} G(z^*(x), x)v^*(x) = -\nabla_1 F(z^*(x), x)$, we write
\begin{align}
    \|D_v\| &= \| (\nabla_{11}^2 G(z,x) v + \nabla_1 F(z, x)) - (\nabla_{11}^2 G(z^*(x),x) v^*(x) + \nabla_1 F(z^*(x), x))\|\\
    &\leq \|[\nabla^2_{11} G(z,x) - \nabla^2_{11} G(z^*(x),x)] v^*(x)\|  +\| \nabla_{11}^2 G(z,x)[v-v^*(x)]\|\\\nonumber
    &\quad + \|\nabla_1 F(z,x) - \nabla_1 F(z^*(x), x)\|\enspace .
\end{align}

For the first term, we use the Lipschitz continuity of $\nabla^2_{11} G$:
$$
 \|[\nabla^2_{11} G(z,x) - \nabla^2_{11} G(z^*(x),x)] v^*(x)\| \leq L^G_2 \|z - z^*(x)\|\|v^*(x)\|\enspace .
$$
Then, since $G$ in $\mu_G$-strongly convex w.r.t. $z$, $\nabla_1 F(z^*(\cdot),\,\cdot)$ is bounded and $v^*(x) = -[\nabla^2_{11}G(z^*(x),x)]^{-1}\nabla_1 F(z^*(x), x)$, we have
\begin{equation}\label{app:eq:bound_hess_g_v_star}
\|[\nabla^2_{11} G(z,x) - \nabla^2_{11} G(z^*(x),x)] v^*(x)\|\leq \frac{L^G_2 C_F}{\mu_G} \|z - z^*(x)\|\enspace .
\end{equation}

For the second term, we use the $L^G_1$-smoothness of $G(\,\cdot\,, x)$ and for the third term, we use the $L^F_1$-smoothness of $F$ and we finally get

\begin{align}
    \| D_v\| \leq \left(\frac{L^G_2 C_F}{\mu_G}+L^F_1\right) \|z - z^*(x)\| + L^G_1\|v - v^*(x)\| \enspace .
\end{align}

Then, taking $L_v = \sqrt{2}\max\left(\frac{L^G_2 C_F}{\mu_G}+L^F, L^G_1\right)$, we get
\begin{align}
\boxed{
    \| D_v(z, v, x)\|^2 \leq L_v^2(\|z-z^*(x)\|^2 + \|v-v^*(x)\|^2)\enspace .}
\end{align}

For $D_x(z,v,x) - \nabla h(x)$ we start by writing

\begin{align}
    \|D_x(z,v,x) - \nabla h(x)\| &\leq \| \nabla_2 F(z, x) - \nabla_2 F(z^*(x), x)\| + \| \nabla_{21}^2 G(z,x)v - \nabla_{21}^2 G(z^*(x), x)v^*(x)\|\\
    &\leq \| \nabla_2 F(z, x) - \nabla_2 F(z^*(x), x)\| + \|\nabla_{21}^2 G(z,x)\| \|v-v^*(x)\|\\\nonumber
    &\qquad + \|v^*(x)\|\|\nabla_{21}^2 G(z,x) - \nabla_{21}^2 G(z^*(x), x)\|\enspace .
\end{align}

We bound the first term using the fact that $\nabla_2 F$ is $L^F_1$-Lipschitz continuous. For the second term, the fact that $\nabla_{21}^2 G$ is bounded thanks to the Lipschitz continuity of $\nabla_1 G(z, \,\cdot\,)$. For the third term, we use that $\nabla_{21}^2 G(\,\cdot\,,x)$ is $L^G_2$-Lipschitz continuous and the same derivation as \cref{app:eq:bound_hess_g_v_star}. We finally get

\begin{align}
    \| D_x - \nabla h(x)\| \leq \left(L^F_1+\frac{C_FL^G_2}{\mu_G}\right)\|z-z^*(x)\| + L^G_1\|v-v^*(x)\|\enspace .
\end{align}

Taking $L_x = \sqrt{2}\max\left(L^F_1+\frac{C_FL^G_2}{\mu_G}, L^G_1\right)$ yields
\begin{equation}
\boxed{
     \| D_x(z, v, x) - \nabla h(x)\|^2 \leq L_x^2 (\|z-z^*(x)\|^2 + \|v-v^*(x)\|^2)\enspace .}
\end{equation}
\end{proof}

\subsection{Smoothness constant of $h$}\label{app:cst_h}
From \citet[Lemma 2.2]{Ghadimi2018}, we get the \cref{lemma:descent_lemma} which states the $L^h$-smoothness of $h$ with $$
L^h = L^F_1+ \frac{2L^F_1L^G_2+C_F^2 L^G_2}{\mu_G}
+\frac{L_{11}^GL^G_1C_F+L^G_1L^G_2C_F + (L^G_1)^2L^F_1}{\mu_G^2}
+\frac{(L^G_1)^2L^G_2C_F}{\mu_G^3}\enspace.
$$

\subsection{Lemmas on the regularity of $z^*$ and $v^*$}

We start by showing the Lipschitz continuity of $z^*$ and $v^*$.

\begin{lemma}\label{lemma:smoothness_star}
There exists a constant $L_*>0$ such that for any $x_1, x_2\in\bbR^d$ we have
$$
\|z^*(x_1)-z^*(x_2)\|\leq L_*\|x_1-x_2\|,\quad \|v^*(x_1)-v^*(x_2)\|\leq L_*\|x_1-x_2\|\enspace .
$$
\end{lemma}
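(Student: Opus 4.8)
The plan is to prove the two Lipschitz bounds in sequence: first for $z^*$, using only the strong convexity of $G(\,\cdot\,,x)$ and the joint Lipschitz continuity of $\nabla G$; then for $v^*$, by writing $v^*$ through the explicit formula \eqref{eq:v_star_def} and controlling each factor, reusing the bound on $z^*$. Finally $L_*$ is taken to be the maximum of the two constants produced.

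\emph{Step 1 (Lipschitz continuity of $z^*$).} Fix $x_1,x_2\in\bbR^d$. By definition of $z^*$ and strong convexity, $\nabla_1 G(z^*(x_1),x_1)=\nabla_1 G(z^*(x_2),x_2)=0$. Using $\mu_G$-strong convexity of $G(\,\cdot\,,x_1)$,
\begin{align*}
\mu_G\|z^*(x_1)-z^*(x_2)\|^2
&\leq \langle \nabla_1 G(z^*(x_1),x_1)-\nabla_1 G(z^*(x_2),x_1),\, z^*(x_1)-z^*(x_2)\rangle\\
&= \langle \nabla_1 G(z^*(x_2),x_2)-\nabla_1 G(z^*(x_2),x_1),\, z^*(x_1)-z^*(x_2)\rangle,
\end{align*}
where the equality uses $\nabla_1 G(z^*(x_1),x_1)=0$ and $\nabla_1 G(z^*(x_2),x_2)=0$. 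Since $\nabla G$ is $L^G_1$-Lipschitz (Assumption~\ref{ass:2}), the first factor has norm at most $L^G_1\|x_1-x_2\|$, and Cauchy–Schwarz gives $\|z^*(x_1)-z^*(x_2)\|\leq \tfrac{L^G_1}{\mu_G}\|x_1-x_2\|$.

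\emph{Step 2 (Lipschitz continuity of $v^*$).} Write $H(x)=\nabla_{11}^2 G(z^*(x),x)$ and $b(x)=\nabla_1 F(z^*(x),x)$, so that $v^*(x)=-H(x)^{-1}b(x)$. Inserting $\pm H(x_1)^{-1}b(x_2)$ yields
$$
v^*(x_1)-v^*(x_2)=H(x_1)^{-1}\bigl(b(x_2)-b(x_1)\bigr)+\bigl(H(x_2)^{-1}-H(x_1)^{-1}\bigr)b(x_2)\enspace.
$$
For the first term: $\|H(x_1)^{-1}\|\leq \mu_G^{-1}$ by strong convexity, and by $L^F_1$-Lipschitzness of $\nabla F$ together with Step 1, $\|b(x_1)-b(x_2)\|\leq L^F_1\bigl(\|z^*(x_1)-z^*(x_2)\|+\|x_1-x_2\|\bigr)\leq L^F_1\bigl(1+\tfrac{L^G_1}{\mu_G}\bigr)\|x_1-x_2\|$ (up to the harmless constant coming from the norm chosen on $\bbR^p\times\bbR^d$). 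For the second term: $\|b(x_2)\|\leq C_F$ by Assumption~\ref{ass:3}; and by the resolvent identity $H(x_2)^{-1}-H(x_1)^{-1}=H(x_2)^{-1}\bigl(H(x_1)-H(x_2)\bigr)H(x_1)^{-1}$ together with $L^G_2$-Lipschitzness of $\nabla^2 G$ and Step 1,
$$
\|H(x_2)^{-1}-H(x_1)^{-1}\|\leq \frac{1}{\mu_G^2}\,\|H(x_1)-H(x_2)\|\leq \frac{L^G_2}{\mu_G^2}\Bigl(1+\frac{L^G_1}{\mu_G}\Bigr)\|x_1-x_2\|\enspace.
$$
Summing the two contributions gives $\|v^*(x_1)-v^*(x_2)\|\leq \Bigl(\tfrac{L^F_1}{\mu_G}+\tfrac{C_F L^G_2}{\mu_G^2}\Bigr)\bigl(1+\tfrac{L^G_1}{\mu_G}\bigr)\|x_1-x_2\|$. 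Taking $L_*$ to be the maximum of $\tfrac{L^G_1}{\mu_G}$ and this last constant concludes.

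\emph{Main obstacle.} There is no deep difficulty; the only care required is in Step 2, where one must control the variation of the matrix inverse $[\nabla_{11}^2 G(z^*(x),x)]^{-1}$. This is handled by the resolvent identity combined with the uniform spectral lower bound $\mu_G$ on the Hessian and the already-established Lipschitz continuity of $z^*$, which is why the smoothness of $z^*$ needs to be proved first.
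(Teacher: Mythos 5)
Your proof is correct and, for the $v^*$ bound, follows essentially the same route as the paper: the same splitting of $v^*(x_1)-v^*(x_2)$ into a term controlled by the Lipschitzness of $\nabla_1 F$ and a term controlled via the resolvent identity $A^{-1}-B^{-1}=A^{-1}(B-A)B^{-1}$, the spectral bound $\|H(x)^{-1}\|\leq\mu_G^{-1}$, the bound $\|b(x)\|\leq C_F$ from Assumption~\ref{ass:3}, and the previously established Lipschitz constant of $z^*$; you even arrive at the same constants. The only genuine divergence is Step 1: the paper bounds $\|\diff z^*(x)\|$ through the implicit-function-theorem formula $\diff z^*(x)=-[\nabla_{11}^2G(z^*(x),x)]^{-1}\nabla_{12}^2G(z^*(x),x)$, whereas you use the strong-monotonicity inequality together with the first-order optimality conditions $\nabla_1G(z^*(x_i),x_i)=0$. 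Your variant is slightly more elementary in that it does not presuppose differentiability of $z^*$, while the paper's version has the advantage of directly producing the Jacobian expression that is reused later (e.g.\ in \cref{app:lemma:smoothness_z_star} and \cref{app:lemma:smoothness_v_star}); both yield the same constant $L^G_1/\mu_G$.
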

\begin{proof}
 Let $x\in\bbR^d$. The Jacobian of $z^*$ is given by $\diff z^*(x) = -[\nabla_{11}^2 G(z^*(x), x)]^{-1}\nabla^2_{1,2} G(z^*(x), x)$. Thanks to the $\mu_G$-strong convexity of $G$ and the fact that $\nabla_{21}^2 G$ is bounded, we have
 $
 \|\diff z^*(x)\|\leq \frac{L^G_1}{\mu_G}\enspace.
 $
 Thus, $z^*$ is Lipschitz continuous.

 For $\norme{v^*(x_1)-v^*(x_2)}$, we start from the definition $v^*$:
    \begin{align}
        \|v^*(x_1)-v^*(x_2)\| &= \|[\nabla^2_{11} G(z^*(x_1), x_1)]^{-1}\nabla_1 F(z^*(x_1), x_1) - [\nabla^2_{11} G(z^*(x_2), x_2)]^{-1}\nabla_1 F(z^*(x_2), x_2)\|\\
        &\leq  \|([\nabla^2_{11} G(z^*(x_1), x_1)]^{-1}-[\nabla^2_{11} G(z^*(x_2), x_2)]^{-1}\nabla_1 F(z^*(x_1), x_1)\|\\\nonumber
        &\qquad + \|[\nabla^2_{11} G(z^*(x_2), x_2)]^{-1}(\nabla_1 F(z^*(x_2), x_2)-\nabla_1 F(z^*(x_1), x_1))\|\enspace .
    \end{align}

For the first term, we use that for any invertible matrix $A$ and $B$ we have $A^{-1}-b^{-1} = A^{-1}(B-A)B^{-1}$ to get

\begin{align*}
\|[\nabla^2_{11} G(z^*(x_1), x_1)]^{-1}-\nabla^2_{11} G(z^*(x_2), x_2)]^{-1}\| &= \|[\nabla^2_{11} G(z^*(x_1), x_1)]^{-1}(\nabla^2_{11} G(z^*(x_2), x_2)]-\\\nonumber
&\qquad\nabla^2_{11} G(z^*(x_1), x_1)])[\nabla^2_{11} G(z^*(x_2), x_2)]^{-1}\|\\
&\leq \frac{1}{\mu_G^2} \|\nabla^2_{11} G(z^*(x_1), x_1)-\nabla^2_{11} G(z^*(x_2), x_2)\|\\
&\leq \frac{L^G_2}{\mu_G^2}\|(z^*(x_1),x_1) - (z^*(x_2),x_2)\|\\
&\leq \frac{L^G_2}{\mu_G^2}[\|z^*(x_1) - z^*(x_2)\| + \| x_1 -  x_2\|]\\
&\leq \frac{L^G_2}{\mu_G^2}\left[1 +\frac{L^G_1}{\mu_G}\right]\| x_1 -  x_2\|\enspace .
\end{align*}

And then, since $\nabla_1 F(z^*(\,\cdot\,), \,\cdot\,)$ is bounded:
$$
\norme{([\nabla^2_{11} G(z^*(x_1), x_1)]^{-1}-[\nabla^2_{11} G(z^*(x_2), x_2)]^{-1}\nabla_1 F(z^*(x_1), x_1)}\leq \frac{C_FL^G_2}{\mu_G^2}\left[1 +\frac{L^G_1}{\mu_G}\right]\| x_1 -  x_2\|\enspace .
$$

For the second term, the strong convexity of $G(\,\cdot\,, x)$ and the fact that $\nabla_1 F$ is Lipschitz continuous lead to
\begin{align}
    \|[\nabla^2_{11} G(z^*(x_2), x_2)]^{-1}(\nabla_1 F(z^*(x_2), x_2)-\nabla_1 F(z^*(x_1), x_1))\|&\leq \frac{1}{\mu_G}\|\nabla_1 F(z^*(x_2), x_2)-\nabla_1 F(z^*(x_1), x_1)\|\\
    &\leq \frac{L^F_1}{\mu_F}\|(z^*(x_1),x_1) - (z^*(x_2),x_2)\|\\
    &\leq \frac{L^F_1}{\mu_G}[\|z^*(x_1)-z^*(x_2)\| + \|x_1-x_2\|]\\
    &\leq \frac{L^F_1}{\mu_G}\left[1 +\frac{L^G_1}{\mu_G}\right]\| x_1 -  x_2\|\enspace .
\end{align}

Then we get
\begin{align}
        \|v^*(x_1)-v^*(x_2)\|\leq \left[ \frac{C_FL^G_2}{\mu_G^2}\left[1 +\frac{L^G_1}{\mu_G}\right]+\frac{L^F_1}{\mu_G}\left[1 +\frac{L^G_1}{\mu_G}\right]\right]\|x_1-x_2\|\enspace.
\end{align}

We conclude by setting
$$
L_* = \max\left(\frac{L^G_1}{\mu_G}, \frac{C_FL^G_2}{\mu_G^2}\left[1 +\frac{L^G_1}{\mu_G}\right]+\frac{L^F_1}{\mu_G}\left[1 +\frac{L^G_1}{\mu_G}\right]\right)\enspace .
$$
\end{proof}

In what follows, we denote by $\bbE_t[\,\cdot\,]$ the expectation conditionally on $z^t$, $v^t$ and $x^t$.

\new{
We have the smoothness property of $z^*$ provided in \citep[Lemma 2]{Chen2021b}.
\begin{lemma}\label{app:lemma:smoothness_z_star}
    Under the Assumptions \ref{ass:1}, \ref{ass:2} and \ref{ass:3}, the function $z^*:\bbR^d\to\bbR^p$ is $L_{zx}$-smooth with
    \begin{equation}\label{eq:smoothness_z_star}
        L_{zx} = \frac{L^G_2(1+L_*)}{\mu_G}+\frac{L_1^GL_{11}^G(1+L_*)}{\mu_G^2}\enspace.
    \end{equation}
\end{lemma}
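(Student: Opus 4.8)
The plan is to differentiate the implicit characterization of $z^*$ and show the resulting Jacobian map is Lipschitz. Recall from the proof of \cref{lemma:smoothness_star} that the implicit function theorem gives, for every $x\in\bbR^d$,
$$\diff z^*(x) = -\left[\nabla_{11}^2 G(z^*(x),x)\right]^{-1}\nabla_{12}^2 G(z^*(x),x)\enspace.$$
To bound $\|\diff z^*(x_1)-\diff z^*(x_2)\|$ I would fix $x_1,x_2\in\bbR^d$, abbreviate $A_i=\nabla_{11}^2 G(z^*(x_i),x_i)$ and $B_i=\nabla_{12}^2 G(z^*(x_i),x_i)$, and use the splitting
$$\diff z^*(x_1)-\diff z^*(x_2) = -A_1^{-1}(B_1-B_2) - \left(A_1^{-1}-A_2^{-1}\right)B_2\enspace.$$

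First I would record the elementary facts needed. By $\mu_G$-strong convexity of $G(\cdot,x)$ (\cref{ass:2}), $\|A_i^{-1}\|\le 1/\mu_G$, and the resolvent identity $A_1^{-1}-A_2^{-1}=A_1^{-1}(A_2-A_1)A_2^{-1}$ gives $\|A_1^{-1}-A_2^{-1}\|\le \mu_G^{-2}\|A_1-A_2\|$. Since $\nabla G$ is $L_1^G$-Lipschitz, one has $\|\nabla^2 G\|\le L_1^G$ everywhere, hence in particular $\|B_2\|\le L_1^G$. Finally, since $z^*$ is $L_*$-Lipschitz (\cref{lemma:smoothness_star}), any quantity that is $\ell$-Lipschitz in $(z,x)$ and evaluated at the points $(z^*(x_i),x_i)$ has its two evaluations differing by at most $\ell(1+L_*)\|x_1-x_2\|$, because $\|(z^*(x_1),x_1)-(z^*(x_2),x_2)\|\le \|z^*(x_1)-z^*(x_2)\|+\|x_1-x_2\|\le (1+L_*)\|x_1-x_2\|$.

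Then I would bound the two terms separately. For the first, using that the block $\nabla_{12}^2 G$ is $L_2^G$-Lipschitz (it is a submatrix of the $L_2^G$-Lipschitz Hessian $\nabla^2 G$), $\|A_1^{-1}(B_1-B_2)\|\le \mu_G^{-1}\|B_1-B_2\|\le \frac{L_2^G(1+L_*)}{\mu_G}\|x_1-x_2\|$. For the second, using the Lipschitz constant $L_{11}^G$ of $\nabla_{11}^2 G$ and $\|B_2\|\le L_1^G$, $\|(A_1^{-1}-A_2^{-1})B_2\|\le \mu_G^{-2}\|A_1-A_2\|\,\|B_2\|\le \frac{L_1^G L_{11}^G(1+L_*)}{\mu_G^2}\|x_1-x_2\|$. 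Adding these yields $\|\diff z^*(x_1)-\diff z^*(x_2)\|\le L_{zx}\|x_1-x_2\|$ with $L_{zx}=\frac{L_2^G(1+L_*)}{\mu_G}+\frac{L_1^G L_{11}^G(1+L_*)}{\mu_G^2}$, which is \eqref{eq:smoothness_z_star}.

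I do not expect a genuine obstacle: this is a routine perturbation/resolvent argument. The only points needing care are (i) deriving $\|\nabla_{12}^2 G\|\le L_1^G$ from the Lipschitz continuity of $\nabla G$ rather than postulating it, and (ii) correctly propagating the Lipschitz constant of $z^*$ through the composition so that the joint displacement in $(z,x)$ contributes the factor $(1+L_*)$ and nothing larger.
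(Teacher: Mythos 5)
Your argument is correct, and the constant you obtain matches \eqref{eq:smoothness_z_star} exactly. Note that the paper does not actually prove this lemma itself — it imports it from \citet[Lemma 2]{Chen2021b} — so your derivation is a self-contained replacement for that citation. The route you take (differentiate the implicit equation, then split $\diff z^*(x_1)-\diff z^*(x_2)$ into a term controlled by the Lipschitzness of $\nabla^2_{12}G$ and a term controlled via the resolvent identity $A_1^{-1}-A_2^{-1}=A_1^{-1}(A_2-A_1)A_2^{-1}$) is precisely the technique the paper itself uses to establish the Lipschitz continuity of $v^*$ in the proof of \cref{lemma:smoothness_star}, so it is fully consistent with the surrounding development; the two points you flag as needing care ($\|\nabla^2_{12}G\|\leq L^G_1$ from the Lipschitzness of $\nabla G$, and the $(1+L_*)$ factor from the joint displacement in $(z,x)$) are handled correctly.
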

}

\new{
We establish the same result for $v^*$. To this, we need more regularity on $G$ and $F$.
\begin{lemma}\label{app:lemma:smoothness_v_star}
    The function $v^*:\bbR^d\to\bbR^p$ is differentiable and its differential is defined for any $x, \epsilon \in\bbR^d$ by:
    \begin{align}\label{eq:diff_v_star}
        \diff v^*(x).\epsilon &= [\nabla^2_1G(z^*(x),x)]^{-1}[\nabla^2_{11}F(z^*(x), x)\diff z^*(x).\epsilon + \nabla^2_{12}F(z^*(x), x).\epsilon]\\\nonumber
        &\qquad - [\nabla^2_1G(z^*(x),x)]^{-1}[(\nabla^3_{111}G(z^*(x),x)|\diff z^*(x).\epsilon) + (\nabla^3_{112}G(z^*(x),x)|\epsilon)]\\\nonumber
        &\qquad \times[\nabla^2_1G(z^*(x),x)]^{-1}\nabla_1F(z^*(x), x)
    \end{align}
    where for any $z, \alpha\in\bbR^p$ and $x\in\bbR^d$, $(\nabla^3_{111}G(z,x)|\alpha)\in\bbR^{p\times p}$ is defined by
    $$
    (\nabla^3_{111}G(z,x)|\alpha) = \left[\sum_{k=1}^p\frac{\partial^3 G}{\partial z_i\partial z_j\partial z_k}(z, x)\alpha_k\right]_{1\leq i,j\leq p}
    $$
    and for any $\beta\in\bbR^d$, $(\nabla^3_{112}G(z,x)|\beta)\in\bbR^{p\times p}$ is defined by
    $$
    (\nabla^3_{112}G(z,x)|\beta) = \left[\sum_{k=1}^p\frac{\partial^3 G}{\partial z_i\partial z_j\partial x_k}(z, x)\beta_k\right]_{1\leq i,j\leq p}\enspace .
    $$
    Moreover, $\diff v^*$ is $L_{vx}$-Lipschitz continuous.
\end{lemma}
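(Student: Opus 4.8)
The plan is to treat $v^*$ as an explicit composition of smooth maps, differentiate it term by term via the chain and product rules, and then establish boundedness and Lipschitz continuity of each factor appearing in the resulting formula. First I would write $v^*(x) = -\big[\nabla^2_{11}G(z^*(x),x)\big]^{-1}\nabla_1F(z^*(x),x)$ and recall that $x\mapsto(z^*(x),x)$ is differentiable: by the implicit function theorem applied to $\nabla_1 G(z^*(x),x)=0$ — valid since $G$ is three times continuously differentiable and $\nabla^2_{11}G$ is invertible by $\mu_G$-strong convexity — $z^*$ is differentiable with $\diff z^*(x) = -[\nabla^2_{11}G(z^*(x),x)]^{-1}\nabla^2_{12}G(z^*(x),x)$, the expression already used in the proof of \cref{lemma:smoothness_star}, and $\diff z^*$ is $L_{zx}$-Lipschitz by \cref{app:lemma:smoothness_z_star}. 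Since $F$ is twice differentiable, $G$ is three times continuously differentiable, and matrix inversion $A\mapsto A^{-1}$ is smooth on the open set of invertible matrices, the product and composition defining $v^*$ is differentiable. Applying the product rule together with $\diff(A^{-1}).\epsilon = -A^{-1}(\diff A.\epsilon)A^{-1}$ and the chain rule to $x\mapsto\nabla^2_{11}G(z^*(x),x)$ and $x\mapsto\nabla_1F(z^*(x),x)$ — where differentiating $\nabla^2_{11}G(z^*(x),x)$ produces exactly the two third-order terms $(\nabla^3_{111}G(z^*(x),x)|\diff z^*(x).\epsilon)$ and $(\nabla^3_{112}G(z^*(x),x)|\epsilon)$ — yields the claimed formula \eqref{eq:diff_v_star}.

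For the Lipschitz continuity of $\diff v^*$, I would observe that \eqref{eq:diff_v_star} expresses $\diff v^*(x)$ as a finite sum of products of matrix/vector-valued functions of the form $\Phi\circ(z^*,\id)$ and $[\nabla^2_{11}G\circ(z^*,\id)]^{-1}$, and argue that each such factor is simultaneously bounded and Lipschitz. Indeed: $\|[\nabla^2_{11}G(z^*(x),x)]^{-1}\|\leq \mu_G^{-1}$ by strong convexity, and this map is Lipschitz because $\nabla^2_{11}G$ is $L^G_2$-Lipschitz (\cref{ass:2}), $(z^*,\id)$ is $(1+L_*)$-Lipschitz (\cref{lemma:smoothness_star}), and $A\mapsto A^{-1}$ satisfies $\|A^{-1}-B^{-1}\|\le\mu_G^{-2}\|A-B\|$ on matrices with inverse bounded by $\mu_G^{-1}$ (the estimate already used in \cref{lemma:smoothness_star}); $\diff z^*$ is bounded by $L^G_1/\mu_G$ and Lipschitz by \cref{app:lemma:smoothness_z_star}; $\nabla_1F(z^*(x),x)$ is bounded by $C_F$ (\cref{ass:3}) and Lipschitz since $\nabla F$ is $L^F_1$-Lipschitz and $(z^*,\id)$ is Lipschitz; and the second derivatives of $F$ and the third derivatives of $G$ evaluated at $(z^*(x),x)$ are bounded and Lipschitz, using that $\nabla^2F$ and $\nabla^3G$ are Lipschitz (\cref{ass:1,ass:2}) composed with the Lipschitz curve $(z^*,\id)$, together with the boundedness of $v^*$ coming from \cref{ass:3} and strong convexity. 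Since a product of uniformly bounded Lipschitz functions is Lipschitz and a finite sum of Lipschitz functions is Lipschitz, $\diff v^*$ is Lipschitz; propagating the constants through the product rule produces an explicit $L_{vx}$ in terms of $\mu_G$, $C_F$, $L_*$ and the Lipschitz constants $L^F_1,L^F_2,L^G_1,L^G_2,L^G_3$.

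I expect the main obstacle to be the bookkeeping rather than any conceptual difficulty: one must isolate each of the several summands generated by differentiating a product of three $x$-dependent factors, verify the uniform-in-$x$ boundedness of $\nabla^2F$ and $\nabla^3G$ along the curve $x\mapsto(z^*(x),x)$ (this is the step that upgrades the various ``Lipschitz $\times$ Lipschitz'' estimates into genuine Lipschitz bounds on products), and then assemble the resulting constant $L_{vx}$. The differentiability itself — indeed, $v^*\in C^1$ — is routine given the smoothness hypotheses on $F$ and $G$.
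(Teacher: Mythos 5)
Your proposal is correct and follows essentially the same route as the paper: the paper obtains \eqref{eq:diff_v_star} by an explicit first-order expansion of $v^*(x+\epsilon)$ (equivalent to your chain/product rule with $\diff(A^{-1}) = -A^{-1}(\diff A)A^{-1}$), and then proves Lipschitz continuity of $\diff v^*$ by writing it as $[\nabla^2_{11}G]^{-1}B(x,\epsilon) + A(x,\epsilon)\nabla_1F$ and telescoping the difference $\diff v^*(x)-\diff v^*(y)$ into products of uniformly bounded and Lipschitz factors, exactly as you describe. The only cosmetic caveat is that the boundedness of $\nabla^2 F$ and $\nabla^3 G$ along the curve comes from the Lipschitz continuity of $\nabla F$ and $\nabla^2 G$ respectively (not from that of $\nabla^2F$ and $\nabla^3G$ themselves), which is how the paper assembles $L_{vx}$.
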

\begin{proof}
    Let $x, \epsilon\in\bbR^d$. Using the differentiability of $\nabla_{11}^2 G$, $\nabla_1 F$ and of the matrix inversion, we have
    \begin{align*}
        v^*(x+\epsilon) &= [\nabla^2_{11}G(z^*(x+\epsilon), x+\epsilon)]^{-1}\nabla_1F(z^*(x+\epsilon), \epsilon)\\
        &= [\nabla^2_{11}G(z^*(x),x) + (\nabla_{111}^3 G(z^*(x), x)|\diff z^*(x).\epsilon) + (\nabla^3_{112}G(z^*(x), x)|\epsilon) + o(\|\epsilon\|)]^{-1}\\\nonumber
        &\qquad \times (\nabla_1 F(z^*(x), x) + \nabla^2_{11}F(z^*(x),x)\diff z^*(x).\epsilon + \nabla^2_{12}F(z^*(x),x)\epsilon + o(\|\epsilon\|))\\
        &= \left\{[\nabla^2_{11}G(z^*(x),x)]^{-1} \right.\\\nonumber
        &\qquad\left.- [\nabla^2_{11}G(z^*(x),x)]^{-1}[(\nabla_{111}^3 G(z^*(x), x)|\diff z^*(x).\epsilon) + (\nabla^3_{112}G(z^*(x), x)|\epsilon)]\right.\\\nonumber
        &\qquad\left.\times[\nabla^2_{11}G(z^*(x),x)]^{-1} + o(\|\epsilon\|)\right\}
        \\\nonumber
        &\qquad \times (\nabla_1 F(z^*(x), x) + \nabla^2_{11}F(z^*(x),x)\diff z^*(x).\epsilon + \nabla^2_{12}F(z^*(x),x)\epsilon + o(\|\epsilon\|))\\
        &= v^*(x) + [\nabla^2_1G(z^*(x),x)]^{-1}[\nabla^2_{11}F(z^*(x), x)\diff z^*(x).\epsilon + \nabla^2_{12}F(z^*(x), x).\epsilon]\\\nonumber
        &\qquad - [\nabla^2_1G(z^*(x),x)]^{-1}[(\nabla^3_{111}G(z^*(x),x)|\diff z^*(x).\epsilon) + (\nabla^3_{112}G(z^*(x),x)|\epsilon)][\nabla^2_1G(z^*(x),x)]^{-1}\\\nonumber
        &\qquad \times \nabla_1F(z^*(x), x) + o(\|\epsilon\|)
    \end{align*}
    that proves \eqref{eq:diff_v_star}.
    Now, let $x, y, \epsilon\in\bbR^d$ with $\|\epsilon\| = 1$. Let us denote
    $$A(x, \epsilon) = - [\nabla^2_1G(z^*(x),x)]^{-1}[(\nabla^3_{111}G(z^*(x),x)|\diff z^*(x).\epsilon) + (\nabla^3_{112}G(z^*(x),x)|\epsilon)][\nabla^2_1G(z^*(x),x)]^{-1}
    $$ and
    $$
    B(x,\epsilon) =  \nabla^2_{11}F(z^*(x),x)\diff z^*(x).\epsilon + \nabla^2_{12}F(z^*(x),x)
    $$
    so that $\diff v^*(x).\epsilon = [\nabla^2_{11}G(z^*(x),x)]^{-1}B(x,\epsilon) + A(x, \epsilon)\nabla_1 F(z^*(x), x)$.
    We have
    \begin{align}
        (\diff v^*(x) - \diff v^*(y)).\epsilon &=  [\nabla^2_{11}G(z^*(x),x)]^{-1}B(x,\epsilon) + A(x, \epsilon)\nabla_1 F(z^*(x), x) \\\nonumber
        &\qquad - [\nabla^2_{11}G(z^*(y),y)]^{-1}B(y,\epsilon) - A(y, \epsilon)\nabla_1 F(z^*(y), y)\\
        &= [\nabla^2_{11}G(z^*(x),x)]^{-1}(B(x,\epsilon) - B(y,\epsilon))\\\nonumber
        &\qquad+ ([\nabla^2_{11}G(z^*(x),x)]^{-1} - [\nabla^2_{11}G(z^*(y),y)]^{-1})B(y,\epsilon)\\\nonumber
        &\qquad+ A(x,\epsilon)(\nabla_1 F(z^*(x), x) - \nabla_1 F(z^*(y),y)) \\\nonumber
        &\qquad+ (A(x, \epsilon) - A(y,\epsilon))\nabla_1 F(z^*(y), y)\enspace .
    \end{align}
    We can now bound each term using the regularity assumptions on $G$ and $F$:
    \begin{align}
        \|[\nabla^2_{11}G(z^*(x),x)]^{-1}(B(x,\epsilon) - B(y,\epsilon))\|&\leq \frac{1}\mu_G (\|\nabla^2_{11}F(z^*(x),x)\diff z^*(x)-\nabla^2_{11}F(z^*(y),y)\diff z^*(y)\|\\\nonumber
        & +\|\nabla^2_{12}F(z^*(x),x) -\nabla^2_{12}F(z^*(y),y)\|)\\
        &\leq \frac{1}\mu_G (\|\nabla^2_{11}F(z^*(x),x)-\nabla^2_{11}F(z^*(y),y)\|\|\diff z^*(x)\|\\\nonumber
        &\qquad+ \|\diff z^*(x) - \diff z^*(y)\|\|\nabla^2_{11}F(z^*(y), y)\|\\\nonumber
        &\qquad+L^F_2(\|z^*(x)-z^*(y)\|+\|x-y\|)\\
        &\leq \frac{1}\mu_G (L^F_2L_*(1+L_*) + L_{zx}L^F_1+L^F_2(1 + L_*))\|x-y\|\\
    \end{align}
    For the second term:
    \begin{align}
        \|([\nabla^2_{11}G(z^*(x),x)]^{-1} - [\nabla^2_{11}G(z^*(y),y)]^{-1})B(y,\epsilon)\| &\leq \frac{1}{\mu_G^2} \|\nabla^2_{11}G(z^*(x),x)-\nabla^2_{11}G(z^*(y),y)\|\|B(y,\epsilon)\|\\
        &\leq \frac{1}{\mu_G^2} \|\nabla^2_{11}G(z^*(x),x)-\nabla^2_{11}G(z^*(y),y)\| \\\nonumber
        &\qquad \times(\|\nabla^2_{11} F(z^*(x), x)\|\|\diff z^*(x)\| + \|\nabla^2_{12} F(z^*(x),x)\|)\\
        &\leq \frac{(L^G_2+L^F_1)(L_*+1)}{\mu_G^2}\|x-y\|
    \end{align}
    For the third term, we have:
    \begin{align}
        \|A(x,\epsilon)(\nabla_1 F(z^*(x), x) - \nabla_1 F(z^*(y),y))\| &\leq \frac{L^F_1(1+L^*)}{\mu_G^2}\|(\nabla^3_{111}G(z^*(x),x)|\diff z^*(x).\epsilon) \\\nonumber+ &\qquad(\nabla^3_{112}G(z^*(x),x)|\epsilon)\|\|x-y\|\\
        &\leq \frac{(L^F_1+L^G_2)(1+L^*)}{\mu_G^2}\|x-y\|
    \end{align}
    And finally, for the forth term:
    \begin{align}
        \|(A(x, \epsilon) - A(y,\epsilon))\nabla_1 F(z^*(y), y)\| &\leq C_F\{\|[\nabla^2_{11}G(z^*(x),x)]^{-1}\|\\\nonumber
        &\qquad\times \|(\nabla^3_{111}G(z^*(x),x)|\diff z^*(x).\epsilon) + (\nabla^3_{112}G(z^*(x),x)|\epsilon)\|\\\nonumber
        &\qquad\times \|[\nabla^2_{11}G(z^*(x),x)]^{-1}-[\nabla^2_{11}G(z^*(y),y)]^{-1}\| \\\nonumber
        &\qquad + \|[\nabla^2_{11}G(z^*(x),x)]^{-1}-[\nabla^2_{11}G(z^*(y),y)]^{-1}\|\\\nonumber
        &\qquad \times \|(\nabla^3_{111}G(z^*(x),x)|\diff z^*(x).\epsilon) + (\nabla^3_{112}G(z^*(x),x)|\epsilon)\|\\\nonumber
        &\qquad \times\|[\nabla^2_{11}G(z^*(y),y)]^{-1}\|\\\nonumber
        &\qquad + \|[\nabla^2_{11}G(z^*(y),y)]^{-1}\|^2\\\nonumber
        &\qquad \times (\|(\nabla^3_{111}G(z^*(x),x)|\diff z^*(x).\epsilon) - (\nabla^3_{111}G(z^*(y),y)|\diff z^*(y).\epsilon)\|\\
        &\qquad\qquad \|(\nabla^3_{112}G(z^*(x),x)|\epsilon)-(\nabla^3_{112}G(z^*(y),y)|\epsilon)\| )\}\\\nonumber
        &\leq C_F\left\{2\frac{2L^G_2(1+L^*)}{\mu_G^3} + \frac{L^G_3(1+L^*)}{\mu_G^2}\right\}\|x-y\|
    \end{align}
    Thus $v^*$ is $L_{vx}$-smooth with
    $$
    L_{vx} = \frac{L^F_2L_*(1+L_*) + L_{zx}L^F_1+L^F_2(1 + L_*)}{\mu_G} + 2\frac{(L^G_2+L^F_1)(L_*+1)}{\mu_G^2} + \frac{C_FL^G_3(1+L^*)}{\mu_G^2} + 4\frac{C_FL^G_2(1+L^*)}{\mu_G^3}\enspace .
    $$
\end{proof}
}

\subsection{Proof of \cref{lemma:coupled_inequalities}}

We now provide the proof of \cref{lemma:coupled_inequalities}.
\begin{proof}
\textbf{Inequality for $\delta_z$.\quad}

\new{We start by expanding the square:
\begin{align}\label{app:eq:expansion_delta_z}
    \|z^{t+1} - z^*(x^{t+1})\|^2 &=  \|z^{t+1} - z^*(x^t)\|^2 + \|z^*(x^{t+1}) - z^*(x^t)\|^2 \\\nonumber
    &\qquad- 2 \langle z^{t+1} - z^*(x^t), z^*(x^{t+1}) - z^*(x^t)\rangle
\end{align}
}

We study each member, using the unbiasedness of $D^t_z$ and the $\mu_G-$strong convexity of $G(\,\cdot\,, x^t)$:
\begin{align}
\bbE_t[\|z^{t+1} - z^*(x^t)\|^2] &= \bbE_t[\|z^t - z^*(x^t)\|^2]-2\rho\bbE_t[\langle D^t_z, z^t - z^*(x^t)\rangle] + \rho^2\bbE_t[\|D_z^t\|^2]\\
&= \|z^t - z^*(x^t)\|^2 - 2\rho \langle \nabla_1 G(z^t, x^t), z^t - z^*(x^t)\rangle+ \rho^2\bbE_t[\|D_z^t\|^2]\\
&\leq (1 - \rho\mu_G )\|z^t - z^*(x^t)\|^2 + \rho^2\bbE_t[\|D_z^t\|^2]\enspace.
\end{align}
Taking the total expectation yields
\begin{equation}
    \bbE[\|z^{t+1} - z^*(x^t)\|^2]\leq (1 - \rho\mu_G)\delta_z^t + \rho^2V_z^t\enspace.
\end{equation}
The second member is bounded using Lipschitz continuity of $z^*$:
$$
\bbE[\|z^*(x^{t+1}) - z^*(x^t)\|^2]\leq L^2_* \bbE[\|x^{t+1}-x^t\|^2] = L^2_*\gamma^2V_x^t\enspace .
$$

\new{For the remaining scalar product, we have
\begin{align}
- 2 \langle z^{t+1} - z^*(x^t), z^*(x^{t+1}) - z^*(x^t) \rangle &= - 2 [\langle z^{t} - z^*(x^t), z^*(x^{t+1}) - z^*(x^t) \rangle - \rho \langle D^t_z, z^*(x^{t+1}) - z^*(x^t)\rangle]\enspace .
\end{align}
}

\new{
The second term can be bounded using Cauchy-Schwarz inequality, the Lipschitz-continuity of $z^*$ and Young inequality:
\begin{align}
     \bbE[\rho \langle D^t_z, z^*(x^{t+1}) - z^*(x^t)\rangle] &\leq \bbE[\rho\|D^t_z\|\|z^*(x^{t+1}) - z^*(x^t)\|]\\
     &\leq \rho L_*\bbE[\|D^t_z\|\|x^{t+1} - x^t\|]\\
     &\leq \frac{\rho^2}2 V_z^t + \frac{L_*^2}{2}\|x^{t+1} - x^t\|^2\\
     &\leq \frac{\rho^2}2 V_z^t + L_*^2\frac{\gamma^2}{2}V_x^t\enspace.
\end{align}
}

\new{For $-2\langle z^t-z^*(x^t), z^*(x^{t+1})-z^*(x^t)\rangle$, we follow the proof of \cite{Chen2021b} which consists in making appear the "unbiased part of $z^*(x^{t+1}-z^*(x^t)$ by a linear approximation.
More precisely, we have
\begin{align}
    \langle z^t-z^*(x^t), z^*(x^{t+1})-z^*(x^t)\rangle &= \underbrace{\langle z^t-z^*(x^t), \diff z^*(x^t)(x^{t+1}-x^t)\rangle}_A\\\nonumber
    &\qquad \underbrace{\langle z^t-z^*(x^t), z^*(x^{t+1})-z^*(x^t)-\diff z^*(x^t)(x^{t+1}-x^t)\rangle}_B\enspace .
\end{align}
For $A$, we use the unbiasedness of $D^t_x$, Cauchy-Schwarz inequality, the Lipschitz continuity of $z^*$ (\cref{lemma:smoothness_star}) and the identity $ab\leq \eta a^2 + \frac{b^2}\eta$ for any $\eta>0$:
\begin{align}
    -2\bbE[A] &= -2\gamma\bbE[\langle z^t-z^*(x^t), \diff z^*(x^t)D^t_x\rangle]\\
    &=-2\gamma\bbE[\langle z^t-z^*(x^t), \diff z^*(x^t)\bbE_t[D^t_x]\rangle]\\
    &=-2\gamma\bbE[\langle z^t-z^*(x^t), \diff z^*(x^t)D_x(z^t, v^t, x^t)\rangle]\\
    &\leq 2\gamma\bbE[\|z^t-z^*(x^t)\|\|\diff z^*(x^t)D_x(z^t, v^t, x^t)\|]\\
    &\leq 2L_*\gamma\bbE[\|z^t-z^*(x^t)\|\|D_x(z^t, v^t, x^t)\|]\\
    &\leq 2\eta\delta^t_z + \frac{2L_*^2}\eta\gamma^2\bbE[\|D_x(z^t,v^t,x^t)\|^2]\enspace.
\end{align}
We take $\eta = \frac{\rho\mu_G}{4}$ and we get
\begin{align}
    -2\bbE[A] &\leq \frac{\rho\mu_G}2\delta^t_z + \frac{8L_*^2}{\mu_G}\frac{\gamma^2}{\rho}\bbE[\|D_x(z^t,v^t,x^t)\|^2]\enspace .
\end{align}
For $B$, we use Cauchy-Schwarz inequality, the smoothness of $z^*$ (\cref{app:lemma:smoothness_z_star}), Young inequality and the boundedness of $\bbE_t[\|D_x^t\|^2]$ to get
\begin{align}
    -2\bbE[B] &\leq 2\bbE[\|z^t-z^*(x^t)\|\| z^*(x^{t+1})-z^*(x^t)-\diff z^*(x^t)(x^{t+1}-x^t)\|]\\
    &\leq L_{zx}\bbE[\|z^t-z^*(x^t)\|\|x^{t+1}-x^t\|^2]\\
    &\leq L_{zx}\nu\bbE[\|z^t-z^*(x^t)\|^2\|x^{t+1}-x^t\|^2]+\frac{L_{zx}}\nu \bbE[\|x^{t+1}-x^t\|^2]\\
    &\leq L_{zx}\nu\gamma^2\bbE[\|z^t-z^*(x^t)\|^2\bbE_t[\|D^t_x\|^2]]+\frac{L_{zx}\gamma^2}\nu V^t_x\\
    &\leq L_{zx}B_x^2\nu\gamma^2\delta^t_z + \frac{L_{zx}\gamma^2}{\nu}V_x^t\enspace.
\end{align}
We take $\nu = \frac{L_{zx}}{L_*^2}$ and we get
\begin{align}
    -2\bbE[B]&\leq \frac{L_{zx}^2B_x^2\gamma^2}{L_*^2}\delta^t_z + L_*^2\gamma^2 V_x^t
\end{align}
}

\new{
Now, using $\gamma^2\leq \frac{\rho\mu_G L_*^2}{B_x^2L_{zx}^2}$, we end up with
\begin{equation}\label{app:descent_z}
    \boxed{
    \delta^{t+1}_z \leq (1-\frac{\rho\mu_G}4)\delta^t_z + 2\rho^2 V_z^t + \beta_{zx}\gamma^2V_x^t + \overline{\beta}_{zx}\frac{\gamma^2}{\rho} \bbE[\|D_x(z^t,v^t,x^t)\|^2]\enspace,
    }
\end{equation}
with $\beta_{zx} = 3L_*^2$ and $\overline{\beta}_{zx} = \frac{8L_*^2}{\mu_G}$.
}

\textbf{Inequality for $\delta_v$.\quad} \new{We proceed in a similar way for $v$:
\begin{equation}\label{app:eq:young_delta_v}
\delta^{t+1}_v \leq \bbE[\|v^{t+1} - v^*(x^t)\|^2] + \bbE[\|v^*(x^{t+1}) - v^*(x^t)\|^2] - 2\bbE[\langle v^{t+1}-v^*(x^{t}), v^*(x^{t+1}) - v^*(x^t)\rangle]\enspace .
\end{equation}}

For the first term, we have
\begin{align}
   \bbE_t[\|v^{t+1} - v^*(x^t)\|^2] &= \|v^{t} - v^*(x^t)\|^2 - 2\rho \langle  D_v(z^t, v^t, x^t), v^t - v^*(x^t)\rangle + \rho^2\bbE_t[\|D_v^t\|^2]
\end{align}
Now, using that $D_v(z^*(x^t), v^*(x^t), x^t)=0$:
\begin{align}
    \langle D_v(z^t, v^t, x^t), v^t-v^*(x^t)\rangle &= \langle D_v(z^t, v^t, x^t) - D_v(z^*(x^t), v^*(x^t), x^t), v^t-v^*(x^t)\rangle \\
    &= \langle \nabla^2_{11}G(z^t, x^t)(v^t-v^*(x^t)), v^t-v^*(x^t)\rangle \\\nonumber
    &\qquad+ \langle (\nabla^2_{11}G(z^t, x^t)-\nabla^2_{11}G(z^*(x^t), x^t)) v^*(x^t), v^t-v^*(x^t)\rangle\\\nonumber
    &\qquad+\langle (\nabla_1 F(z^t, x^t)-\nabla_{1}F(z^*(x^t), x^t)), v^t-v^*(x^t)\rangle\\
    &\geq \mu_G \|v^t-v^*(x^t)\|^2  - \frac{L^G_2 C_F}{\mu_G}\|z^t-z^*(x^t)\| \|v^t-v^*(x^t)\| \\\nonumber
    &\qquad- L^F_1\|z^t-z^*(x^t)\| \|v^t-v^*(x^t)\|\\
    &\geq \mu_G \|v^t-v^*(x^t)\|^2 - \omega \|z^t-z^*(x^t)\| \|v^t-v^*(x^t)\|
\end{align}
where $\omega = L^F_1+\frac{L^G_2 C_F}{\mu_G}$.
We then use $\omega\|z^t-z^*(x^t)\|\|v^t-v^*(x^t)\| \leq \frac12c\|v^t-v^*(x^t)\|^2 + \frac{\omega^2}{2c}\|z^t-z^*(x^t)\| ^2$ with $c = \mu_G$ to get
$$
-\langle  D_v(z^t, v^t, x^t), v^t - v^*(x^t)\rangle \leq - \frac12\mu_G\delta_v^t +\frac{\omega^2}{2\mu_G}\delta_z^t\enspace.
$$
We get the overall inequality by taking the total expectation
$$
\bbE[\|v^{t+1} - v^*(x^t)\|^2] \leq \left(1 - \frac{\rho\mu_G}{2}\right)\delta_v^t + \rho\frac{\omega^2}{2\mu_G}\delta_z^t +\rho^2V_v^t\enspace .
$$
We also use Lipschitz on $v^*$ to bound the other term
$$
\bbE[\|v^*(x^{t+1}) - v^*(x^t)\|^2] \leq L_*^2\gamma^2V_x^t\enspace.
$$

\new{
As previously, the scalar product is bounded by:
\begin{align}
    -\bbE[\langle v^{t+1}-v^*(x^{t}), v^*(x^{t+1}) - v^*(x^t)\rangle] &= - \bbE[\langle v^{t}-v^*(x^{t}), v^*(x^{t+1}) - v^*(x^t)\rangle] - \rho\bbE[\langle D^t_v, v^*(x^{t+1}) - v^*(x^t)\rangle]\\
    &\leq \bbE[\langle z^t-z^*(x^t),v^*(x^{t+1})-v^*(x^t)\rangle] + \frac{\rho^2}2V_v^t + L_*^2\frac{\gamma^2}2V_x^t
\end{align}}

\new{
We do similar manipulations pour $v^*$, thanks to \cref{app:lemma:smoothness_v_star}.
}
\new{
We have as for $z$ from \cref{lemma:smoothness_star} for any $\eta>0$:
\begin{align}
    -\bbE[\langle v^t - v^*(x^t), \diff v^*(x^t)(x^{t+1}-x^t)\rangle] &\leq  \eta\delta^t_v + \frac{L_*^2\gamma^2}{\eta}\bbE[\|D_x(z^t, v^t, x^t)\|^2]\enspace .
\end{align}
We take $\eta = \frac{\rho\mu_G}{8}$ and we get
\begin{align}
    -\bbE[\langle v^t - v^*(x^t), \diff v^*(x^t)(x^{t+1}-x^t)\rangle] &\leq  \frac{\rho\mu_G}8\delta^t_v + \frac{8L_*^2\gamma^2}{\mu_G\rho}\bbE[\|D_x(z^t, v^t, x^t)\|^2]\\
\end{align}
Then smoothness of $v^*$ for any $\eta>0$ gives us
\begin{align}
    -\bbE[\langle v^t - v^*(x^t),  v^*(x^{t+1}) - v^*(x^t) - \diff v^*(x^t)(x^{t+1}-x^t)\rangle] &\leq  \frac{L_{vx}B^2_x\nu}{2}\gamma^2\delta^t_v+ \frac{L_{vx}}{2\nu}\gamma^2V_x^t\enspace.
\end{align}
With $\nu = \frac{L_{vx}}{L_*^2}$ we get
\begin{align}
    -\bbE[\langle v^t - v^*(x^t),  v^*(x^{t+1}) - v^*(x^t) - \diff v^*(x^t)(x^{t+1}-x^t)\rangle] &\leq  \frac{L_{vx}^2B^2_x}{2L_*^2}\gamma^2\delta^t_v+ \frac{L_*^2}2\gamma^2V_x^t\enspace.
\end{align}
}

\new{
With the assumption $\gamma^2\leqslant \frac{\rho\mu_GL_*^2}{8L_{vx}^2B_x^2}$, we get
\begin{align}
    \delta^{t+1}_v &\leq \left(1-\frac{\rho\mu_G}2 + \frac{\rho\mu_G}4 + \frac{L_{vx}^2B_x}{L_*^2}\right)\delta^t_v + \rho\beta_{vz}\delta^t_z + 2\rho^2V_z^t+ 3L_*^2\gamma^2V^t_x +\frac{16L_*^2\gamma^2}{\mu_G\rho} \bbE[\|D_x(z^t, v^t, x^t)\|^2]\\
    &\leq \left(1-\frac{\rho\mu_G}8\right)\delta^t_v + \rho\beta_{vz}\delta^t_z + 2\rho^2V_z^t + 3L_*^2\gamma^2V^t_x +  \frac{16L_*^2\gamma^2}{\mu_G\rho}\bbE[\|D_x(z^t, v^t, x^t)\|^2]\enspace.
\end{align}
}

\new{
And finally we have
\begin{equation}
\label{app:descent_v}
    \boxed{
    \delta^{t+1}_v \leq \left(1-\frac{\rho\mu_G}8\right)\delta^t_v + \rho\beta_{vz}\delta^t_z + 2\rho^2V_z^t + \beta_{vx}\gamma^2V^t_x +  \overline{\beta}_{vx}\frac{\gamma^2}{\rho}\bbE[\|D_x(z^t, v^t, x^t)\|^2]
    }
\end{equation}
with $\beta_{vz} = \frac{\omega^2}{2\mu_G}$, $\beta_{vx} = 3L_*^2$ and $\overline{\beta}_{vx} =\frac{16L_*^2\gamma^2}{\mu_G}$.
}
\end{proof}
\subsection{Proof of \cref{lemma:descent_lemma}}

\begin{proof}
We use smoothness of $h$ to get
\begin{align}
\bbE_t[h(x^{t+1})]&\leq h(x^t) -\gamma\langle D_x(z^t, v^t, x^t), \nabla h(x^t)\rangle + \frac{L^h}2\gamma^2\bbE_t[\|D_x^t\|^2]\\
&\new{\leq h(x^t) - \frac\gamma2(\|\nabla h(x^t)\|^2 + \|D_x(z^t,v^t,x^t)\|^2 - \|\nabla h(x^t) - D_x(z^t,v^t,x^t)\|^2) + \frac{L^h}2\gamma^2\bbE_t[\|D^t_x\|^2]
}
\end{align}
\new{where the last inequality comes from the identity $\langle a,b\rangle = \frac12(\|a\|^2+\|b\|^2 - \|a-b\|)^2$.}
We take the total expectation and use the previous \cref{lemma:smoothness} to get
\new{
\begin{equation}
\label{eq:descent_h}
\boxed{
        h^{t+1} \leq h^t-\frac\gamma2g^t-\frac{\gamma}2\bbE[\|D_x(z^t,v^t,x^t)\|^2] + \frac{\gamma L_x^2}2(\delta^t_z + \delta^t_v) + \frac{L^h}2\gamma^2V_x^t
    }
\end{equation}
}
\end{proof}

\subsection{Proof of \cref{thm:soba_fixed}}
This section is devoted to the proof of \cref{thm:soba_fixed} that we recall here.
\sobafixed*
The values of the different constants are
\begin{align*}
    \phi_z' = \frac1{8\overline{\beta}_{zx}}&,\quad\phi_v' = \min\left(\frac1{8\overline{\beta}_{vx}}, \frac{\mu_G\phi_z'}{32\beta_{vz}}\right)\enspace, \quad \overline{\rho} = \min\left(\frac{16}{\mu_G},\frac{\mu_G}{16L_z^2B_z^2},\frac{\mu_G}{32L_v^2B_v^2}, \frac{\beta_{vz}}{L_v^2B_v^2}\right)\enspace,\\
    &
    \quad\text{and}\quad\xi^2 =\frac{\mu_G}4\min\left[\min\left(\frac1{L_{zx}^2},\frac1{L_{vx}^2}\right)\frac{L_*^2}{B_x^2\overline{\rho}}, \min\left(\phi_v', \phi_z'\right)\frac1{2L_x^2}\right]\enspace.
\end{align*}
\new{Before, one has to adapt our descent lemmas to the case of SOBA.
\begin{lemma}\label{app:lemma:coupled_inequalities_soba}
    Assume that the step sizes $\rho$ and $\gamma$ verify $\rho\leq\min\left(\frac{\mu_G}{16L_z^2B_z^2} ,\frac{\mu_G}{32L_v^2B_v^2},\frac{\beta_{vz}}{L_v^2B_v^2}\right)$ and $\gamma^2\leq\min\left(\frac{\rho\mu_GL_*^2}{4B_x^2L_{zx}^2},\frac{\rho\mu_GL_*^2}{8B_x^2L_{vx}^2} \right)$. Then it holds
    \begin{align}
        \label{app:eq:descent_z_soba}\delta^{t+1}_z &\leq \left(1-\frac{\rho\mu_G}8\right)\delta^t_z + 2\rho^2B_z^2 + \beta_{zx}\gamma^2B_x^2 + \overline{\beta}_{zx}\frac{\gamma^2}\rho\bbE[\|D_x(z^t, v^t, x^t)\|^2]\\
        \label{app:eq:descent_v_soba} \delta^{t+1}_v &\leq \left(1-\frac{\rho\mu_G}{16}\right)\delta^t_v + 2\beta_{vz}\rho \delta^t_z + 2\rho^2B_v^2 + \beta_{vx}\gamma^2B_x^2 +\overline{\beta}_{vx}\bbE[\|D_x(z^t,v^t,x^t)\|^2]\enspace.
    \end{align}
\end{lemma}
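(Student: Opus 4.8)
The plan is to derive this lemma directly from the general coupled inequalities of \cref{lemma:coupled_inequalities}, which were already stated for \cref{alg:sgd} with arbitrary unbiased directions and therefore apply verbatim to SOBA as soon as $\gamma^2\le\min\big(\tfrac{\rho\mu_GL_*^2}{4B_x^2L_{zx}^2},\tfrac{\rho\mu_GL_*^2}{8B_x^2L_{vx}^2}\big)$, which is the second hypothesis here. The only work left is to turn the generic second-moment quantities $V_z^t$, $V_v^t$, $V_x^t$ that appear there into expressions involving only $\delta_z^t$, $\delta_v^t$ and $\bbE[\|D_x(z^t,v^t,x^t)\|^2]$. For the outer direction this is immediate from \cref{assumption:bounded_gradients_x}: $V_x^t\le B_x^2$. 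For the inner directions, I would combine \cref{assumption:bounded_gradients} with the growth bounds of \cref{lemma:smoothness}; taking total expectations,
\begin{align*}
V_z^t&\le B_z^2\big(1+\bbE[\|D_z(z^t,v^t,x^t)\|^2]\big)\le B_z^2+B_z^2L_z^2\delta_z^t,\\
V_v^t&\le B_v^2\big(1+\bbE[\|D_v(z^t,v^t,x^t)\|^2]\big)\le B_v^2+B_v^2L_v^2(\delta_z^t+\delta_v^t).
\end{align*}

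For \eqref{app:eq:descent_z_soba}, I would plug the bounds for $V_z^t$ and $V_x^t$ into the first inequality of \cref{lemma:coupled_inequalities}. The term $2\rho^2V_z^t$ splits into a constant part $2\rho^2B_z^2$ and a part $2\rho^2B_z^2L_z^2\delta_z^t$ that must be reabsorbed by the contraction factor $1-\tfrac{\rho\mu_G}{4}$. This works precisely when $2\rho^2B_z^2L_z^2\le\tfrac{\rho\mu_G}{8}$, i.e. $\rho\le\tfrac{\mu_G}{16L_z^2B_z^2}$, which is one of the stated hypotheses; the contraction then degrades to $1-\tfrac{\rho\mu_G}{8}$, one has $\beta_{zx}\gamma^2V_x^t\le\beta_{zx}\gamma^2B_x^2$, and the Hessian-vector-product term is carried through unchanged, which gives \eqref{app:eq:descent_z_soba}.

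The inequality \eqref{app:eq:descent_v_soba} is obtained the same way from the second inequality of \cref{lemma:coupled_inequalities}, the one subtlety being that the bound on $V_v^t$ feeds \emph{both} variables. After substitution, $2\rho^2V_v^t$ produces a constant $2\rho^2B_v^2$, a self-term $2\rho^2B_v^2L_v^2\delta_v^t$, and a cross-term $2\rho^2B_v^2L_v^2\delta_z^t$. The self-term is reabsorbed into $1-\tfrac{\rho\mu_G}{8}$, leaving $1-\tfrac{\rho\mu_G}{16}$, provided $\rho\le\tfrac{\mu_G}{32L_v^2B_v^2}$; the cross-term is added to the pre-existing $\beta_{vz}\rho\delta_z^t$ and stays below $2\beta_{vz}\rho\delta_z^t$ once $\rho\le\tfrac{\beta_{vz}}{L_v^2B_v^2}$ (up to a harmless factor $2$), while $\beta_{vx}\gamma^2V_x^t\le\beta_{vx}\gamma^2B_x^2$. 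Since the three constraints on $\rho$ — two for self-contraction, one for the cross-term — are all implied by $\rho\le\min\big(\tfrac{\mu_G}{16L_z^2B_z^2},\tfrac{\mu_G}{32L_v^2B_v^2},\tfrac{\beta_{vz}}{L_v^2B_v^2}\big)$, they can be imposed simultaneously, which is exactly the first hypothesis of the lemma. I do not expect any real obstacle: the whole argument is bookkeeping on top of \cref{lemma:coupled_inequalities}, and the only thing to watch is that each $O(\rho^2)$ perturbation of a coefficient is dominated by the corresponding $O(\rho)$ contraction or cross term, which is precisely what pins down the numerical constants in the step-size bounds.
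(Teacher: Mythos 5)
Your proposal is correct and follows essentially the same route as the paper: invoke the general coupled inequalities of \cref{lemma:coupled_inequalities}, bound $V_x^t\le B_x^2$ via \cref{assumption:bounded_gradients_x} and $V_z^t,V_v^t$ via \cref{assumption:bounded_gradients} combined with \cref{lemma:smoothness}, then absorb the resulting $O(\rho^2)\delta$ terms into the contraction factors using the stated step-size conditions. Your parenthetical about the "harmless factor $2$" in the cross-term is well observed — with $\rho\le\beta_{vz}/(L_v^2B_v^2)$ the coefficient of $\delta_z^t$ is bounded by $3\beta_{vz}\rho$ rather than $2\beta_{vz}\rho$, a minor constant slippage already present in the paper's own derivation.
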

\begin{proof}
    From \cref{assumption:bounded_gradients} and \cref{lemma:smoothness}, we have
    $$V_z^t \leq B_z^2(1+D_z(z^t,v^t,x^t)) \leq B_z^2(1+L_z^2\delta^t_z)\enspace.$$
    Plugging this into \cref{app:descent_z} and using $V_x^t\leq B_x^2$ yields
    \begin{align}
        \delta^{t+1}_z &\leq \left(1-\frac{\rho\mu_G}4 + 2L_z^2B_z^2\rho^2\right)\delta^t_z + 2\rho^2B_z^2 + \beta_{zx}\gamma^2B_x^2 + \overline{\beta}_{zx}\frac{\gamma^2}\rho\bbE[\|D_x(z^t, v^t, x^t)\|^2]\enspace.
    \end{align}
    Since by assumption $\rho\leq\frac{\mu_G}{16L_z^2B_z^2}$, we have
    \begin{align}
        \delta^{t+1}_z &\leq \left(1-\frac{\rho\mu_G}8\right)\delta^t_z + 2\rho^2B_z^2 + \beta_{zx}\gamma^2B_x^2 + \overline{\beta}_{zx}\frac{\gamma^2}\rho\bbE[\|D_x(z^t, v^t, x^t)\|^2]\enspace.
    \end{align}
    For $\delta^t_v$, \cref{ass:3} and \cref{lemma:smoothness} provide us
    $$V_v^t \leq B_v^2(1+L_v^2(\delta^t_z + \delta^t_v))\enspace.$$
    Since the assumptions of \cref{lemma:coupled_inequalities} are verified, we can plug the previous inequality into \cref{app:descent_v} to get
    \begin{align}
        \delta^{t+1}_v \leq \left(1-\frac{\rho\mu_G}8 + 2L_v^2B_v^2\rho^2\right)\delta^t_v + (\beta_{vz}\rho + 2L_v^2\rho^2B_v^2)\delta^t_z + 2\rho^2B_v^2 + \beta_{vx}\gamma^2B_x^2 +\overline{\beta}_{vx}\bbE[\|D_x(z^t,v^t,x^t)\|^2]
    \end{align}
    which can be simplified using $\rho\leq\min\left(\frac{\mu_G}{32L_v^2B_v^2},\frac{\beta_{vz}}{L_v^2B_v^2}\right)$ to get finally
    \begin{align}
         \delta^{t+1}_v \leq \left(1-\frac{\rho\mu_G}{16}\right)\delta^t_v + 2\beta_{vz}\rho \delta^t_z + 2\rho^2B_v^2 + \beta_{vx}\gamma^2B_x^2 +\overline{\beta}_{vx}\bbE[\|D_x(z^t,v^t,x^t)\|^2]\enspace .
    \end{align}
\end{proof}
}

We can now prove \cref{thm:soba_fixed}.
\begin{proof}
\new{
    Consider the Lyapunov function $\calL^t = h^t + \phi_z\delta^t_z + \phi_v\delta^t_v$. Using the Equations \eqref{eq:descent_h}, \eqref{app:descent_z} and \eqref{app:descent_v}, we can bound $\calL^{t+1}-\calL^t$:
    \begin{align}
        \calL^{t+1}-\calL^t &\leq -\frac\gamma2 g^t - \left(\frac\gamma2 - \phi_z\overline{\beta}_{zx}\frac{\gamma^2}\rho - \phi_v\overline{\beta}_{vx}\frac{\gamma^2}\rho \right)\bbE[\|D_x(z^t,v^t,x^t)\|^2]\\\nonumber
        &\qquad - \left(\phi_z\frac{\mu_G}8\rho - \frac{L_x^2}2\gamma - 2\phi_v\beta_{vz}\rho\right)\delta^t_z \\\nonumber
        &\qquad - \left(\phi_v\frac{\mu_G}{16}\rho - \frac{L_x^2}2\gamma\right)\delta^t_v\\\nonumber
        &\qquad + \left(\frac{L^h}2 + \phi_z\beta_{zx} + \phi_v\beta_{vx}\right)B_x^2\gamma^2 \\\nonumber
        &\qquad + 2(\phi_zB_z^2+\phi_vB_v^2)\rho^2\enspace.
    \end{align}
    Let $\phi_z' = \phi_z \frac\gamma\rho$ and $\phi_v' = \phi_v \frac\gamma\rho$, so that:
        \begin{align}
        \calL^{t+1}-\calL^t &\leq -\frac\gamma2 g^t - \left(\frac\gamma2 - \phi_z'\overline{\beta}_{zx}\gamma - \phi_v'\overline{\beta}_{vx}\gamma \right)\bbE[\|D_x(z^t,v^t,x^t)\|^2]\\\nonumber
        &\qquad - \left(\phi_z'\frac{\mu_G}8\frac{\rho^2}\gamma - \frac{L_x^2}2\gamma - 2\phi_v'\beta_{vz}\frac{\rho^2}\gamma\right)\delta^t_z \\\nonumber
        &\qquad - \left(\phi_v'\frac{\mu_G}{16}\frac{\rho^2}\gamma - \frac{L_x^2}2\gamma\right)\delta^t_v\\\nonumber
        &\qquad + \left(\frac{L^h}2 + \phi_z'\beta_{zx}\frac\rho\gamma + \phi_v\beta_{vx}\frac\rho\gamma\right)B_x^2\gamma^2 \\\nonumber
        &\qquad + 2\left(\phi_z'B_z^2\frac\rho\gamma+\phi_v'B_v^2\frac\rho\gamma\right)\rho^2\enspace.
    \end{align}
    In order to get a decrease, $\phi_z'$, $\phi_v'$, $\rho$ and $\gamma$ must verify
    \begin{align}\label{app:eq:cond_lyap_soba}
        \left\{
        \begin{array}{l}
            \phi_z'\overline{\beta}_{zx} + \phi_v'\overline{\beta}_{vx} \leq \frac12\\
             \frac{L_x^2}2\gamma + 2\phi_v'\beta_{vz}\frac{\rho^2}\gamma \leq \phi_z'\frac{\mu_G}8\frac{\rho^2}\gamma\\
             \frac{L_x^2}2\gamma\leq\phi_v'\frac{\mu_G}{16}\frac{\rho^2}\gamma
        \end{array}
        \right.
    \end{align}
    Let us take $\phi_z' = \frac1{8\overline{\beta}_{zx}}$ and $\phi_v' = \min\left(\frac1{8\overline{\beta}_{vx}}, \frac{\mu_G\phi_z'}{32\beta_{vz}}\right)$. We have
    $$
    \phi_z'\overline{\beta}_{zx} + \phi_v'\overline{\beta}_{vx} \leq \frac14<\frac12
    $$
    and
    $$
    \frac{L_x^2}2\gamma + 2\phi_v'\beta_{vz}\frac{\rho^2}\gamma\leq \frac{L_x^2}2\gamma + \phi_z'\frac{\mu_G}{16}\frac{\rho^2}\gamma\enspace.
    $$
    If we impose $\frac{L_x^2}2\gamma + \phi_z'\frac{\mu_G}{16}\frac{\rho^2}\gamma\leq \phi_z'\frac{\mu_G}8\frac{\rho^2}\gamma$, this combined with the third condition in \cref{app:eq:cond_lyap_soba} gives the condition $\frac{L_x^2}2\gamma^2\leq \min\left(\phi_v', \phi_z'\right)\frac{\mu_G}{16}\rho^2$. We also have the conditions coming from the assumptions of \ref{app:lemma:coupled_inequalities_soba}, that is
    \begin{equation}\label{app:eq:overline_rho}
        \rho \leq \overline{\rho} = \min\left(\frac{16}{\mu_G},\frac{\mu_G}{16L_z^2B_z^2},\frac{\mu_G}{32L_v^2B_v^2}, \frac{\beta_{vz}}{L_v^2}\right)
    \end{equation}
    and $\gamma^2\leq\min\left(\frac1{L_{zx}^2},\frac1{L_{vx}^2}\right)\frac{\mu_GL_*^2}{4B_x^2}\rho$. Let us take $\rho = \frac{\overline{\rho}}{\sqrt{T}}$ with
    $\gamma = \xi\rho$ where $\xi$ is defined as
    \begin{equation}\label{app:eq:xi_def}
        \xi^2 \triangleq \frac{\mu_G}4\min\left[\min\left(\frac1{L_{zx}^2},\frac1{L_{vx}^2}\right)\frac{L_*^2}{B_x^2\overline{\rho}}, \min\left(\phi_v', \phi_z'\right)\frac1{2L_x^2}\right]\enspace.
    \end{equation}
    }

    \new{
    From now, we have
    \begin{align}\label{app:eq:diff_lyap_soba}
        \calL^{t+1}-\calL^t&\leq -\frac\gamma2g^t + \frac{L^h}2B_x^2\gamma^2 + \left(\phi_z'\beta_{zx} + \phi_v'\beta_{vx}\right)B_x^2\rho\gamma + 2\left(\phi_z'B_z^2+\phi_v'B_v^2\right)\frac{\rho^3}\gamma\enspace .
    \end{align}
    Summing and telescoping yields
    \begin{align}
        \frac1T\sum_{t=1}^Tg^t &\leq \frac{2\calL^1}{T\gamma} + L^hB_x^2\gamma + 2\left(\phi_z'\beta_{zx} + 2\phi_v'\beta_{vx}\right)B_x^2\rho + 4\left(\phi_z'B_z^2+\phi_v'B_v^2\right)\frac{\rho^3}{\gamma^2}\\
        &\leq  \frac{2\calL^1}{\sqrt{T}\xi\overline{\rho}} + L^hB_x^2\frac{\xi\alpha}{\sqrt{T}} + \left(\phi_z'\beta_{zx} + 2\phi_v'\beta_{vx}\right)B_x^2\frac\alpha{\sqrt{T}} + 4\left(\phi_z'B_z^2+\phi_v'B_v^2\right)\frac{\alpha}{\xi^2\sqrt{T}}\\
    \end{align}
    and so
    $$
    \boxed{
    \frac1T\sum_{t=1}^Tg^t = \calO\left(\frac1{\sqrt{T}}\right)\enspace.
    }
    $$
    }
\end{proof}

\subsection{Proof of \cref{thm:decreasing_step_size}}

\begin{proof}
\new{In the decreasing step size case, we take $\rho^t = \overline{\rho}\sqrt{t}$ and $\gamma^t = \xi\rho^t$ where $\overline{\rho}$ is defined in \cref{app:eq:overline_rho} and $\xi$ is defined in \cref{app:eq:xi_def}}. We recall the integral majorization:
$$\sum_{t=1}^Tt^{-1} \leq 1 + \int_{1}^Tt^{-1} \diff t = 1 + \log (T)\enspace.$$

\new{
With such definition of $\rho^t$ and $\gamma^t$, \cref{app:eq:diff_lyap_soba} is still valid for any $t\geqslant 1$. The only difference is that the step sizes decrease with $t$. Hence, by summing and rearranging in \cref{app:eq:diff_lyap_soba}, we get
\begin{align}\label{app:eq:diff_lyap_soba_decrease}
    \sum_{t=1}^T \gamma^tg^t\leq 2\calL^1 + \left(L^h + 2\left(\phi_z'\beta_{zx} + 2\phi_v'\beta_{vx}\right)B_x^2\frac1{\xi} + 4\left(\phi_z'B_z^2+\phi_v'B_v^2\right)\frac1{\xi^3}\right)\sum_{t=1}^T(\gamma^t)^2
\end{align}
The left-hand-side in \cref{app:eq:diff_lyap_soba_decrease} can be lower bounded by
\begin{equation}\label{app:eq:lhs_soba_decrease}
    \sum_{t=1}^T \gamma^tg^t \geq \left(\inf_{t\in\setcomb{T}}g^t\right) \xi\overline{\rho}\sum_{t=1}^T t^{-\frac12}\geq \left(\inf_{t\in\setcomb{T}}g^t\right)\xi\overline{\rho}T^{\frac12}\enspace.
\end{equation}
Also we have
\begin{equation}\label{app:eq:majoration_sum_1/t}
    \sum_{t=1}^T (\gamma^t)^2 = \xi^2\overline{\rho}^2 \sum_{t=1}^T t^{-1} \leq \xi^2\overline{\rho}^2(1 + \log(T))\enspace.
\end{equation}
Plugging Equations \eqref{app:eq:lhs_soba_decrease} and \eqref{app:eq:majoration_sum_1/t} into \cref{app:eq:diff_lyap_soba_decrease} and rearranging give
\begin{equation}
    \inf_{t\in\setcomb{T}}g^t\leq \frac{2\calL^1}{\xi\overline{\rho}\sqrt{T}} + \xi\overline{\rho}\left(L^h + 2\left(\phi_z'\beta_{zx} + 2\phi_v'\beta_{vx}\right)B_x^2\frac1{\xi} + 4\left(\phi_z'B_z^2+\phi_v'B_v^2\right)\frac1{\xi^3}\right)\frac{1+\log(T)}{\sqrt{T}}
\end{equation}
that is to say
\begin{equation}
\boxed{
    \inf_{t\in\setcomb{T}}g^t = \calO\left(\frac1{\sqrt{T}} + \frac{\log(T)}{\sqrt{T}}\right)\enspace.
}
\end{equation}
}
\end{proof}

\subsection{Proof of \cref{th:cvg_saba_smooth}}

In this section, we prove \cref{th:cvg_saba_smooth} that we recall here
\sabasmooth*

\new{
The constants $\rho'$ and $\xi$ are given by
\begin{align}
    \rho' = \min\left(\sqrt{\frac{K_1}{K_5}},\left(\frac{K_2}{K_5}\right)^{\frac25},\left(\frac{K_3}{K_5}\right)^{\frac57},\left(\frac{K_4}{K_5}\right)^{\frac13},\frac{\mu_G}{64L_z^2},\frac{\overline{\beta}_{zx}}{2\beta_{zx}},\frac{\mu_G}{128(L_v^2 + L_v'')},\frac{\beta_{vz}}{8(L_v^2+L_v'')},\frac{\overline{\beta}_{vx}}{2\beta_{vx}}\right)
\end{align}
and
\begin{align}
    \xi = \min(K_1, K_2(\rho')^{-\frac12}, K_3(\rho')^{-\frac32},K_4(\rho')^{-1})
\end{align}
where
$$
    \phi_z'' = \frac1{32\overline{\beta}_{zx}},\quad
    \phi_v'' = \min\left(\frac1{32\overline{\beta}_{vx}}, \phi_z''\frac{\mu_G}{128\beta_{vz}}\right)\enspace,
$$
\begin{align*}
    &K_1 = \min\left(\sqrt{\frac{\phi_z''\mu_G}{32L_x^2}}, \sqrt{\frac{\phi_v''\mu_G}{48L_x^2}}, \sqrt{\frac{L_z'}{2L_x'\beta_{zx}}},\sqrt{\frac{L_v'}{2L_x'\beta_{vx}}}\right)\enspace,\\
    &K_2 = \min\left(\sqrt{\frac{\mu_G}{64\beta_{zx}L_x''}},\sqrt{\frac{\mu_G}{128\beta_{vx}L_x''}},\sqrt{\frac{\beta_{vz}}{4L_x''\beta_{vx}}}\right)\enspace,\\
    K_3 = &\sqrt{\frac{\phi_v''\mu_G}{384\phi_z''L_x''}}\enspace,\quad
    K_4 =\min\left(\frac1{4L^h},\frac{L_x^2}{2L^hL_x''},\sqrt{\frac{\Gamma'}{6L^hL_x'}},\frac1{8P'},\frac{\phi_z''\mu_G}{32\beta'_{sz}},\frac{\phi_v''\mu_G}{48\beta'_{sv}} \right)
\end{align*}
and
$$
    K_5 = \frac{15(\phi_z''L_z'+\phi_v''L_v')}{\Gamma'}\enspace.
$$
}

\subsubsection{Control of distance from memory to iterates}

We can view our method has having two ``parallel'' memories for each variable
$
(z^t_i, v^t_i, x^t_i)
$ for $i\in1\setcomb{n}$ corresponding to calls in $G$ and $
(z'^t_j, v'^t_j, x'^t_j)
$
for $j\in\setcomb{m}$ corresponding to calls to $F$.
At each iteration, we sample $i$ at random uniformly and do $
(z^{t+1}_i, v^{t+1}_i, x^{t+1}_i) = (z^t, v^t, x^t)
$ and $(z^{t+1}_{i'}, v^{t+1}_{i'}, x^{t+1}_{i'}) = (z^t_{i'}, v^t_{i'}, x^t_{i'})$ for $i'\neq i$, and similarly for the other memory.

In what follows, we focus on controlling the error between the iterates and the memories.
We define to make things simpler
$$
E_z^t = \frac1n \sum_{i=1}^n\bbE[\|z^t - z_i^t\|^2]\enspace,
\quad
E_v^t = \frac1n \sum_{i=1}^n\bbE[\|v^t - v_i^t\|^2]\enspace,
\quad
E_x^t = \frac1n \sum_{i=1}^n\bbE[\|x^t - x_i^t\|^2]\enspace,
$$
and similarly $E_x'^t, E_v'^t$ and $E_x'^t$.

\begin{lemma}\label{lemma:bound_memory}
We have the following inequalities:
$$
E_z^{t+1} \leq \left(1-\frac1{2n}\right)E_z^t +  \rho^2\bbE\|D_z^t\|^2 + 2n\rho^2 \bbE[\|D_z(z^t, v^t, x^t)\|^2]\enspace ,
$$
$$
E_v^{t+1} \leq \left(1-\frac1{2n}\right)E_v^t +  \rho^2\bbE\|D_v^t\|^2 + 2n\rho^2 \bbE[\|D_v(z^t, v^t, x^t)\|^2]\enspace,
$$
$$
E_x^{t+1} \leq \left(1-\frac1{2n}\right)E_x^t +  \gamma^2\bbE\|D_x^t\|^2 + 2n\gamma^2\bbE[\|D_x(z^t, v^t, x^t)\|^2]\enspace,
$$
$$
E_z'^{t+1} \leq \left(1-\frac1{2m}\right)E_z^t +  \rho^2\bbE\|D_z^t\|^2 + 2m\rho^2 \bbE[\|D_z(z^t, v^t, x^t)\|^2]\enspace ,
$$
$$
E_v'^{t+1} \leq \left(1-\frac1{2m}\right)E_v^t +  \rho^2\bbE\|D_v^t\|^2 + 2m\rho^2 \bbE[\|D_v(z^t, v^t, x^t)\|^2]\enspace,
$$
and
$$
E_x'^{t+1} \leq \left(1-\frac1{2m}\right)E_x^t +  \gamma^2\bbE\|D_x^t\|^2 + 2m\gamma^2\bbE[\|D_x(z^t, v^t, x^t)\|^2]\enspace.
$$

\end{lemma}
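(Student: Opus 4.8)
The plan is to prove all six bounds by the same one-step argument; I carry it out for $E_z^{t+1}$ and then indicate the trivial modifications. Fix $t$, condition on the iterates $z^t,v^t,x^t$ and the memories $w_i^t,\tilde w_j^t$, and let $I\in\setcomb{n}$ be the index drawn at step $t$ to refresh the $G$-memory, so that $z_i^{t+1}=z^t$ if $i=I$ and $z_i^{t+1}=z_i^t$ otherwise, while $z^{t+1}-z^t=-\rho D_z^t$. For each $i$ expand
\[
  \|z^{t+1}-z_i^{t+1}\|^2=\|z^{t+1}-z^t\|^2+2\langle z^{t+1}-z^t,\;z^t-z_i^{t+1}\rangle+\|z^t-z_i^{t+1}\|^2,
\]
average over $i\in\setcomb{n}$ and take expectations. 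The first term gives exactly $\rho^2\bbE[\|D_z^t\|^2]$, the squared-step term in the statement. In the last term $z^t-z_i^{t+1}$ is $0$ when $i=I$ and equals $z^t-z_i^t$ otherwise, so averaging over $I$ produces the geometric factor $\frac1n\sum_i\bbE[\|z^t-z_i^{t+1}\|^2]=(1-\tfrac1n)E_z^t$.

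The cross term is where unbiasedness enters. Taking the conditional expectation, the increment $z^{t+1}-z^t=-\rho D_z^t$ has mean $-\rho D_z(z^t,v^t,x^t)$ while $\sum_i(z^t-z_i^{t+1})$ has mean $(1-\tfrac1n)\sum_i(z^t-z_i^t)$; modulo the correlation discussed below, the cross term is then controlled by a multiple of $\big|\langle D_z(z^t,v^t,x^t),\ \sum_i(z^t-z_i^t)\rangle\big|$, and Cauchy--Schwarz together with $\|\sum_i(z^t-z_i^t)\|^2\le n\sum_i\|z^t-z_i^t\|^2$ followed by Young's inequality with a parameter scaled like $n^{3/2}\rho$ bound it by $2n\rho^2\|D_z(z^t,v^t,x^t)\|^2+\tfrac1{2n^2}\sum_i\|z^t-z_i^t\|^2$, the latter being $\tfrac1{2n}E_z^t$ in expectation. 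Collecting the three contributions, the coefficient of $E_z^t$ becomes $(1-\tfrac1n)+\tfrac1{2n}=1-\tfrac1{2n}$, which is the first inequality. The remaining five follow by the same computation: replace $D_z^t$ by $D_v^t$ (resp. $D_x^t$), and in the $x$-line $\rho$ by $\gamma$ since $x^{t+1}-x^t=-\gamma D_x^t$, using unbiasedness of $D_v^t$ and $D_x^t$; for the three primed quantities the refreshed index is the $F$-side index $J\in\setcomb{m}$, so every $n$ in the contraction factor and the Young parameter is replaced by $m$. This mirrors the memory-control lemma for single-level SAGA in~\cite{Reddi2016}.

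The delicate point, which I expect to be the main obstacle to a fully rigorous write-up, is that the index refreshing a given memory also appears inside the estimator that moves the corresponding variable (e.g. $D_z^t=S[\nabla_1G,w]^t_I$), so $D_z^t$ and the set of refreshed slots are not independent and a genuine self-interaction term of the form $\tfrac{2\rho}{n}\bbE[\langle D_z^t,\,z^t-z_I^t\rangle]$ survives the conditional expectation rather than collapsing to $D_z(z^t,v^t,x^t)$. This term is kept separate and controlled via Cauchy--Schwarz together with \cref{ass:indiv_lipschitz}, which lets one bound $\|D_z^t\|$ by $\|D_z(z^t,v^t,x^t)\|$ plus the memory distances that make up $E_z^t$; with the slack left in the constants $\tfrac1{2n}$ and $2n\rho^2$ it is absorbed into the two terms already present, and the analogous self-terms in the $D_v^t$- and $D_x^t$-lines (and the primed ones) are handled identically. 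The rest is routine bookkeeping with Young's inequality.
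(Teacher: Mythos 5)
Your overall strategy -- a one-step expansion of $\|z^{t+1}-z_i^{t+1}\|^2$, unbiasedness of $D_z^t$ for the cross term, and a Young inequality tuned to produce the $1-\tfrac1{2n}$ contraction and the $2n\rho^2$ coefficient -- is the same as the paper's, and your arithmetic for the "main" cross term (Cauchy--Schwarz over the sum, Young parameter of order $n^{3/2}\rho$) does reproduce the stated constants. The problem is the self-interaction term you flag at the end. You are right that it exists under your decomposition, but your proposed resolution does not close the gap: there is \emph{no} slack left in the constants. The contributions $(1-\tfrac1n)E_z^t$ from the refreshed-slot term and $\tfrac1{2n}E_z^t$ from the Young step already sum to exactly the advertised $\left(1-\tfrac1{2n}\right)E_z^t$, and the $\|D_z(z^t,v^t,x^t)\|^2$ coefficient is already exactly $2n\rho^2$. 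Any bound on $\tfrac{2\rho}{n}\bbE[\langle D_z^t, z^t-z_I^t\rangle]$ -- whether via Cauchy--Schwarz and Young directly, or by first expanding $\|D_z^t\|$ through \cref{ass:indiv_lipschitz} -- necessarily adds a strictly positive multiple of $E_z^t$ and/or of $\bbE[\|D_z^t\|^2]$, so what you would actually prove is the lemma with degraded constants (e.g., $1-\tfrac1{2n}+O(\rho^2)$ in place of $1-\tfrac1{2n}$), not the inequality as stated. Moreover, invoking \cref{ass:indiv_lipschitz} here is foreign to this lemma, whose point is precisely to be assumption-free bookkeeping on the memories.

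The paper sidesteps the self-interaction term entirely by ordering the decomposition differently: it first splits on whether slot $i$ is refreshed, writing $\bbE_t[\|z^{t+1}-z_i^{t+1}\|^2]$ as $\tfrac1n\bbE_t[\|z^{t+1}-z^t\|^2]+\tfrac{n-1}{n}\bbE_t[\|z^{t+1}-z_i^t\|^2]$, and only then expands the second term around $z^t$. In that branch the anchor $z^t-z_i^t$ is measurable with respect to the past, so the cross term is exactly $-2\rho\langle D_z(z^t,v^t,x^t), z^t-z_i^t\rangle$ by unbiasedness, and a single Young step with $\beta=\tfrac1{2n\rho}$ gives the stated constants with no leftover term. (To be fair, the paper's first splitting step itself quietly treats the refresh index as decoupled from the index inside $D_z^t$; but granting that step, its proof is clean, whereas yours still has an unabsorbed term even after granting the analogous decoupling.) To repair your write-up, either switch to the paper's ordering of the two decompositions, or accept weaker explicit constants and propagate them through the downstream lemmas.
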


\begin{proof}
We provide the detailed proof for $E^t_z$. The approach for the five others is similar.

Let $i\in\setcomb{n}$. Taking the expectation of $\|z^{t+1} - z_i^{t+1}\|^2$ conditionaly to $z^t, v^t, x^t$ yields

$$
\bbE_t[\|z^{t+1} - z_i^{t+1}\|^2] = \frac1n\bbE_t[\|z^{t+1} - z^t\|^2] +\frac{n -1}n\bbE_t[\|z^{t+1} - z_i^{t}\|^2]\enspace .
$$

Then, using the fact that $\bbE_t[D_z^t(z^t,v^t,x^t)] = D_z(z^t,v^t,x^t)$, we have
\begin{align}
\bbE_t[\|z^{t+1} - z_i^{t}\|^2] = \bbE_t[\|z^{t+1} - z^t\|^2] + \|z^t - z_i^{t}\|^2 - 2 \rho\langle D_z(z^t, v^t, x^t), z^t - z_i^{t}\rangle\enspace .
\end{align}
We then upper-bound crudely the scalar product by Cauchy-Schwarz and Young inequalities with parameter $\beta$:
$$
\bbE_t[\|z^{t+1} - z_i^{t}\|^2] \leq \bbE_t[\|z^{t+1} - z^t\|^2] +\rho\beta^{-1} \|D_z(z^t, v^t, x^t)\|^2 + (1 + \rho\beta)\|z^t - z_i^{t}\|^2
$$

As a consequence, by taking the total expectation and summing for all $i\in\setcomb{n}$, we find
$$
E_z^{t+1} \leq \rho^2\bbE[\|D_z^t\|^2] + \rho \beta^{-1}\left(1 - \frac1n \right) \bbE[\|D_z(z^t, v^t, x^t)\|^2] + (1+\rho\beta)\left(1 - \frac1n\right) E_z^t\enspace .
$$

Finally, we take $\beta = \frac{1}{2n\rho}$ to obtain
\begin{equation}
    \label{eq:bound_sum_memory_z}
\boxed{
E_z^{t+1} \leq \left(1-\frac1{2n}\right)E_z^t +  \rho^2\bbE\|D_z^t(z^t,v^t,x^t)\|^2 + 2n\rho^2 \bbE[\|D_z(z^t, v^t, x^t)\|^2]\enspace .}
\end{equation}
\end{proof}

\subsubsection{Bounds on the variances}

The following lemma gives us upper-bounds for $\bbE[\|D_z^t(z^t,v^t,x^t)\|^2]$, $\bbE[\|D_v^t(z^t,v^t,x^t)\|^2]$, and $\bbE[\|D_x^t(z^t,v^t,x^t)\|^2]$.

\begin{lemma}\label{app:lemma:bound_variance}
For SABA, there are constants $L_z', L'_v, L'_x>0$ such that
$$
\bbE[\|D_z^t(z^t,v^t,x^t)\|]^2 \leq 2\bbE[\|D_z(z^t, v^t, x^t)\|^2] + 2L_z'(E_z^t + E_x^t)\enspace,
$$
$$
\bbE[\|D_v^t(z^t,v^t,x^t)\|^2] \leq 2\bbE[\|D_v(z^t, v^t, x^t)\|^2] + 2L_v'(E_z^t + E_x^t+ E_v^t + E_z'^t + E_x'^t) \new{+ 2L''_v(\delta^t_z + \delta^t_v)}
$$
and
$$
\bbE[\|D_x^t(z^t,v^t,x^t)\|^2] \leq 2\bbE[\|D_x(z^t,v^t,x^t)\|^2] + 2L_x'(E_z^t + E_x^t+ E_v^t + E_z'^t + E_x'^t)\new{+ 2L''_x(\delta^t_z + \delta^t_v)}\enspace.
$$
\end{lemma}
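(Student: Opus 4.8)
The plan is to treat each of the five "$S[\phi,w]$" blocks appearing in the SABA directions as a SAGA-type estimator, apply the standard variance–bias decomposition, and then control the resulting "displacement" terms using the Lipschitz and boundedness properties of the individual functions together with the regularity of $z^*$ and $v^*$. Concretely, fix $t$ and condition on $(z^t,v^t,x^t)$ and all memories. For any family $(\phi_{i'})_{i'}$ with memory $(w_{i'}^t)_{i'}$, the estimator $S[\phi,w]_i^t$ is unbiased, $\bbE_i[S[\phi,w]_i^t]=\frac1n\sum_{i'}\phi_{i'}(y^t)$, and the only $i$-dependent, mean-zero part of it is $\phi_i(y^t)-\phi_i(w_i^t)$ minus its $i$-average (the term $\frac1n\sum_{i'}\phi_{i'}(w_{i'}^t)$ does not depend on $i$). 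Hence $\bbE_i[\|S[\phi,w]_i^t\|^2]\le 2\|\frac1n\sum_{i'}\phi_{i'}(y^t)\|^2 + 2\,\bbE_i[\|\phi_i(y^t)-\phi_i(w_i^t)\|^2]$. Summing the two blocks of $D_v^t$ (resp. $D_x^t$) — which are built from the $G$-memory $(w_i)$ and the $F$-memory $(\tilde w_j)$ and are conditionally independent, so their fluctuations' cross term vanishes — and taking total expectation, the whole lemma reduces to bounding $\bbE\,\bbE_i[\|\phi_i(y^t)-\phi_i(w_i^t)\|^2]$ for $\phi\in\{\nabla_1G,\nabla_{11}^2Gv,\nabla_{21}^2Gv,\nabla_1F,\nabla_2F\}$.

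For $\phi_i=\nabla_1G_i$ (and symmetrically $\nabla_1F_j$, $\nabla_2F_j$), Lipschitz continuity in $(z,x)$ from \cref{ass:indiv_lipschitz} immediately gives $\|\phi_i(y^t)-\phi_i(w_i^t)\|^2\le L^2(\|z^t-z_i^t\|^2+\|x^t-x_i^t\|^2)$; averaging over $i$ and taking expectation yields exactly $E_z^t+E_x^t$ (resp. $E_z'^t+E_x'^t$), with no $v$-dependence, which already produces the first displayed bound on $\bbE[\|D_z^t\|^2]$ and the $F$-contributions to the other two.

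The delicate maps are the Hessian–vector products $\psi_i(z,v,x)=\nabla_{11}^2G_i(z,x)v$ and $\nabla_{21}^2G_i(z,x)v$, which are not globally Lipschitz in $(z,v,x)$ because of the bilinear dependence on $v$. For these I would insert the reference points $y^*(x^t)=(z^*(x^t),v^*(x^t),x^t)$ and $y^*(x_i^t)$ and write
\[
\nabla_{11}^2G_i(z^t,x^t)v^t-\nabla_{11}^2G_i(z_i^t,x_i^t)v_i^t=\big[\psi_i(y^t)-\psi_i(y^*(x^t))\big]-\big[\psi_i(w_i^t)-\psi_i(y^*(x_i^t))\big]+\big[\psi_i(y^*(x^t))-\psi_i(y^*(x_i^t))\big].
\]
Inside each bracket, whenever a \emph{difference of Hessians} multiplies a vector that is itself a displacement ($v^t-v^*(x^t)$, $v_i^t-v^*(x_i^t)$, or $v^*(x^t)-v^*(x_i^t)$), bound the Hessian difference by twice the uniform bound on $\|\nabla_{11}^2G_i\|$ (finite because $\nabla_1G_i$ is Lipschitz); whenever it multiplies the bounded vector $v^*$ (bounded by $C_F/\mu_G$ by \cref{ass:2,ass:3}), instead use the Lipschitz continuity of $\nabla_{11}^2G_i$. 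This pairing kills every fourth-order product and leaves only terms proportional to $\|z^t-z^*(x^t)\|^2$, $\|v^t-v^*(x^t)\|^2$, $\|z_i^t-z^*(x_i^t)\|^2$, $\|v_i^t-v^*(x_i^t)\|^2$, $\|z^t-z_i^t\|^2$ and $\|x^t-x_i^t\|^2$. Since $(z_i^t,v_i^t,x_i^t)$ is a past iterate, the "past distances" reduce to current quantities via the triangle inequality and the Lipschitz continuity of $z^*,v^*$ (\cref{lemma:smoothness_star}), e.g. $\|v_i^t-v^*(x_i^t)\|\le\|v_i^t-v^t\|+\|v^t-v^*(x^t)\|+L_*\|x^t-x_i^t\|$. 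Averaging over $i$ and taking expectations converts everything into $\delta_z^t,\delta_v^t$ and $E_z^t,E_v^t,E_x^t$; collecting constants gives $L_v',L_v''$, and the identical computation with $\nabla_{21}^2G_i$ and $\nabla_2F_j$ gives $L_x',L_x''$.

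The main obstacle is exactly this non-Lipschitzness of the Hessian–vector maps in $(z,v,x)$: one must carefully pair each Hessian difference with the "right" factor (a bounded $v^*$ versus a displacement in $v$) so that no term of order $\|\cdot\|^2\|\cdot\|^2$ survives. It is this bookkeeping that forces both $\delta_z^t$ and $\delta_v^t$ (but no $\delta$ for $D_z^t$) to appear, and that is why SABA needs the extra regularity — boundedness of $v^*$ and Lipschitzness of $\nabla_{11}^2G_i,\nabla_{21}^2G_i$. Everything else is the routine SAGA variance computation together with Cauchy–Schwarz and Young's inequality.
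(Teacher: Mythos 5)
Your proposal is correct and follows essentially the same route as the paper's proof: the SAGA variance--bias bound $\bbE_i[\|S\|^2]\le 2\|\bbE_i S\|^2+2\bbE_i[\|\phi_i(y^t)-\phi_i(w_i^t)\|^2]$, independence of the two index draws so the variances add, Lipschitz continuity of $\nabla_1 G_i,\nabla F_j$ for the plain gradient blocks, and insertion of the reference point $(z^*(x^t),x^t,v^*(x^t))$ to tame the bilinear $v$-dependence of the Hessian--vector products, pairing each Hessian difference with either the bounded $v^*$ or a displacement so that the $\delta_z^t,\delta_v^t$ terms appear. Your only deviation is that you also pass through the reference $y^*(x_i^t)$ and bridge back via the Lipschitz continuity of $z^*,v^*$, whereas the paper compares $(z_i^t,x_i^t,v_i^t)$ directly to $(z^*(x^t),x^t,v^*(x^t))$ using the triangle inequality through $(z^t,v^t)$; this only changes the constants.
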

\begin{proof}
For SABA, if we consider $i$ sampled from $\setcomb{n}$ at iteration $t$, we have
$$
D_z^t = \nabla_1 G_i(z^t, x^t) -\nabla_1 G_i(z^t_i, x^t_i) +\frac1n\sum_{i'=1}^n\nabla_1 G_{i'}(z^t_{i'}, x^t_{i'})\enspace.
$$

Hence we get
\begin{align}\nonumber
 \bbE_t[\|D_z^t(z^t,v^t,x^t)\|^2] &= \bbE_t[\|\nabla_1 G_i(z^t, x^t) -\nabla_1 G_i(z^t_i, x^t_i)
+\frac1n\sum_{i'=1}^N\nabla_1 G_{i'}(z^t_{i'}, x^t_{i'})\\\nonumber
&\qquad-\nabla_1 G(z^t, x^t) + \nabla_1 G(z^t, x^t)\|^2] \\
 &\label{eq:var_dz}\leq 2\|\nabla_1 G(z^t, x^t)\|^2 + 2 \bbE_t[\|\nabla_1 G_i(z^t, x^t) -\nabla_1 G_i(z^t_i, x^t_i) \\\nonumber
 &\qquad+\frac1n\sum_{i'=1}^N\nabla_1 G_{i'}(z^t_{i'}, x^t_{i'})-\nabla_1 G(z^t, x^t)\|^2]\enspace .
\end{align}
The second term is the variance of $\nabla_1 G_i(z^t, x^t) -\nabla_1 G_i(z^t_i, x^t_i)$, which is therefore upper-bounded by

\begin{align}\nonumber
\bbE_t[\|[\nabla_1 G_i(z^t, x^t) -\nabla_1 G_i(z^t_i, x^t_i)\|^2]
&= \frac1n \sum_{i=1}^n\|[\nabla_1 G_i(z^t, x^t) -\nabla_1 G_i(z^t_i, x^t_i)\|^2\\
&\label{eq:var_grad_G_i}\leq \frac{L_z'}n\sum_{i=1}^n(\|z^t-z_i^t\|^2 + \|x^t-x_i^t\|^2)
\end{align}
where the inequality comes from the Lipschitz continuity of each $\nabla_1 G_i$ with $L'_z = \max_{i\in\setcomb{n}} L^{G_i}_1$.

Then, by plugging \eqref{eq:var_grad_G_i} into \eqref{eq:var_dz} and taking the total expectation, we get
\begin{equation}
    \label{eq:bound_variance_saga_z}
    \boxed{
    \bbE[\|D_z^t(z^t,v^t,x^t)\|]^2 \leq 2\bbE[\|D_z(z^t, v^t, x^t)\|^2] + 2L_z'(E_z^t + E_x^t)\enspace.}
\end{equation}

Things are quite similar for the other variables, albeit a bit more difficult.

In $v$, it holds
\begin{align}
 \bbE_t[\|D_v^t(z^t,v^t,x^t)\|^2] = &\bbE_t[\|\nabla_1 F_j(z^t, x^t) -\nabla_1 F_j(z'^t_j, x'^t_j) +\frac1m\sum_{j'=1}^m\nabla_1 F_{j'}(z'^t_{j'}, x'^t_{j'}) \\\nonumber
 &+\nabla_{11}^2 G_i(z^t, x^t)v^t -\nabla_{11}^2 G_i(z^t_i, x^t_i)v^t_i +\frac1n\sum_{i'=1}^n\nabla_1^2 G_{i'}(z^t_{i'}, x^t_{i'})v_{i'^t}\\\nonumber
 &-D_v(z^t, v^t, x^t) +D_v(z^t, v^t, x^t)\|^2] \\
 \leq \label{eq:var_dv}&2[\|D_v(z^t,v^t,x^t)\|^2\\\nonumber
 &+ 2\bbE_t[\|\nabla_1 F_j(z^t, x^t) -\nabla_1 F_j(z'^t_j, x'^t_j) +\frac1m\sum_{j'=1}^m\nabla_1 F_{j'}(z'^t_{j'}, x'^t_{j'}) \\\nonumber
 &+\nabla_{11}^2 G_i(z^t, x^t)v^t -\nabla_{11}^2 G_i(z^t_i, x^t_i)v^t_i +\frac1n\sum_{i'=1}^n\nabla_1 G_{i'}^2(z^t_{i'}, x^t_{i'})v_{i'}^t\\\nonumber
 &-D_v(z^t, v^t, x^t) \|^2]\\
\end{align}

Here, we see that we need to control the variance of $\nabla_1 F_j(z^t, x^t) -\nabla_1 F_j(z'^t_j, x'^t_j) + \nabla_{11}^2 G_i(z^t, x^t)v^t -\nabla_{11}^2 G_i(z^t_i, x^t_i)v^t_i$. Since $i$ and $j$ are independent, this is a sum of two independent random variables, hence its variance is the sum of the variances, which is upper-bounded by
$$
\bbE_t[\|\nabla_1 F_j(z^t, x^t) -\nabla_1 F_j(z'^t_j, x'^t_j)\|^2] + \bbE_t[\| \nabla_{11}^2 G_i(z^t, x^t)v^t -\nabla_{11}^2 G_i(z^t_i, x^t_i)v^t_i\|^2]\enspace.
$$
\new{
For $\bbE_t[\|\nabla_1 F_j(z^t, x^t) -\nabla_1 F_j(z'^t_j, x'^t_j)\|^2] $ we use the lipschitz continuity of the $\nabla_1 F_j$:
\begin{align}
    \bbE_t[\|\nabla_1 F_j(z^t, x^t) -\nabla_1 F_j(z'^t_j, x'^t_j)\|^2] &\leq \left[\max_{j\in\setcomb{m}}L^{F_j}_1\right]\bbE_t[\|z^t-z_j^t\|^2 + \|x^t-x_j^t\|^2]\\
    &\leq \left[\max_{j\in\setcomb{m}}L^{F_j}_1\right]\frac1m\sum_{j=1}^m(\|z^t-z_j^t\|^2 + \|x^t-x_j^t\|^2)\enspace .
\end{align}
}

\new{
The control of $\bbE_t[\| \nabla_{11}^2 G_i(z^t, x^t)v^t -\nabla_{11}^2 G_i(z^t_i, x^t_i)v^t_i\|^2]$ is a bit harder without assuming the boundness of $v$ beforehand. But, we can bypass the difficulty by introducing $\nabla^2_{11} G_i(z^*(x^t), x^t)v^*(x^t)$:
\begin{align}
    \bbE_t[\| \nabla_{11}^2 G_i(z^t, x^t)v^t -\nabla_{11}^2 G_i(z^t_i, x^t_i)v^t_i\|^2] &\leq 4\{\bbE_t[\|\nabla_{11}^2 G_i(z^t, x^t)(v^t-v^*(x^t))\|^2]\\\nonumber
    &\qquad + \bbE_t[\|(\nabla_{11}^2 G_i(z^t, x^t) - \nabla_{11}^2 G_i(z^*(x^t), x^t))v^*(x^t)\|^2]\\\nonumber
    &\qquad + \bbE_t[\|(\nabla_{11}^2 G_i(z^*(x^t), x^t) - \nabla_{11}^2 G_i(z^t_i, x^t_i))v^*(x^t)\|^2]\\\nonumber
    &\qquad + \bbE_t[\|\nabla_{11}^2 G_i(z^t_i, x^t_i)(v^*(x^t)-v^t_i)\|^2]\}\\
    &\leq 4((\max_{i\in\setcomb{n}}L^{G_i}_1)\|v^t-v^*(x^t)\|^2 + (\max_{i\in\setcomb{n}}L^{G_i}_2)\frac{C_F}{\mu_G}\|z^t-z^*(x^t)\|^2\\\nonumber
    &\qquad + (\max_{i\in\setcomb{n}}L^{G_i}_2)\frac{C_F}{\mu_G}(\|x^t-x^t_i\|^2 + 2(\|z^t-z^*(x^t)\|^2 + \|z^t-z_i^t\|^2))\\\nonumber
    &\qquad + (\max_{i\in\setcomb{n}}L^{G_i}_1)(\|x^t-x^t_i\|^2 + 2(\|v^t-v^*(x^t)\|^2 + \|v^t-v^t_i\|^2))
\end{align}
Let $L_v' = 4\max\left(2\max_{i\in\setcomb{n}}L^{G_i}_1,  2\max_{i\in\setcomb{n}}L^{G_i}_2\frac{C_F}{\mu_G}, \max_{j\in\setcomb{m}}L^{F_j}_1\right)$ and $L_v'' = 4\max\left(3\max_{i\in\setcomb{n}}L^{G_i}_1), 3\max_{i\in\setcomb{n}}L^{G_i}_2)\frac{C_F}{\mu_G}\right)$. Taking the total expectation and putting all together yields}
\begin{equation}\label{eq:bound_variance_saga_v}
\boxed{
\bbE[\|D_v^t(z^t,v^t,x^t)\|^2] \leq 2\bbE[\|D_v(z^t, v^t, x^t)\|^2] + 2L_v'(E_z^t + E_x^t+ E_v^t + E_z'^t + E_x'^t) \new{+ 2L_v''(\delta^t_z + \delta^t_v)}\enspace.}
\end{equation}

In $x$ we have similarly
\begin{equation}\label{eq:bound_variance_saga_x}
\boxed{\bbE[\|D_x^t(z^t,v^t,x^t)\|^2] \leq 2\bbE[\|D_x(z^t,v^t,x^t)\|^2] + 2L_x'(E_z^t + E_x^t+ E_v^t + E_z'^t + E_x'^t) \new{+ 2L_x''(\delta^t_z + \delta^t_v)}\enspace.}
\end{equation}
\end{proof}

We now form $S^t = E_z^t + E_x^t+ E_v^t + E_z'^t + E'^t_v + E_x'^t$, and letting $\Gamma = \min (\frac1m, \frac1n)$.
Note that by definition, each quantity $E_z^t$ is smaller than $S^t$.

We will therefore use the cruder bounds on $\bbE[\|D_z^t\|^2]$, $\bbE[\|D_v^t\|^2]$ and $\bbE[\|D_x^t\|^2]$ as follows thanks to \cref{lemma:smoothness} and \cref{app:lemma:bound_variance}
\begin{equation}\label{app:eq:bound_variance_z_saba}
    \bbE[\|D_z^t(z^t,v^t,x^t)\|^2] \leq 2L^2_z\delta_z^t + 2L_z'S^t\enspace,
\end{equation}

\begin{equation}\label{app:eq:bound_variance_v_saba}
    \bbE[\|D_v^t(z^t,v^t,x^t)\|^2] \leq 2(L^2_v + L''_v)(\delta_z^t + \delta_v^t) + 2L_v'S^t
\end{equation}

and
\begin{equation}\label{app:eq:bound_variance_x_saba}
    \bbE[\|D_x^t(z^t,v^t,x^t)\|^2] \leq 2\bbE[\|D_x\|^2] + 2L_x'S^t + 2L_x''(\delta^t_z + \delta^t_v)\enspace .
\end{equation}

We have the following lemma
\begin{lemma}
If $4\rho^2(L_z'+L_v') + 4\gamma^2L_x'\leq \frac\Gamma 2$ and \new{$4L_x''\gamma^2\leq \rho^2(L_v^2+4L_v'')$}, it holds
$$S^{t+1} \leq \left(1-\frac\Gamma 2\right)S^t +\beta_{sz} \rho^2\delta_z^t+\beta_{sv}\rho^2\delta_v^t + P\gamma^2\bbE[\|D_x\|^2]$$
for some $L_s,\beta_{sz},P >0$.
\end{lemma}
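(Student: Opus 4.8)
The plan is to add up the six one-step inequalities of \cref{lemma:bound_memory}, substitute the crude variance bounds \eqref{app:eq:bound_variance_z_saba}--\eqref{app:eq:bound_variance_x_saba} and the smoothness bounds of \cref{lemma:smoothness} on the exact directions, and then invoke the two stepsize conditions to massage the result into the claimed form.

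First I would sum the inequalities for $E_z^{t+1}, E_v^{t+1}, E_x^{t+1}, E_z'^{t+1}, E_v'^{t+1}, E_x'^{t+1}$. Since $\Gamma=\min(\tfrac1n,\tfrac1m)$, every contraction coefficient $1-\tfrac1{2n}$ or $1-\tfrac1{2m}$ is at most $1-\tfrac{\Gamma}{2}$, so $S^{t+1}\le (1-\tfrac{\Gamma}{2})S^t + (\text{forcing})$. The forcing splits into the stochastic-direction self-terms $2\rho^2\bbE\|D_z^t\|^2 + 2\rho^2\bbE\|D_v^t\|^2 + 2\gamma^2\bbE\|D_x^t\|^2$ (each norm appearing once from the $n$-memory recursion and once from the $m$-memory one), and the exact-direction forcing terms, whose prefactors are $2n$ for the three $n$-memories and $2m$ for the three $m$-memories, hence bounded by $2N\rho^2\bbE\|D_z(z^t,v^t,x^t)\|^2 + 2N\rho^2\bbE\|D_v(z^t,v^t,x^t)\|^2 + 2N\gamma^2\bbE\|D_x(z^t,v^t,x^t)\|^2$ with $N=n+m$.

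Next I would substitute. For the exact-direction terms, \cref{lemma:smoothness} gives $\bbE\|D_z(z^t,v^t,x^t)\|^2\le L_z^2\delta_z^t$ and $\bbE\|D_v(z^t,v^t,x^t)\|^2\le L_v^2(\delta_z^t+\delta_v^t)$, while $\bbE\|D_x(z^t,v^t,x^t)\|^2$ is kept as the quantity $\bbE\|D_x\|^2$ appearing on the right-hand side. For the self-terms, \eqref{app:eq:bound_variance_z_saba}, \eqref{app:eq:bound_variance_v_saba}, \eqref{app:eq:bound_variance_x_saba} turn $\bbE\|D_z^t\|^2,\bbE\|D_v^t\|^2,\bbE\|D_x^t\|^2$ into linear combinations of $\delta_z^t$, $\delta_v^t$, $S^t$ and $\bbE\|D_x\|^2$. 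Collecting, the coefficient of $S^t$ created by these substitutions is exactly $4\rho^2(L_z'+L_v')+4\gamma^2L_x'$, which the first hypothesis bounds by $\tfrac{\Gamma}{2}$, so it can be folded back into the geometric factor and $S^{t+1}$ still contracts. The terms in $\delta_z^t$ and $\delta_v^t$ arrive with two kinds of weight: a $\rho^2$-weight, produced both by the bounds on $\bbE\|D_z^t\|^2,\bbE\|D_v^t\|^2$ and by the exact forcing terms $2N\rho^2\bbE\|D_z\|^2,2N\rho^2\bbE\|D_v\|^2$; and a $\gamma^2$-weight $4\gamma^2L_x''(\delta_z^t+\delta_v^t)$ coming from the $L_x''(\delta_z^t+\delta_v^t)$ piece of the bound on $\bbE\|D_x^t\|^2$. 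The second hypothesis $4L_x''\gamma^2\le\rho^2(L_v^2+4L_v'')$ is precisely what lets the $\gamma^2$-weighted piece be dominated by the $\rho^2$-weighted one, so all $\delta$-terms merge into $\beta_{sz}\rho^2\delta_z^t+\beta_{sv}\rho^2\delta_v^t$ for explicit $\beta_{sz},\beta_{sv}$. Finally the surviving $\bbE\|D_x\|^2$ terms, namely $2N\gamma^2\bbE\|D_x\|^2$ from the $E_x,E_x'$ forcing and $4\gamma^2\bbE\|D_x\|^2$ from the bound on $\bbE\|D_x^t\|^2$, collect into $P\gamma^2\bbE\|D_x\|^2$, which gives the stated inequality.

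The only real difficulty is the bookkeeping: keeping straight how each of the six memory quantities feeds the coefficients of $S^t$, $\delta_z^t$, $\delta_v^t$ and $\bbE\|D_x\|^2$, and checking that the two stepsize hypotheses are exactly the ones needed -- the first to reabsorb the self-dependence of $S^{t+1}$ on $S^t$ through the variance terms, the second to kill off the $\gamma^2$-weighted $\delta$-terms re-introduced by the variance bound on $D_x^t$ so that no $\gamma^2\delta$ term survives on the right. Conceptually nothing new is required: the memory recursions of \cref{lemma:bound_memory}, the variance bounds of \cref{app:lemma:bound_variance}, and the smoothness bounds of \cref{lemma:smoothness} are all already in hand.
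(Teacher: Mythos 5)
Your proposal follows the paper's proof essentially verbatim: sum the six memory recursions of \cref{lemma:bound_memory}, substitute the variance bounds \eqref{app:eq:bound_variance_z_saba}--\eqref{app:eq:bound_variance_x_saba} and \cref{lemma:smoothness}, absorb the resulting $S^t$ self-coefficient $4\rho^2(L_z'+L_v')+4\gamma^2L_x'$ via the first step-size condition, and use the second condition to dominate the $\gamma^2$-weighted $\delta$-terms from the $D_x^t$ bound by $\rho^2$-weighted ones. The only caveat, which you share with the paper, is the bookkeeping of the contraction constant: the individual recursions contract by $1-\frac{1}{2n}\leq 1-\frac{\Gamma}{2}$, so folding an extra $\frac{\Gamma}{2}$ back in nominally yields factor $1$ rather than $1-\frac{\Gamma}{2}$ — a factor-of-two issue in the definition of $\Gamma$ that affects constants only.
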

\begin{proof}
It holds following eq.~\eqref{eq:bound_sum_memory_z} (and omitting the dependencies in $(z^t,v^t,x^t)$ in the direction for simplicity)
\begin{align*}
S^{t+1}&\leq \left(1-\Gamma\right)S^t + \bbE\left[2\rho^2(\|D_z^t\|^2 + \|D_v^t\|^2) + 2\gamma^2\|D_x^t\|^2\right.\\
&\qquad\left.+ 2(m + n)[\rho^2(\|D_z\|^2 + \|D_v\|^2) + \gamma^2\|D_x\|^2]\right]\enspace.
\end{align*}

Using the previous bounds \eqref{eq:bound_variance_saga_z}, \eqref{eq:bound_variance_saga_v} and \eqref{eq:bound_variance_saga_x}, we get
\begin{align*}
    S^{t+1}&\leq \left(1 -\Gamma + 4\rho^2(L_z' + L_v') + 4\gamma^2L_x'\right)S^t + (2(m + n) + 4)\bbE[\rho^2(\|D_z\|^2 + \|D_v\|^2)\\\nonumber
    &\qquad +\gamma^2\|D_x\|^2] \new{+ 4L_v''\rho^2(\delta^t_z+\delta^t_v) + 4L_x''\gamma^2(\delta^t_z+\delta^t_v)}\enspace .
\end{align*}

Next,
using $4\rho^2(L_z' + L_v')  + 4\gamma^2L_x' \leq \frac\Gamma 2$ and letting $P = (2(m + n) + 4)$ we get
$$
S^{t+1} \leq \left(1-\frac\Gamma 2\right)S^t + P\bbE[\rho^2(\|D_z\|^2 + \|D_v\|^2)+\gamma^2\|D_x\|^2] + \new{+ 4L_v''\rho^2(\delta^t_z+\delta^t_v) + 4L_x''\gamma^2(\delta^t_z+\delta^t_v)}\enspace.
$$
To finish, we use \cref{lemma:smoothness} to get
$$
S^{t+1} \leq \left(1-\frac\Gamma 2\right)S^t + P[\rho^2((L_z^2  + L_v^2)\delta_z^t+L_v^2\delta_v^t) \new{+(4L''_v\rho^2+4L''_x\gamma^2)(\delta^t_z+\delta^t_v)}+\new{\gamma^2\bbE[\|D_x\|^2]}]\enspace .
$$

Then, using that \new{$4L_x''\gamma^2\leq \rho^2(L_v^2+4L_v'')$}, we get the bound, letting $L_{sz} = L_z^2 + L_v^2+4L_v''$ and $L_{sv} = L_v^2+4L_v''$:
$$
\boxed{
    S^{t+1} \leq \left(1-\frac\Gamma 2\right)S^t +\beta_{sz} \rho^2\delta_z^t+\beta_{sv}\rho^2\delta_v^t + P\gamma^2\bbE[\|D_x\|^2]
}
$$
with $\beta_{sz} = 2PL_{sz}$, $\beta_{sv} = 2PL_{sv}$
\end{proof}

\subsubsection{Putting it all together}

Recall that we denote $g^t = \bbE[\|\nabla h(x^t)\|^2]$ and $h^t = \bbE[h(x^t)]$.
In the following lemma, we adapt \cref{lemma:coupled_inequalities} and \cref{lemma:descent_lemma} to the SABA algorithm.

\begin{lemma}\label{app:lemma:coupled_inequality_saba}
\new{
If
$$
\rho\leq\min\left(\frac{\mu_G}{64L_z^2},\frac{\overline{\beta}_{zx}}{2\beta_{zx}},\frac{\mu_G}{128(L_v^2 + L_v'')},\frac{\beta_{vz}}{8(L_v^2+L_v'')},\frac{\overline{\beta}_{vx}}{2\beta_{vx}}\right)
$$
and
$$
\gamma\leq\min\left(\sqrt{\frac{\rho\mu_G}{64\beta_{zx}L_x''}}, \sqrt{\frac{L_z'}{2L_x'\beta_{zx}}}\rho,\sqrt{\frac{\rho\mu_G}{128\beta_{vx}L_x''}},\sqrt{\frac{\rho\beta_{vz}}{4L_x''\beta_{vx}}},\sqrt{\frac{L_v'}{2L_x'\beta_{vx}}}\rho,\frac1{4L^h},\frac{L_x^2}{2L^hL_x''} \right)
$$
then it holds
\begin{equation}\label{app:eq:decrease_z_saba}
    \delta^{t+1}_z \leq \left(1-\frac{\rho\mu_G}8\right)\delta^t_z + 2L_x''\beta_{zx}\gamma^2\delta^t_v+5L_z'\rho^2S^t + 2\overline{\beta}_{zx}\frac{\gamma^2}\rho\bbE[\|D_x(z^t,v^t,x^t)\|^2]\enspace,
\end{equation}
}

\new{
\begin{equation}\label{app:eq:decrease_v_saba}
\delta^{t+1}_v \leq \left(1-\frac{\rho\mu_G}{16}\right)\delta_v^t +   3\beta_{vz}\rho\delta_z^t + 5L_v'\rho^2S^t +2\overline{\beta}_{vx}\frac{\gamma^2}{\rho} \bbE[\|D_x(z^t,v^t,x^t)\|^2]
\end{equation}
}
and
\new{
\begin{equation}\label{app:eq:decrease_h_saba}
h^{t+1}\leq h^t - \frac\gamma2 g^t - \frac\gamma4\bbE[\|D_x(z^t,v^t,x^t)\|^2] + L_x^2\gamma(\delta^t_z + \delta^t_v) + L^hL_x'\gamma^2 S^t\enspace.
\end{equation}
}
\end{lemma}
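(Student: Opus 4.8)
The plan is to specialize the general descent inequalities \cref{lemma:coupled_inequalities} and \cref{lemma:descent_lemma} — which hold for \cref{alg:sgd} with \emph{any} unbiased directions — to the concrete SABA directions, by substituting the SABA variance estimates \eqref{app:eq:bound_variance_z_saba}, \eqref{app:eq:bound_variance_v_saba} and \eqref{app:eq:bound_variance_x_saba} for $V_z^t$, $V_v^t$ and $V_x^t$, and then absorbing the newly created lower-order terms into the leading coefficients with the help of the stated bounds on $\rho$ and $\gamma$. No new idea beyond \cref{lemma:coupled_inequalities}, \cref{lemma:descent_lemma} and \cref{app:lemma:bound_variance} is needed; the work is pure bookkeeping of which constraint kills which error term.

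\textbf{The $\delta_z$ inequality.} I would start from the $\delta_z$ line of \cref{lemma:coupled_inequalities} and plug in $V_z^t\le 2L_z^2\delta_z^t+2L_z'S^t$ together with $V_x^t\le 2\bbE[\|D_x(z^t,v^t,x^t)\|^2]+2L_x'S^t+2L_x''(\delta_z^t+\delta_v^t)$. The coefficient of $\delta_z^t$ becomes $1-\tfrac{\rho\mu_G}4+4L_z^2\rho^2+2\beta_{zx}L_x''\gamma^2$; using $\rho\le\tfrac{\mu_G}{64L_z^2}$ and $\gamma^2\le\tfrac{\rho\mu_G}{64\beta_{zx}L_x''}$ the two error terms are at most $\tfrac{\rho\mu_G}{16}$ and $\tfrac{\rho\mu_G}{32}$, so this coefficient is $\le 1-\tfrac{\rho\mu_G}8$. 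The $S^t$ coefficient $4L_z'\rho^2+2\beta_{zx}L_x'\gamma^2$ is $\le 5L_z'\rho^2$ thanks to $\gamma^2\le\tfrac{L_z'}{2L_x'\beta_{zx}}\rho^2$, and the $\bbE[\|D_x\|^2]$ coefficient $2\beta_{zx}\gamma^2+\overline\beta_{zx}\tfrac{\gamma^2}\rho$ is $\le 2\overline\beta_{zx}\tfrac{\gamma^2}\rho$ thanks to $\rho\le\tfrac{\overline\beta_{zx}}{2\beta_{zx}}$; the cross term $2L_x''\beta_{zx}\gamma^2\delta_v^t$ is kept, which yields \eqref{app:eq:decrease_z_saba}. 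The $\delta_v$ inequality is obtained identically: from the $\delta_v$ line of \cref{lemma:coupled_inequalities} I would insert $V_v^t\le 2(L_v^2+L_v'')(\delta_z^t+\delta_v^t)+2L_v'S^t$ and the same bound on $V_x^t$; $\rho\le\tfrac{\mu_G}{128(L_v^2+L_v'')}$ and $\gamma^2\le\tfrac{\rho\mu_G}{128\beta_{vx}L_x''}$ bring the $\delta_v^t$ coefficient below $1-\tfrac{\rho\mu_G}{16}$, $\rho\le\tfrac{\beta_{vz}}{8(L_v^2+L_v'')}$ and $\gamma^2\le\tfrac{\rho\beta_{vz}}{4\beta_{vx}L_x''}$ keep the $\delta_z^t$ coefficient below $3\beta_{vz}\rho$, and $\gamma^2\le\tfrac{L_v'}{2L_x'\beta_{vx}}\rho^2$ together with $\rho\le\tfrac{\overline\beta_{vx}}{2\beta_{vx}}$ handle the $S^t$ and $\bbE[\|D_x\|^2]$ terms, giving \eqref{app:eq:decrease_v_saba}.

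\textbf{The $h$ inequality.} From \cref{lemma:descent_lemma}, substituting $V_x^t\le 2\bbE[\|D_x\|^2]+2L_x'S^t+2L_x''(\delta_z^t+\delta_v^t)$, the term $\tfrac{L^h}2\gamma^2V_x^t$ splits as $L^h\gamma^2\bbE[\|D_x\|^2]+L^hL_x'\gamma^2S^t+L^hL_x''\gamma^2(\delta_z^t+\delta_v^t)$. With $\gamma\le\tfrac1{4L^h}$ the first piece is $\le\tfrac\gamma4\bbE[\|D_x\|^2]$, which turns the existing $-\tfrac\gamma2\bbE[\|D_x\|^2]$ into $-\tfrac\gamma4\bbE[\|D_x\|^2]$; with $\gamma\le\tfrac{L_x^2}{2L^hL_x''}$ the third piece is $\le\tfrac\gamma2L_x^2(\delta_z^t+\delta_v^t)$, which added to the $\tfrac\gamma2L_x^2(\delta_z^t+\delta_v^t)$ already present gives coefficient $L_x^2\gamma$; the middle piece stays $L^hL_x'\gamma^2S^t$. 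This is exactly \eqref{app:eq:decrease_h_saba}.

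\textbf{Main obstacle.} There is no conceptual hurdle; the delicate point is purely the constant chasing — tracking every error term, tuning the thresholds so that the coefficients collapse \emph{exactly} to the ones written in the statement, and checking that the listed hypotheses (in particular the various bounds on $\gamma$ in terms of $\sqrt\rho$, together with \cref{assumption:bounded_gradients_x}) indeed imply the hypothesis $\gamma^2\le\min(\cdots)\rho$ under which \cref{lemma:coupled_inequalities} was proved, so that it may legitimately be invoked here.
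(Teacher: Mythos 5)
Your proposal is correct and follows essentially the same route as the paper: the paper's proof likewise takes the general inequalities of \cref{lemma:coupled_inequalities} and \cref{lemma:descent_lemma}, substitutes the SABA variance bounds \eqref{app:eq:bound_variance_z_saba}--\eqref{app:eq:bound_variance_x_saba}, and uses exactly the listed step-size constraints to absorb each error term into the stated coefficients (your constant-by-constant accounting matches theirs, up to an immaterial factor in the $\delta_z^t$ coefficient where the paper writes $4\beta_{zx}L_x''\gamma^2$ for what direct substitution gives as $2\beta_{zx}L_x''\gamma^2$). Your closing caveat about verifying the hypothesis $\gamma^2\leq\min(\cdots)\rho$ of \cref{lemma:coupled_inequalities} is a legitimate bookkeeping point that the paper handles only implicitly when assembling the step sizes in the main theorems.
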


\begin{proof}
\new{
We start from \cref{lemma:coupled_inequalities} and plug the bounds of Equations \eqref{app:eq:bound_variance_z_saba} and \eqref{app:eq:bound_variance_v_saba}.
\begin{align}\label{app:eq:z_decrease_saba_1}
    \delta^{t+1}_z &\leq \left(1-\frac{\rho\mu_G}4 + 4L_z^2\rho^2 + 4\beta_{zx}L_x''\gamma^2\right)\delta^t_z + 2L_x''\beta_{zx}\gamma^2\delta^t_v\\\nonumber
    &\qquad+(4L_z'\rho^2+2L_x'\beta_{zx}\gamma^2)S^t + \left(2\beta_{zx}\gamma^2 + \overline{\beta}_{zx}\frac{\gamma^2}\rho\right)\bbE[\|D_x(z^t,v^t,x^t)\|^2]
\end{align}
Since $\rho\leq \frac{\mu_G}{64L_z^2}$ and $\gamma^2\leq\frac{\rho\mu_G}{64\beta_{zx}L_x''}$, we have
\begin{equation}\label{app:eq:simplify_dz_1}
    -\frac{\rho\mu_G}4 + 4L_z^2\rho^2 + 4\beta_{zx}L_x''\gamma^2\leq -\frac{\rho\mu_G}8\enspace.
\end{equation}
The condition $\gamma^2\leq\frac{L_z'}{2L_x'\beta_{zx}}\rho^2$ gives us
\begin{equation}\label{app:eq:simplify_dz_2}
    4L_z'\rho^2+2L_x'\beta_{zx}\gamma^2 \leq 5L_z'\rho^2\enspace.
\end{equation}
With $\rho\leq \frac{\overline{\beta}_{zx}}{2\beta_{zx}}$, we get
\begin{equation}\label{app:eq:simplify_dz_3}
    2\beta_{zx}\gamma^2 + \overline{\beta}_{zx}\frac{\gamma^2}\rho \leq 2\overline{\beta}_{zx}\frac{\gamma^2}\rho\enspace.
\end{equation}
We can plug Equations \eqref{app:eq:simplify_dz_1}, \eqref{app:eq:simplify_dz_2} and \eqref{app:eq:simplify_dz_3} into \cref{app:eq:z_decrease_saba_1} and we end up with
$$
     \delta^{t+1}_z \leq \left(1-\frac{\rho\mu_G}8\right)\delta^t_z + 2L_x''\beta_{zx}\gamma^2\delta^t_v+5L_z'\rho^2S^t + 2\overline{\beta}_{zx}\frac{\gamma^2}\rho\bbE[\|D_x(z^t,v^t,x^t)\|^2]\enspace.
$$
The proof for $\delta^t_v$ is quite similar. From \cref{lemma:coupled_inequalities}, Equations \eqref{app:eq:bound_variance_v_saba} and \eqref{app:eq:bound_variance_x_saba}.
\begin{align}\label{app:eq:v_decrease_saba_1}
\delta^{t+1}_v &\leq \left(1-\frac{\rho\mu_G}8\right)\delta_v^t +   \beta_{vz}\rho\delta_z^t +2\rho^2V_v^t +  \beta_{vx}\gamma^2V_x^t +\overline{\beta}_{vx}\frac{\gamma^2}{\rho} \bbE[\|D_x(z^t,v^t,x^t)\|^2]\\
    &\leq \left(1-\frac{\rho\mu_G}8 + 4(L_v^2+L_v'')\rho^2 + 4L_x''\beta_{vx}\gamma^2\right)\delta_v^t +   (4(L_v^2+L_v'')\rho^2 + 2L_x''\beta_{vx}\gamma^2 + \beta_{vz}\rho)\delta_z^t + \\\nonumber &\qquad+\left(4L_v'\rho^2+2L_x'\beta_{vx}\gamma^2\right)S^t+\left(2\beta_{vx}\gamma^2+\overline{\beta}_{vx}\frac{\gamma^2}{\rho}\right) \bbE[\|D_x(z^t,v^t,x^t)\|^2]\enspace.
\end{align}
Using $\rho\leq\frac{\mu_G}{128(L_v^2 + L_v'')}$ and $\gamma^2\leq\frac{\rho\mu_G}{128L_x''\beta_{vx}}$, we get
\begin{align}\label{app:eq:simplify_dv_1}
    -\frac{\rho\mu_G}8 + 4(L_v^2+L_v'')\rho^2 + 4L_x''\beta_{vx}\gamma^2\leq -\frac{\rho\mu_G}{16}\enspace.
\end{align}
With $\gamma^2\leq\frac{\rho\beta_{vz}}{4L_x''\beta_{vx}}$ and $\rho\leq\frac{\beta_{vz}}{8(L_v^2+L_v'')}$, we have
\begin{align}\label{app:eq:simplify_dv_2}
    4(L_v^2+L_v'')\rho^2 + 2L_x''\beta_{vx}\gamma^2 + \beta_{vz}\rho\leq 3\beta_{vz}\rho\enspace.
\end{align}
The condition $\gamma^2\leq\frac{L'_v}{2L_x'\beta_{vx}}\rho^2$ yields
\begin{align}\label{app:eq:simplify_dv_3}
    4L_v'\rho^2 + 2L_x'\beta_{vx}\gamma^2  \leq 5L_v'\rho^2\enspace.
\end{align}
With $\rho\leq\frac{\overline{\beta}_{vx}}{2\beta_{vx}}$ we get
\begin{align}\label{app:eq:simplify_dv_4}
   2\beta_{vx}\gamma^2+\overline{\beta}_{zx}\frac{\gamma^2}{\rho} \leq 2\overline{\beta}_{vx}\frac{\gamma^2}\rho\enspace.
\end{align}
As a consequence of Equations \eqref{app:eq:v_decrease_saba_1}, \eqref{app:eq:simplify_dv_1},  \eqref{app:eq:simplify_dv_2}, \eqref{app:eq:simplify_dv_3} and \eqref{app:eq:simplify_dv_4}, we have
$$
    \delta^{t+1}_v \leq \left(1-\frac{\rho\mu_G}{16}\right)\delta_v^t +   3\beta_{vz}\rho\delta_z^t + 5L_v'\rho^2S^t +2\overline{\beta}_{vx}\frac{\gamma^2}{\rho} \bbE[\|D_x(z^t,v^t,x^t)\|^2]\enspace.
$$
For the inequality on $h^t$, we start from Equations \eqref{eq:descent_h} and \eqref{app:eq:bound_variance_x_saba}
\begin{align}
    h^{t+1}&\leq h^t - \frac\gamma2 g^t - \left(\frac\gamma2 - L^h\gamma^2\right)\bbE[\|D_x(z^t,v^t,x^t)\|^2] \\\nonumber
    &\qquad + \left(\frac{L_x^2}2\gamma + L^hL_x''\gamma^2\right)(\delta^t_z + \delta^t_v) + L^hL_x'\gamma^2 S^t\enspace.
\end{align}
Assuming $\gamma\leq\min\left(\frac1{4L^h}, \frac{L_x^2}{2L^hL_x''}\right)$ leads
\begin{equation}
        h^{t+1}\leq h^t - \frac\gamma2 g^t - \frac\gamma4\bbE[\|D_x(z^t,v^t,x^t)\|^2] + L_x^2\gamma(\delta^t_z + \delta^t_v) + L^hL_x'\gamma^2 S^t\enspace.
\end{equation}
}
\end{proof}

We are now ready to prove \cref{th:cvg_saba_smooth}.

\begin{proof}
We consider the Lyapunov function
\begin{equation}\label{eq:lyap_def}
\mathcal{L}^t = h^t + \phi_s S^t + \phi_z \delta_z^t + \phi_v\delta_v^t
\end{equation}
for some constants $\phi_s$, $\phi_z$ and $\phi_v$.

We have
\new{
\begin{align*}
    \mathcal{L}^{t+1} -\mathcal{L}^t &\leq -\frac\gamma2 g^t - \left(\frac\gamma4-2\phi_z\overline{\beta}_{zx}\frac{\gamma^2}\rho -2\phi_v\overline{\beta}_{vx}\frac{\gamma^2}\rho - \phi_sP\gamma^2 \right)\bbE[\|D_x(z^t,v^t,x^t)\|^2]\\
    &\qquad - \left(\phi_z\frac{\mu_G}8\rho - L_x^2\gamma - 8\phi_v\beta_{vz}\rho - \phi_s\beta_{sz}\rho^2\right)\delta^t_z\\
    &\qquad - \left(\phi_v\frac{\mu_G}{16}\rho - L_x^2\gamma - 2\phi_zL_x''\gamma^2 - \phi_s\beta_{sv}\rho^2\right)\delta^t_v\\
    &\qquad -\left(\phi_s\frac\Gamma2 - 5\phi_zL_z'\rho^2-5\phi_vL_v'\rho^2-L^hL_x'\gamma^2\right)S^t\enspace.
\end{align*}
}

\new{
To get a decrease, $\phi_z$, $\phi_v$ and $\phi_s$, $\rho$ and $\gamma$ must be such that:
\begin{align*}
   2\phi_z\overline{\beta}_{zx}\frac{\gamma^2}\rho  + 2\phi_v\overline{\beta}_{vx}\frac{\gamma^2}\rho + \phi_sP\gamma^2 &\leq\frac\gamma4\\
   L_x^2\gamma + 8\phi_v\beta_{vz}\rho + \phi_s\beta_{sz}\rho^2&\leq\phi_z\frac{\mu_G}8\rho\\
   L_x^2\gamma + 8\phi_zL_x''\gamma^2 + \phi_s\beta_{sv}\rho^2&\leq\phi_v\frac{\mu_G}{16}\rho\\
    5\phi_zL_z'\rho^2+5\phi_vL_v'\rho^2+L^hL_x'\gamma^2 &\leq \phi_s\frac\Gamma2\enspace.
\end{align*}
}

\new{
In order to take into account the scaling of the quantities with respect to $N = n + m$, we take $\rho = \rho'N^{n_\rho}$, $\gamma = \gamma'N^{n_\gamma}$, $\phi_z = \phi_z'N^{n_z}$, $\phi_v = \phi_v'N^{n_v}$ and $\phi_s = \phi_s'N^{n_s}$. Since $\Gamma = \calO(N^{-1})$, $P=\calO(N)$, $\beta_{sz} = \calO(N)$ and $\beta_{sv} = \calO(N)$, we also define $\Gamma' = \Gamma N$, $P' = PN^{-1}$, $\beta_{sz}' = \beta_{sz}N^{-1}$ and $\beta_{sv}'N^{-1}$. Now, the previous Equations read (after slight simplifications):
\begin{align*}
   (2\phi_z'\overline{\beta}_{zx}  + 2\phi_v'\overline{\beta}_{vx})\frac{\gamma'}{\rho'}N^{n_z+n_\gamma-n_\rho} + \phi_s'P'\gamma'N^{n_s+n_\gamma+1} &\leq\frac14\\
   L_x^2\gamma'N^{n_\gamma} + 8\phi_v'\beta_{vz}\rho'N^{n_v+n_\rho} + \phi_s'\beta_{sz}'(\rho')^2 N^{2n_\rho+n_s+1}&\leq\phi_z'\frac{\mu_G}8\rho'N^{n_z+n_\rho}\\
   L_x^2\gamma'N^{n_\gamma} + 8\phi_z'L_x''(\gamma')^2N^{2n_\gamma+n_z} + \phi_s'\beta_{sv}'(\rho')^2N^{n_s+2n_\rho+1}&\leq\phi_v'\frac{\mu_G}{16}\rho'N^{n_v+n_\rho}\\
    5\phi_z'L_z'(\rho')^2N^{n_z+2n_\rho}+5\phi_v'L_v'(\rho')^2N^{2n_\rho+n_v}+L^hL_x'(\gamma')^2N^{n_\gamma} &\leq \phi_s\frac{\Gamma'}2N^{n_s-1}\enspace .
\end{align*}
In order to ensure that the exponents on $N$ are lower in the left-hand-side than those on the right-hand-side, we take $n_z = n_v = 0$, $n_\rho = n_\gamma = -\frac{2}{3}$ and $n_s =- \frac{1}{3}$. The Equations become
\begin{align*}
   (2\phi_z'\overline{\beta}_{zx}  + 2\phi_v'\overline{\beta}_{vx})\frac{\gamma'}{\rho'} + \phi_s'P'\gamma' &\leq\frac14\\
   L_x^2\gamma'N^{-2/3} + 8\phi_v'\beta_{vz}\rho'N^{-2/3} + \phi_s'\beta_{sz}'(\rho')^2 N^{-2/3}&\leq\phi_z'\frac{\mu_G}8\rho'N^{-2/3}\\
   L_x^2\gamma'N^{-2/3} + 8\phi_z'L_x''(\gamma')^2N^{-4/3} + \phi_s'\beta_{sv}'(\rho')^2N^{-2/3}&\leq\phi_v'\frac{\mu_G}{16}\rho'N^{-2/3}\\
    5\phi_z'L_z'(\rho')^2N^{-4/3}+5\phi_v'L_v'(\rho')^2N^{-4/3}+L^hL_x'(\gamma')^2N^{-4/3} &\leq \phi_s'\frac{\Gamma'}2N^{-4/3}\enspace.
\end{align*}
We can replace the penultimate equation by the stronger
$$
 L_x^2\gamma'N^{-2/3} + 8\phi_z'L_x''(\gamma')^2N^{-2/3} + \phi_s'\beta_{sv}'(\rho')^2N^{-2/3}\leq\phi_v'\frac{\mu_G}{16}\rho'N^{-2/3}
$$
so that we can simplify all the equations by dropping the dependencies in $N$:
\begin{align*}
   (2\phi_z'\overline{\beta}_{zx}  + 2\phi_v'\overline{\beta}_{vx})\frac{\gamma'}{\rho'} + \phi_s'P'\gamma' &\leq\frac14\\
   L_x^2\gamma' + 8\phi_v'\beta_{vz}\rho' + \phi_s'\beta_{sz}'(\rho')^2 &\leq\phi_z'\frac{\mu_G}8\rho'\\
   L_x^2\gamma' + 8\phi_z'L_x''(\gamma')^2 + \phi_s'\beta_{sv}'(\rho')^2&\leq\phi_v'\frac{\mu_G}{16}\rho'\\
    5\phi_z'L_z'(\rho')^2+5\phi_v'L_v'(\rho')^2+L^hL_x'(\gamma')^2 &\leq \phi_s'\frac{\Gamma'}2\enspace.
\end{align*}
}

\new{
Let us take $\phi'_s = 1$, $\phi_z' = \phi_z''\frac{\rho'}{\gamma'}$ and $\phi_v' = \phi_v''\frac{\rho'}{\gamma'}$ with $\phi_z'' = \frac1{32\overline{\beta}_{zx}}$ and $\phi_v'' = \min\left(\frac1{32\overline{\beta}_{vx}}, \phi_z''\frac{\mu_G}{128\beta_{vz}}\right)$. The equations become
\begin{align*}
   P'\gamma' &\leq\frac18\\
   L_x^2\gamma' + \beta_{sz}'(\rho')^2 &\leq\phi_z''\frac{\mu_G}{16}\frac{(\rho')^2}{\gamma'}\\
   L_x^2\gamma' + 8\phi_z''L_x''\gamma'\rho'+ \beta_{sv}'(\rho')^2&\leq\phi_v''\frac{\mu_G}{16}\frac{(\rho')^2}{\gamma'}\\
    5\phi_z''L_z'\frac{(\rho')^3}{\gamma'}+5\phi_v''L_v'\frac{(\rho')^3}{\gamma'}+L^hL_x'(\gamma')^2 &\leq \frac{\Gamma'}2\enspace.
\end{align*}
The condition $\gamma'\leq \frac1{8P'}$ ensures that the first equation is verified. With $\gamma'\leq\min\left(\sqrt{\frac{\phi_z''\mu_G}{32L_x^2}}\rho', \frac{\phi_z''\mu_G}{32\beta'_{sz}}\right)$, the second equations is verified. With $\gamma'\leq\min\left(\sqrt{\frac{\phi_v''\mu_G}{48L_x^2}}\rho', \frac{\phi_v''\mu_G}{48\beta'_{sv}}, \sqrt{\frac{\phi_v''\mu_G}{384\phi_z''L_x''\rho'}}\right)$, the third is verified. With $\gamma'\leq\sqrt{\frac{\Gamma'}{6L^hL_x'}}$, the last can be simplified:
$$
(5\phi_z''L_z'+5\phi_v''L_v')(\rho')^3\leq \frac{\Gamma'}3\gamma'\enspace.
$$
Let us write $\gamma' = \xi\rho'$. If we want that equation does no contradict the previous upper bound on $\gamma'$ involving $\rho'$ and the conditions of \cref{app:lemma:coupled_inequality_saba}, that is
\begin{align*}
    \gamma' &\leq \underbrace{\min\left(\sqrt{\frac{\phi_z''\mu_G}{32L_x^2}}, \sqrt{\frac{\phi_v''\mu_G}{48L_x^2}}, \sqrt{\frac{L_z'}{2L_x'\beta_{zx}}},\sqrt{\frac{L_v'}{2L_x'\beta_{vx}}}\right)}_{K_1}\rho'\\
        \gamma' &\leq \underbrace{\min\left(\sqrt{\frac{\mu_G}{64\beta_{zx}L_x''}},\sqrt{\frac{\mu_G}{128\beta_{vx}L_x''}},\sqrt{\frac{\beta_{vz}}{4L_x''\beta_{vx}}}\right)}_{K_2}\sqrt{\rho'}\\
    \gamma' &\leq \underbrace{\sqrt{\frac{\phi_v''\mu_G}{384\phi_z''L_x''}}}_{K_3}\frac1{\sqrt{\rho'}}\\
    \gamma'&\leq\underbrace{\min\left(\frac1{4L^h},\frac{L_x^2}{2L^hL_x''}, \sqrt{\frac{\Gamma'}{6L^hL_x'}},\frac1{8P'},\frac{\phi_z''\mu_G}{32\beta'_{sz}},\frac{\phi_v''\mu_G}{48\beta'_{sv}}\right)}_{K_4}\\
    \gamma' &\geq \underbrace{\frac{15(\phi_z''L_z'+\phi_v''L_v')}{\Gamma'}}_{K_5}\rho^3
\end{align*}
$\xi$ must verify
\begin{align*}
    \xi &\leq K_1\\
    \xi &\leq K_2(\rho')^{-\frac12}\\
    \xi &\leq K_3(\rho')^{-\frac32}\\
    \xi &\leq K_4(\rho')^{-1}\\
    \xi &\geq K_5(\rho')^2
\end{align*}
which is possible if $\rho'$ satisfies
\begin{align*}
    \rho'\leq\min\left(\sqrt{\frac{K_1}{K_5}},\left(\frac{K_2}{K_5}\right)^{-\frac32},\left(\frac{K_3}{K_5}\right)^{-\frac52},\left(\frac{K_4}{K_5}\right)^{-2}\right)\enspace.
\end{align*}
Let us take
\begin{align}
    \rho' = \min\left(\sqrt{\frac{K_1}{K_5}},\left(\frac{K_2}{K_5}\right)^{-\frac32},\left(\frac{K_3}{K_5}\right)^{-\frac52},\left(\frac{K_4}{K_5}\right)^{-2},\frac{\mu_G}{64L_z^2},\frac{\overline{\beta}_{zx}}{2\beta_{zx}},\frac{\mu_G}{128(L_v^2 + L_v'')},\frac{\beta_{vz}}{8(L_v^2+L_v'')},\frac{\overline{\beta}_{vx}}{2\beta_{vx}}\right)
\end{align}
and
\begin{align}
    \xi = \min(K_1, K_2(\rho')^{-\frac12}, K_3(\rho')^{-\frac32},K_4(\rho')^{-1})\enspace.
\end{align}
Finally, we have
$$
    \calL^{t+1}-\calL^t \leq -\frac\gamma2 g^t
$$
and therefore, summing and telescoping yields
\begin{equation*}
    \frac1T\sum_{t=1}^{T}g^t \leq \frac{\calL^1}{\gamma T} = \frac{\calL^0N^{\frac23}}T\enspace.
\end{equation*}
Since with respect to $N$ we have
$$
    \calL^0 = h^0 + \phi_z\delta^0_z + \phi_v\delta^0_v + \phi_s S^0 = \calO(N^{-1} + 1 + 1 + N^{-\frac13}) = \calO(1)\enspace,
$$
we end up with
\begin{equation*}
\boxed{
     \frac1T\sum_{t=1}^{T}\bbE[\|\nabla h(x^t)\|^2] = \calO\left(\frac{N^{\frac23}}T\right)\enspace.
}
\end{equation*}
}

\end{proof}
\subsection{Proof of \cref{th:cvg_saba_pl}}

We are now going to prove \cref{th:cvg_saba_pl} that we recall here:
\sabapl*

Here, we have
\new{
\begin{align*}
    \rho' = \min\left(\sqrt{\frac{K_1'}{K_5}},\left(\frac{K_2}{K_5}\right)^{\frac25},\left(\frac{K_3}{K_5}\right)^{\frac27},\left(\frac{K_4'}{K_5}\right)^{\frac13},\frac{\mu_G}{64L_z^2},\frac{\overline{\beta}_{zx}}{2\beta_{zx}},\frac{\mu_G}{128(L_v^2 + L_v'')},\frac{\beta_{vz}}{8(L_v^2+L_v'')},\frac{\overline{\beta}_{vx}}{2\beta_{vx}}\right)\enspace,
\end{align*}
}
and
\new{
$$
\xi = \min(K_1', K_2(\rho')^{-\frac12}, K_3(\rho')^{-\frac32},K_4'(\rho')^{-1})\enspace.
$$}
where \new{$P' = PN^{-1}$}, \new{$\Gamma' = \Gamma N$},
\new{
$$
    \phi_z'' = \frac1{32\overline{\beta}_{zx}}\enspace,
    \phi_v'' = \min\left(\frac1{32\overline{\beta}_{vx}}, \phi_z''\frac{\mu_G}{128\beta_{vz}}\right)\enspace,
$$
\begin{align*}
    K_1' &= \min\left(\frac{\mu_G}{64c'},\sqrt{\frac{\phi_z''\mu_G}{48L_x^2}}, \sqrt{\frac{\phi_v''\mu_G}{64L_x^2}}, \sqrt{\frac{L_z'}{2L_x'\beta_{zx}}},\sqrt{\frac{L_v'}{2L_x'\beta_{vx}}}\right)\enspace,\\
    &K_2 = \min\left(\sqrt{\frac{\mu_G}{64\beta_{zx}L_x''}},\sqrt{\frac{\mu_G}{128\beta_{vx}L_x''}},\sqrt{\frac{\beta_{vz}}{4L_x''\beta_{vx}}}\right)\enspace,\\
    K_3 = \sqrt{\frac{\phi_v''\mu_G}{512\phi_z''L_x''}}\enspace,&\quad
    K_4' =\min\left(\frac{\Gamma'}{6c'},\frac1{4L^h},\frac{L_x^2}{2L^hL_x''},\sqrt{\frac{\Gamma'}{6L^hL_x'}},\frac1{18P'},\frac{\phi_z''\mu_G}{48\beta'_{sz}},\frac{\phi_v''\mu_G}{64\beta'_{sv}} \right)
\end{align*}
and
$$
    K_5 = \frac{20(\phi_z''L_z'+\phi_v''L_v')}{\Gamma'}\enspace.
$$
}
\begin{proof}

For simplicity, we assume that $h^* = 0$ and so for any $x\in\bbR^d$ the PL inequality reads:
\begin{equation}\label{eq:pl_inequality}
   \frac12\|\nabla h(x)\|^2\geq \mu_hh(x) \enspace .
\end{equation}

Then, eq.~\eqref{app:eq:decrease_h_saba} gives
$$
h^{t+1}\leq \left(1 -\frac{\gamma\mu_h}2\right)h^t\new{ - \frac\gamma4\bbE[\|D_x(z^t,v^t,x^t)\|^2]} + \gamma L_x^2(\delta_z^t + \delta_v^t) + L^hL_x' \gamma^2 S^t\enspace.
$$

We take $\calL^t$ the Lyapunov function given in \cref{eq:lyap_def}. We find
\new{
\begin{align*}
    \mathcal{L}^{t+1} -\mathcal{L}^t &\leq -\gamma\mu_h h^t - \left(\frac\gamma4-2\phi_z\overline{\beta}_{zx}\frac{\gamma^2}\rho -2\phi_v\overline{\beta}_{vx}\frac{\gamma^2}\rho - \phi_sP\gamma^2 \right)\bbE[\|D_x(z^t,v^t,x^t)\|^2]\\
    &\qquad - \left(\phi_z\frac{\mu_G}8\rho - L_x^2\gamma - 8\phi_v\beta_{vz}\rho - \phi_s\beta_{sz}\rho^2\right)\delta^t_z\\
    &\qquad - \left(\phi_v\frac{\mu_G}{16}\rho - L_x^2\gamma - 2\phi_zL_x''\gamma^2 - \phi_s\beta_{sv}\rho^2\right)\delta^t_v\\
    &\qquad -\left(\phi_s\frac\Gamma2 - 5\phi_zL_z'\rho^2-5\phi_vL_v'\rho^2-L^hL_x'\gamma^2\right)S^t\enspace.
\end{align*}
}

We now try to find linear convergence, hence we add to this $c\mathcal{L}^t$ to get

\new{
\begin{align*}
    \mathcal{L}^{t+1} - (1-c)\mathcal{L}^t &\leq -(\gamma\mu_h - c) h^t - \left(\frac\gamma4-2\phi_z\overline{\beta}_{zx}\frac{\gamma^2}\rho -2\phi_v\overline{\beta}_{vx}\frac{\gamma^2}\rho - \phi_sP\gamma^2 - c\right)\bbE[\|D_x(z^t,v^t,x^t)\|^2]\\
    &\qquad - \left(\phi_z\frac{\mu_G}8\rho - L_x^2\gamma - 8\phi_v\beta_{vz}\rho - \phi_s\beta_{sz}\rho^2 - c\phi_z\right)\delta^t_z\\
    &\qquad - \left(\phi_v\frac{\mu_G}{16}\rho - L_x^2\gamma - 2\phi_zL_x''\gamma^2 - \phi_s\beta_{sv}\rho^2 - c\phi_v\right)\delta^t_v\\
    &\qquad -\left(\phi_s\frac\Gamma2 - 5\phi_zL_z'\rho^2-5\phi_vL_v'\rho^2-L^hL_x'\gamma^2 - c\phi_S\right)S^t\enspace.
\end{align*}
}

Hence, the set of inequations for  decrease becomes
\new{
\begin{align*}
    c &\leq \gamma\mu_h\\
   2\phi_z\overline{\beta}_{zx}\frac{\gamma^2}\rho  + 2\phi_v\overline{\beta}_{vx}\frac{\gamma^2}\rho + \phi_sP\gamma^2 + c &\leq\frac\gamma4\\
   L_x^2\gamma + 8\phi_v\beta_{vz}\rho + \phi_s\beta_{sz}\rho^2 + \phi_zc&\leq\phi_z\frac{\mu_G}8\rho\\
   L_x^2\gamma + 8\phi_zL_x''\gamma^2 + \phi_s\beta_{sv}\rho^2 + \phi_v c&\leq\phi_v\frac{\mu_G}{16}\rho\\
    5\phi_zL_z'\rho^2+5\phi_vL_v'\rho^2+L^hL_x'\gamma^2 +\phi_s c &\leq \phi_s\frac\Gamma2\enspace.
\end{align*}
}

We see that it is more convenient to write $c = \gamma c'$. \new{As previously, we write $\gamma = \gamma'N^{n_\gamma}$, $\rho = \rho' N^{n_\rho}$, $\phi_z = \phi_z'N^{n_z}$, $\phi_v = \phi_v'N^{n_v}$, $\phi_s = \phi_s'N^{n_s}$, $P = P'N$, $\Gamma = \Gamma'N^{-1}$, $\beta_{sx} = \beta_{sx}'N$ and $\beta_{sv} = \beta_{sv}'N$. The equations read:
\new{
\begin{align*}
    c' &\leq \mu_h\\
   2\phi_z'\overline{\beta}_{zx}\frac{\gamma'}{\rho'}N^{n_z+n_\gamma-n_\rho}  + 2\phi_v'\overline{\beta}_{vx}\frac{\gamma'}{\rho'}N^{n_v+n_\gamma-n_\rho} + \phi_s'P'\gamma'N^{n_s+1+n_\gamma} + c' &\leq\frac14\\
   L_x^2\gamma'N^{n_\gamma} + 8\phi_v'\beta_{vz}\rho'N^{n_v+n_\rho} + \phi_s'\beta_{sz}'(\rho')^2N^{n_s+2n_\rho+1} + \phi_z' c'\gamma' N^{n_z+n_\gamma}&\leq\phi_z'\frac{\mu_G}8\rho'N^{n_\rho+n_z}\\
   L_x^2\gamma'N^{n_\gamma} + 8\phi_z'L_x''(\gamma')^2N^{n_z+2n_\gamma} + \phi_s'\beta_{sv}'(\rho')^2N^{n_s+1+2n_\rho} + \phi_v' c'\gamma'N^{n_v+n_\gamma}&\leq\phi_v'\frac{\mu_G}{16}\rho'N^{n_v+n_\rho}\\
    5\phi_z'L_z'(\rho')^2N^{n_z+2n_\rho}+5\phi_v'L_v'(\rho')^2N^{n_v + 2n_\rho}+L^hL_x'(\gamma')^2N^{2n_\gamma} +\phi_s' c'\gamma' N^{n_s+n_\gamma} &\leq \phi_s'\frac{\Gamma'}2N^{n_s - 1}\enspace.
\end{align*}
}
In order to ensure that the exponents on $N$ are lower in the left-hand-side than those on the right-hand-side, we take $n_z = n_v = 0$, $n_\rho = -\frac{2}{3}$, $n_\gamma = -1$ and $n_s =- \frac{1}{3}$. The Equations become
\begin{align*}
    c' &\leq \mu_h\\
   2\phi_z'\overline{\beta}_{zx}\frac{\gamma'}{\rho'}N^{-\frac13}  + 2\phi_v'\overline{\beta}_{vx}\frac{\gamma'}{\rho'}N^{-\frac13} + \phi_s'P'\gamma'N^{-\frac13} + c' &\leq\frac14\\
   L_x^2\gamma'N^{-1} + 8\phi_v'\beta_{vz}\rho'N^{-\frac23} + \phi_s'\beta_{sz}'(\rho')^2N^{-\frac23} + \phi_z' c'\gamma' N^{-1}&\leq\phi_z'\frac{\mu_G}8\rho'N^{-\frac23}\\
   L_x^2\gamma'N^{-1} + 8\phi_z'L_x''(\gamma')^2N^{-2} + \phi_s'\beta_{sv}'(\rho')^2N^{-\frac23} + \phi_v' c'\gamma'N^{-1}&\leq\phi_v'\frac{\mu_G}{16}\rho'N^{-\frac23}\\
    5\phi_z'L_z'(\rho')^2N^{-\frac43}+5\phi_v'L_v'(\rho')^2N^{-2}+L^hL_x'(\gamma')^2N^{-2} +\phi_s' c'\gamma' N^{-\frac43} &\leq \phi_s'\frac{\Gamma'}2N^{-\frac43}\enspace.
\end{align*}
}

\new{
Now we have to find $\rho'$, $\gamma'$, $\phi_z'$, $\phi_v'$ and $\phi_s'$ that verifies the following conditions (which are a bit stronger than thoose in the previous Equations):
\begin{align*}
    c' &\leq \mu_h\\
   2\phi_z'\overline{\beta}_{zx}\frac{\gamma'}{\rho'} + 2\phi_v'\overline{\beta}_{vx}\frac{\gamma'}{\rho'} + \phi_s'P'\gamma' + c' &\leq\frac14\\
   L_x^2\gamma'+ 8\phi_v'\beta_{vz}\rho' + \phi_s'\beta_{sz}'(\rho')^2 + \phi_z' c'\gamma' &\leq\phi_z'\frac{\mu_G}8\rho'\\
   L_x^2\gamma' + 8\phi_z'L_x''(\gamma')^2 + \phi_s'\beta_{sv}'(\rho')^2 + \phi_v' c'\gamma'&\leq\phi_v'\frac{\mu_G}{16}\rho'\\
    5\phi_z'L_z'(\rho')^2+5\phi_v'L_v'(\rho')^2+L^hL_x'(\gamma')^2 + \phi_s' c'\gamma'  &\leq \phi_s'\frac{\Gamma'}2\enspace.
\end{align*}
As previously, we take $\phi'_s = 1$ and we denote $\phi'_z = \phi_z''\frac{\rho'}{\gamma'}$ with $\phi_z'' = \frac1{32\overline{\beta}_{zx}}$ and $\phi'_z = \phi_z''\frac{\rho'}{\gamma'}$ with $\phi_v'' = \min\left(\frac1{32\overline{\beta}_{vx}}, \phi_z''\frac{\mu_G}{128\beta_{vz}}\right)$, the equations become
\begin{align*}
    c' &\leq \mu_h\\
    P'\gamma' + c' &\leq\frac18\\
   L_x^2(\gamma')^2+ \beta_{sz}'(\rho')^2\gamma' + \phi_z'' c'\rho'\gamma' &\leq\phi_z''\frac{\mu_G}{16}(\rho')^2\\
   L_x^2(\gamma')^2 + 8\phi_z''L_x''\rho'(\gamma')^2 + \beta_{sv}'(\rho')^2\gamma' + \phi_v'' c'\rho'\gamma'&\leq\phi_v''\frac{\mu_G}{16}(\rho')^2\\
    5\phi_z''L_z'(\rho')^3+5\phi_v''L_v'(\rho')^3+L^hL_x'(\gamma')^3 + c'(\gamma')^2  &\leq \frac{\Gamma'}2\gamma'\enspace.
\end{align*}
}

\new{
Since $c'\leq \frac1{16}$ and $\gamma'\leq\frac1{16P'}$, the second equation is verified. With $\gamma'\leq \min\left(\sqrt{\frac{\phi_z''\mu_G}{48L_x^2}}\rho', \frac{\phi_z''\mu_G}{48\beta_{sv}}\right)$ and $c'\leq \frac{\mu_G\rho'}{48\gamma'}$ the third is verified. The conditions $\gamma'\leq\min\left(\sqrt{\frac{\phi_v''\mu_G}{64L_x^2}}\rho',\sqrt{\frac{\phi_v''\mu_G}{512\phi_z''L_x''\rho'}},\frac{\phi_v''\mu_G}{64\beta_{sv}'}\right)$ and $c'\leq\frac{\mu_G\rho'}{64\gamma'}$ ensure that the forth is verified. With $\gamma'\leq\sqrt{\frac{\Gamma'}{8L^hL_x'}}$ and $c'\leq\frac{\Gamma'}{8\gamma'}$, the fifth is simplified in
$$5\phi_z''L_z'(\rho')^3+5\phi_v''L_v'(\rho')^3 \leq \frac{\Gamma'}4\gamma'\enspace .$$
}

\new{
As in the proof of \cref{th:cvg_saba_smooth}, let us denote $\gamma' = \xi\rho'$. To verify this equation and the previous bounds on $\gamma'$ and $c'$, we need
\begin{align*}
    \gamma'&\leq\underbrace{\min\left(\sqrt{\frac{\phi_z''\mu_G}{48L_x^2}},\sqrt{\frac{\phi_v''\mu_G}{64L_x^2}},\sqrt{\frac{L_z'}{2L_x'\beta_{zx}}},\sqrt{\frac{L_v'}{2L_x'\beta_{zx}}}\right)}_{K_1}\rho'\enspace,\\
    \gamma'&\leq \underbrace{\min\left(\sqrt{\frac{\mu_G}{64\beta_{zx}L_x''}},\sqrt{\frac{\mu_G}{128\beta_{vx}L_x''}},\sqrt{\frac{\beta_{vz}}{4L_x''\beta_{vx}}}\right)}_{K_2}\sqrt{\rho'}\enspace,\\
     \gamma'&\leq \underbrace{\sqrt{\frac{\phi_v''\mu_G}{512\phi_z''L_x''}}}_{K_3}\frac1{\sqrt{\rho'}}\enspace,\\
     \gamma'&\leq \underbrace{\min\left(\frac1{4L^h},\frac{L_x^2}{2L^hL_x''},\frac{\phi_z''\mu_G}{48\beta_{sv}},\frac{\phi_v''\mu_G}{64\beta_{sv}'},\frac1{16P'},\sqrt{\frac{\Gamma'}{8L^hL_x'}}\right)}_{K_4}\\
     \gamma'&\geq \underbrace{\frac{20(\phi_z''L_z'+\phi_v''L_v')}{20}}_{K_5}(\rho')^3 \enspace,\\
     c'&\leq \underbrace{\min\left(\mu_h, \frac1{16},\frac1{16P'}\right)}_{K_6}\enspace,\\
     c'&\leq \underbrace{\frac{\mu_G}{64}}_{K_7}\frac1\xi\enspace,\\
     c'&\leq\underbrace{\frac{\Gamma'}{8}}_{K_8}\frac1{\gamma'}\enspace.
\end{align*}
So, $\xi$, $\rho'$ and $c'$ must verify
\begin{align*}
    \xi&\leq \underbrace{\min\left(K_1,\frac{K_7}{c'}\right)}_{K_1'}\enspace,\\
    \xi&\leq K_2(\rho')^{-\frac12}\enspace,\\
    \xi&\leq K_3(\rho')^{-\frac32}\enspace,\\
    \xi&\leq \underbrace{\min\left(K_4,\frac{K_8}{c'}\right)}_{K_4'}(\rho')^{-1}\\
    \xi&\geq K_5(\rho')^2 \enspace,\\
     c'&\leq \underbrace{\min\left(\mu_h, \frac1{16},\frac1{16P'}\right)}_{K_6}\enspace,\\
\end{align*}
which is possible if
\begin{align*}
    \rho'\leq\min\left(\sqrt{\frac{K_1'}{K_5}},\left(\frac{K_2}{K_5}\right)^{\frac25},\left(\frac{K_3}{K_5}\right)^{\frac27},\left(\frac{K_4'}{K_5}\right)^{\frac13}\right)\enspace.
\end{align*}
So let us take $c' = \min\left(\mu_h, \frac1{16},\frac1{16P'}\right) = \min\left(\mu_h, \frac1{16P'}\right)$,
$$
\rho' = \min\left(\sqrt{\frac{K_1'}{K_5}},\left(\frac{K_2}{K_5}\right)^{\frac25},\left(\frac{K_3}{K_5}\right)^{\frac27},\left(\frac{K_4'}{K_5}\right)^{\frac13},\frac{\mu_G}{64L_z^2},\frac{\overline{\beta}_{zx}}{2\beta_{zx}},\frac{\mu_G}{128(L_v^2 + L_v'')},\frac{\beta_{vz}}{8(L_v^2+L_v'')},\frac{\overline{\beta}_{vx}}{2\beta_{vx}}\right)
$$
and
$$
\xi = \min(K_1, K_2(\rho')^{-\frac12}, K_3(\rho')^{-\frac32},K_4(\rho')^{-1})\enspace.
$$
We have
$$
\calL^{t+1}\leq(1-c)\calL^t
$$
therefore, unrolling yields
$$
    \boxed{h^t - h^*\leq \calL^t\leq(1-c'\gamma)^t\calL^0.}
$$
}
\end{proof}

\section{Convergence rates with weaker regularity assumptions}
To get our rates, we need stronger assumptions than in the stochastic bilevel optimization literature~\cite{Ghadimi2018, Hong2021, Ji2021a, Arbel2022}. In this section, we shortly present the convergence rates we can expect if we replace Assumptions \ref{ass:1} and \ref{ass:2} by Assumptions \ref{ass:1b} and \ref{ass:2b}.

\begin{assumption}\label{ass:1b}
The function $F$ is differentiable. The gradient $\nabla F$ is Lipschitz continuous in $(z,x)$ with Lipschitz constants $L^F_1$.
\end{assumption}

\begin{assumption} \label{ass:2b}
The function $G$ is twice continuously differentiable on $\bbR^p\times\bbR^d$. For any $x\in\bbR^d$, $G(\,\cdot\,,x)$ is $\mu_G$-strongly convex. The derivatives $\nabla G$ are $\nabla^2 G$ are Lipschitz continuous in $(z,x)$ with respective Lipschitz constants $L^G_1$ and $L^G_2$.
\end{assumption}

With these assumptions, we are not ensured that $v^*$ is smooth, and so the descent lemmas take the form of \cref{lemma:old_coupled_inequalities}.

\begin{lemma}\label{lemma:old_coupled_inequalities}
Assume that $\rho\leq\frac{2}{\mu_G}$. We have:
\begin{align*}
    \delta_z^{t+1}&\leq \left(1-\frac{\rho\mu_G}2\right)\delta_z^t + 2\rho^2V_z^t + 4\frac{L_*^2}{\mu_G} \frac{\gamma^2}{\rho}V_x^t \\
    \delta_v^{t+1}&\leq \left(1-\frac{\rho\mu_G}4\right)\delta_v^t +\rho \beta_{vz}\delta_z^t + 2\rho^2V_v^t + 8\frac{L_*^2}{\mu_G} \frac{\gamma^2}{\rho}V_x^t
\end{align*}
where $L_*$ is the maximum between the Lipschitz constants of $z^*$ and $v^*$ (see \cref{lemma:smoothness_star}) and $\beta_{vz} = \frac1{\mu_G^3}(L^F\mu_G+L^G_{2})^2$.
\end{lemma}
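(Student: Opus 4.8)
The plan is to mirror the proof of \cref{lemma:coupled_inequalities} almost verbatim, but to drop the sharp linear-approximation step — which required the \emph{smoothness} of $z^*$ and $v^*$, unavailable under \cref{ass:1b,ass:2b} — and instead absorb the displacement of the optima, $z^*(x^{t+1})-z^*(x^t)$ and $v^*(x^{t+1})-v^*(x^t)$, by a crude Young (Peter--Paul) inequality. This is precisely why the coefficient of $V_x^t$ degrades from $\gamma^2$ (with the $\|D_x\|^2$ part split off) to order $\gamma^2/\rho$, and why this weaker lemma forces decreasing step sizes downstream. Note that \cref{lemma:smoothness_star} (Lipschitz continuity of $z^*$ and $v^*$) still holds, since its proof only uses Lipschitzness of $\nabla^2_{11}G$ and $\nabla_1 F$, strong convexity, and boundedness of $\nabla_1F(z^*(\cdot),\cdot)$, all of which survive under the weaker assumptions.

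For the $\delta_z$ bound, first I would write $z^{t+1}-z^*(x^{t+1}) = \big(z^{t+1}-z^*(x^t)\big)+\big(z^*(x^t)-z^*(x^{t+1})\big)$ and apply $\|a+b\|^2\le(1+\eta)\|a\|^2+(1+\eta^{-1})\|b\|^2$ with $\eta=\rho\mu_G/2$. The first square is controlled exactly as in \cref{lemma:coupled_inequalities}: unbiasedness of $D_z^t$ and $\mu_G$-strong convexity of $G(\cdot,x^t)$ give $\bbE_t[\|z^{t+1}-z^*(x^t)\|^2]\le(1-\rho\mu_G)\|z^t-z^*(x^t)\|^2+\rho^2\bbE_t[\|D_z^t\|^2]$. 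The second square is controlled via \cref{lemma:smoothness_star}: $\|z^*(x^t)-z^*(x^{t+1})\|^2\le L_*^2\gamma^2\|D_x^t\|^2$. Taking $\bbE_t$ and then the total expectation yields $\delta_z^{t+1}\le(1+\eta)(1-\rho\mu_G)\delta_z^t+(1+\eta)\rho^2 V_z^t+(1+\eta^{-1})L_*^2\gamma^2 V_x^t$, and a short computation gives $(1+\eta)(1-\rho\mu_G)\le 1-\rho\mu_G/2$, while the single hypothesis $\rho\le 2/\mu_G$ turns $(1+\eta)\rho^2\le 2\rho^2$ and $(1+\eta^{-1})L_*^2\gamma^2=(1+\tfrac{2}{\rho\mu_G})L_*^2\gamma^2\le 4\tfrac{L_*^2}{\mu_G}\tfrac{\gamma^2}{\rho}$, which is the announced inequality.

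For the $\delta_v$ bound the structure is identical with $(z,z^*)$ replaced by $(v,v^*)$; the only extra ingredient is the ``strong monotonicity up to a $z$-error'' of $D_v$, which I would lift directly from the proof of \cref{lemma:coupled_inequalities}: since $D_v(z^*(x^t),v^*(x^t),x^t)=0$, $\nabla^2_{11}G(z^t,x^t)\succeq\mu_G I$, the maps $\nabla^2_{11}G$ and $\nabla_1F$ are Lipschitz, and $\|v^*(x^t)\|\le C_F/\mu_G$ (from \cref{ass:3} and strong convexity), one has $\langle D_v(z^t,v^t,x^t),v^t-v^*(x^t)\rangle\ge\mu_G\|v^t-v^*(x^t)\|^2-\omega\|z^t-z^*(x^t)\|\,\|v^t-v^*(x^t)\|$ with $\omega=L^F_1+L^G_2C_F/\mu_G$. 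Young on the cross term (parameter $\mu_G$) plus unbiasedness of $D_v^t$ gives $\bbE_t[\|v^{t+1}-v^*(x^t)\|^2]\le(1-\rho\mu_G)\|v^t-v^*(x^t)\|^2+\rho\tfrac{\omega^2}{\mu_G}\|z^t-z^*(x^t)\|^2+\rho^2\bbE_t[\|D_v^t\|^2]$; then, using $\|v^*(x^t)-v^*(x^{t+1})\|^2\le L_*^2\gamma^2\|D_x^t\|^2$, the same Peter--Paul split with $\eta=\rho\mu_G/2$, and $\rho\le 2/\mu_G$, the coefficients collapse to $1-\rho\mu_G/4$ on $\delta_v^t$, $\rho\beta_{vz}$ on $\delta_z^t$ (absorbing $(1+\eta)\omega^2/\mu_G$ into a constant $\beta_{vz}$ of the stated form), $2\rho^2$ on $V_v^t$, and $8\tfrac{L_*^2}{\mu_G}\tfrac{\gamma^2}{\rho}$ on $V_x^t$.

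There is no genuine obstacle here: the argument is the proof of \cref{lemma:coupled_inequalities} with its single delicate step (the linear expansion of $z^*(x^{t+1})-z^*(x^t)$ around $\diff z^*(x^t)$, and the analogue for $v^*$) deleted and replaced by a one-line Young estimate. The only point that needs care is coordinating the Young parameters — the $\eta$ in the Peter--Paul split and the parameter in the $D_v$-monotonicity estimate — so that every resulting constant telescopes into the claimed one under the lone hypothesis $\rho\le 2/\mu_G$; the monotonicity estimate for $D_v$ is the sole place where the bilevel structure and \cref{ass:3} actually enter the computation.
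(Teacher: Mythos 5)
Your proposal is correct and follows essentially the same route as the paper's own proof: replace the exact square expansion (and the linear-approximation step that needed smoothness of $z^*$ and $v^*$) by a Peter--Paul inequality with parameter of order $\rho\mu_G$, control the first term via unbiasedness plus strong convexity (resp.\ the strong-monotonicity bound on $\langle D_v, v-v^*\rangle$), and the second via the Lipschitz continuity of $z^*$ and $v^*$ from \cref{lemma:smoothness_star}, with $\rho\le 2/\mu_G$ ensuring the Young parameter is at most $1$ so that the coefficients collapse to the stated $2\rho^2$ and $O(\gamma^2/\rho)$ constants. The only cosmetic difference is in how the Young parameters are coordinated for the $\delta_v$ inequality, but your choice yields constants at least as good as those claimed.
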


\begin{proof}
\textbf{Inequality for $\delta_z$.\quad}

Instead of expanding the square as done in the proof of \cref{lemma:coupled_inequalities} in \cref{app:eq:expansion_delta_z}, we use Young's inequality for some $a>0$
\begin{align}
    \delta^{t+1}_z &\leq (1+a) \bbE[\|z^{t+1} - z^*(x^t)\|^2] + (1+a^{-1})\bbE[\|z^*(x^{t+1}) - z^*(x^t)\|^2]\enspace.
\end{align}

Treating $\bbE[\|z^{t+1} - z^*(x^t)\|^2]$ and $\bbE[\|z^*(x^{t+1}) - z^*(x^t)\|^2]$ as done in the proof of \cref{lemma:coupled_inequalities} leads to
\begin{equation}
    \delta^{t+1}_z \leq (1+a)\left[(1 - \rho\mu_G)\delta_z^t + \rho^2V_z^t\right] + (1+a^{-1})L^2_*\gamma^2V_x^t
\end{equation}

In order to keep a decrease in $\delta_z$, we might want to use $a = \frac12\rho\mu_G$, which gives the bound
\begin{equation}
    \label{eq:bound_delta_z}
\boxed{
    \delta^{t+1}_z \leq \left(1 - \frac{\rho\mu_G}{2}\right)\delta_z^t + 2\rho^2V_z^t +\beta_{zx} \frac{\gamma^2}{\rho}V_x^t
    }
\end{equation}
with $\beta_{zx} = 4\frac{L_*^2}{\mu_G}$.
Indeed, this gives $(1 + \frac12\rho\mu_G)(1-\rho\mu_G) \leq 1 - \frac12\rho\mu_G$.
We have $a \leq 1$ since $\rho\leq\frac{2}{\mu_G}$, so $(1+a)\rho^2\leq 2\rho^2$. Finally, we also have $1+ a^{-1}\leq 2a^{-1} = \frac4{\rho\mu_G}$.

\textbf{Inequality for $\delta_v$.\quad} As for $\delta_z$, the difference with the proof of \cref{lemma:coupled_inequalities} is that we use we use Young's inequality for some $b>0$ to get
\begin{align}
    \delta^{t+1}_v &\leq (1+b) \bbE[\|v^{t+1} - v^*(x^t)\|^2] + (1+b^{-1})\bbE[\|v^*(x^{t+1}) - v^*(x^t)\|^2]\enspace.
\end{align}
The remaining part of the proof is similar to the proof of \cref{lemma:coupled_inequalities}.
\end{proof}

The main difference with \cref{lemma:coupled_inequalities} is that we have $O(\frac{\gamma^2}{\rho})$ in factor of $V_x^t$ instead of $O(\gamma^2)$. As a consequence, we need that the ratio $\frac\gamma\rho$ goes to zero to get convergence, as in \cite{Hong2021}. This prevent us in getting rates that match rates of single level algorithms.

Hence, for SOBA, we have to choose $\gamma = O(T^{-\frac35})$ and $\rho = O(T^{-\frac25})$ and we end up with a convergence rate in $O(T^{-\frac25})$. For SABA, we get a $O((n+m)\epsilon^{-1})$ sample complexity, which is actually the sample complexity of SOBA used with full batch estimated directions.

\end{document}